\documentclass{article}

\usepackage{microtype}
\usepackage{graphicx}
\usepackage{subfigure}
\usepackage{caption}
\usepackage{booktabs} 

\usepackage[hypertexnames=false]{hyperref}
\usepackage{xurl} 

\usepackage[accepted]{icml2026}

\usepackage{amsmath}
\usepackage{amssymb}
\usepackage{bbm}
\usepackage{mathtools}
\usepackage{amsthm}
\usepackage{thmtools}
\usepackage{xcolor}
\usepackage{nicefrac}
\usepackage{xspace}
\usepackage{dsfont}
\usepackage[table]{xcolor}  

\newcommand{\algcoml}[1]{\textit{\textcolor{lightgray}{\# #1}}}
\makeatletter
\newcommand{\alglabel}[1]{\def\@currentlabel{\arabic{ALC@line}}\label{#1}}
\makeatother

\usepackage[capitalize,noabbrev]{cleveref}

\theoremstyle{plain}
\newtheorem{theorem}{Theorem}[section]
\newtheorem{proposition}[theorem]{Proposition}
\newtheorem{lemma}[theorem]{Lemma}
\newtheorem{corollary}[theorem]{Corollary}
\theoremstyle{definition}

\theoremstyle{remark}

\usepackage[textsize=tiny]{todonotes}

\newcommand{\otpfm}{\texttt{OTP-FM}\xspace}

\newcommand{\dx}{\mathrm{d}x}
\newcommand{\dt}{\mathrm{d}t}
\newcommand{\dd}[1]{\mathrm{d}#1}
\newcommand{\md}{\ensuremath{\mathcal D}\xspace}
\newcommand{\batch}{\ensuremath{\mathcal B}\xspace}

\newcommand{\flowmap}{\ensuremath{\Psi_{t_1, t_2}}\xspace}
\newcommand{\flowmaptheta}{\ensuremath{\Psi^\theta_{t_1, t_2}}\xspace}
\newcommand{\umap}{\ensuremath{v_{t_1, t_2}}\xspace}
\newcommand{\umappar}{\ensuremath{v^\theta_{t_1, t_2}}\xspace}
\newcommand{\Upar}{\ensuremath{V^\theta_{t_1, t_2}}\xspace}
\newcommand{\umapparnot}{\ensuremath{v^\theta}\xspace}
\newcommand{\Xtotp}{\ensuremath{X_t}\xspace}
\newcommand{\vtotp}{\ensuremath{u_t}\xspace}
\newcommand{\utgt}{\ensuremath{v_\mathrm{tgt}}\xspace}
\newcommand{\upar}{\ensuremath{v^\theta}\xspace}

\newcommand{\vpar}{\ensuremath{u^\theta}\xspace}
\newcommand{\vpart}{\ensuremath{u^\theta_t}\xspace}
\newcommand{\vt}{\ensuremath{u_t}\xspace}
\newcommand{\vtone}{\ensuremath{u_{t_1}}\xspace}
\newcommand{\vttwo}{\ensuremath{u_{t_2}}\xspace}
\newcommand{\sg}{\ensuremath{\mathrm{sg}}\xspace}

\newcommand{\piall}{\ensuremath{\pi_\mathrm{all}}\xspace}
\newcommand{\piallot}{\ensuremath{\pi^\mathrm{OT}_\mathrm{all}}\xspace}
\newcommand{\piallind}{\ensuremath{\pi^\mathrm{ind}_\mathrm{all}}\xspace}
\newcommand{\alphai}{\ensuremath{\alpha(i)}\xspace}
\newcommand{\lotpfm}{\ensuremath{\mathcal{L}_\mathrm{OTP-FM}}\xspace}

\newcommand{\mutk}{\ensuremath{\mu_{t_k}}\xspace}
\newcommand{\rhot}{\ensuremath{\rho_t}\xspace}
\newcommand{\rhotk}{\ensuremath{\rho_{t_k}}\xspace}
\newcommand{\lkt}{\ensuremath{\lambda_k(t)}\xspace}

\newcommand{\wassone}{\ensuremath{\mathcal{W}_1}\xspace}
\newcommand{\wasstwo}{\ensuremath{\mathcal{W}_2}\xspace}
\newcommand{\wass}{\ensuremath{\mathcal{W}^2_2}\xspace}
\newcommand{\wasse}{\ensuremath{\mathcal{W}_2^\varepsilon}\xspace}
\newcommand{\wassinf}{\ensuremath{\mathcal{W}_2^\infty}\xspace}
\newcommand{\wasstwoinf}{\ensuremath{\mathcal{W}_2^{2/\infty}}\xspace}
\newcommand{\dW}{\ensuremath{\mathcal D_{\wass}}\xspace}
\newcommand{\dKL}{\ensuremath{\mathcal D_\mathrm{KL}}\xspace}
\newcommand{\dmmd}{\ensuremath{\md_{\mathrm{MMD}^2}}\xspace}

\newcommand{\Xtk}{\ensuremath{X_{t_k}}\xspace}
\newcommand{\Xtkfp}{\ensuremath{X_{t_k}^{\mathrm{FP}}}\xspace}

\newcommand{\dptk}{\ensuremath{\mathcal D^k_t}\xspace}
\newcommand{\gkxt}{\ensuremath{g_k(x, t)}\xspace}
\newcommand{\gradg}{\ensuremath{\nabla g}\xspace}
\newcommand{\ggk}{\ensuremath{\nabla g_k}\xspace}
\newcommand{\ggkxt}{\ensuremath{\nabla g_k(x, t)}\xspace}
\newcommand{\ggkXt}{\ensuremath{\nabla g_k(X_t, t)}\xspace}
\newcommand{\ggkXtk}{\ensuremath{\nabla g_k(X_{t_k}, t_k)}\xspace}
\newcommand{\ggkXtkb}{\ensuremath{\nabla g_k(X_{t_k}, t_k, \batch)}\xspace}

\newcommand{\ubase}{\ensuremath{u^\mathrm{base}}\xspace}

\newcommand{\ucorrkt}{\ensuremath{u^\mathrm{corr}_{k,t}}\xspace}
\newcommand{\ucorrktone}{\ensuremath{u^\mathrm{corr}_{k,t_1}}\xspace}
\newcommand{\xbase}{\ensuremath{X^\mathrm{base}}\xspace}
\newcommand{\xbaset}{\ensuremath{X_t^\mathrm{base}}\xspace}
\newcommand{\xbaseti}{\ensuremath{X_{t_i}^\mathrm{base}}\xspace}
\newcommand{\xcorrkt}{\ensuremath{X^\mathrm{corr}_{k, t}}\xspace}
\newcommand{\xcorrktone}{\ensuremath{X^\mathrm{corr}_{k, t_1}}\xspace}
\newcommand{\uparema}{\ensuremath{u^\mathrm{EMA}_\theta}\xspace}
\newcommand{\identity}{\ensuremath{\mathbbm{1}}\xspace}
\newcommand{\indicator}{\ensuremath{\mathbf{1}}\xspace}
\newcommand{\ptimes}{\ensuremath{p_{t_1, t_2}}\xspace}

\NewDocumentCommand{\Int}{o m}{%
    \ensuremath{%
        \mathcal{I}%
        \IfValueT{#1}{^{(#1)}}%
        \!\left[#2\right]%
    }%
}

\newcommand{\mrhot}{\ensuremath{m_{\rho_t}}\xspace}
\newcommand{\srhot}{\ensuremath{\sigma_{\rho_t}}\xspace}
\newcommand{\dmrhot}{\ensuremath{\dot{m}_{\rho_t}}\xspace}
\newcommand{\dsrhot}{\ensuremath{\dot{\sigma}_{\rho_t}}\xspace}
\newcommand{\ddmrhot}{\ensuremath{\ddot{m}_{\rho_t}}\xspace}
\newcommand{\ddsrhot}{\ensuremath{\ddot{\sigma}_{\rho_t}}\xspace}

\newcommand{\mmutk}{\ensuremath{m_{\mu_{t_k}}}\xspace}

\newcommand{\smutk}{\ensuremath{\sigma_{\mu_{t_k}}}\xspace}

\newcommand{\taurho}{\ensuremath{\tau_\rho}\xspace}
\newcommand{\taumu}{\ensuremath{\tau_\mu}\xspace}
\newcommand{\tauplus}{\ensuremath{\tau_+}\xspace}

\icmltitlerunning{Multimarginal flow matching with optimal transport potentials}

\begin{document}

\twocolumn[
    \icmltitle{Multimarginal flow matching with optimal transport potentials}



    \icmlsetsymbol{equal}{*}

    \begin{icmlauthorlist}
    \icmlauthor{Raghav Kansal}{bexorg}
    \icmlauthor{David Crair}{bexorg}
    \icmlauthor{Nghia Nguyen}{bexorg}
    \icmlauthor{Scott Pope}{bexorg}
    \icmlauthor{Bradley Parry}{bexorg}
    \end{icmlauthorlist}

    \icmlaffiliation{bexorg}{Bexorg, Inc., New Haven, CT, USA}

    \icmlcorrespondingauthor{Raghav Kansal}{raghav.kansal@bexorg.com}
    \icmlcorrespondingauthor{Bradley Parry}{brad.parry@bexorg.com}

    \icmlkeywords{Machine Learning, ICML}

    \vskip 0.3in
]



\printAffiliationsAndNotice{}  

\begin{abstract}
Flow matching (FM) has emerged as a powerful framework for learning dynamic transport maps between two empirical distributions. 
However, less explored is the setting with intermediate observed marginals that can help constrain the flows between the endpoints.
This ``multimarginal'' regime is central to modeling temporal evolution in dynamical systems in many scientific domains that can sample sequential distributions.
We tackle this problem with a novel approach that leverages the connection between FM and dynamic optimal transport (OT),
softly steering the flow towards the intermediate marginals through potential terms in the dynamic OT action.
By extending the conditional FM learning target to incorporate these potentials, we derive an efficient, 
simulation-free algorithm for multimarginal FM that offers considerable flexibility in the spatiotemporal dynamics of the learned flows. 
We demonstrate state-of-the-art performance and training efficiency of OT-potential FM (OTP-FM) on diverse single-cell RNA sequencing, oceanographic, and meteorological datasets.
Our code is available at \url{https://github.com/Bexorg-Inc/OTP-FM}.
\end{abstract}

\section{Introduction}
\label{sec:intro}

Understanding the complex nonlinear dynamics of physical systems is of central importance in many scientific domains, 
including transcriptomic state transitions in developmental biology, disease progression in neurodegenerative diseases, and climate modeling. 
There has been significant recent progress in these disciplines towards developing a corpus of static snapshots of these systems during their evolution, 
such as with longitudinal single-cell RNA sequencing (scRNA-seq) measurements;
however, inferring accurate per-sample trajectories from these, often independent, snapshots remains a critical challenge for mechanistic understanding, therapeutic target identification, and predictive inference.

\textit{Conditional flow matching} (CFM)~\citep{lipman2023flow, tong2024improving, liu2023flow, albergo2025stochastic} has emerged as a leading framework for this problem,
learning continuous-time transport maps through efficient, simulation-free regression of simple conditional trajectories, which are commonly conditional optimal transport (OT) solutions between paired source and target samples.
When intermediate marginals are available, the natural extension has been to apply CFM \textit{piecewise} between consecutive marginals, stitching conditional paths end-to-end (Fig.~\ref{fig:method_overview}).
However, this produces trajectories with unphysical discontinuities at each marginal boundary.
Recent multimarginal methods such as MMFM~\citep{rohbeck2025mmfm} and 3MSBM~\citep{theodoropoulos2025momentum} have tried to smooth these, 
but with prescriptive, ad-hoc strategies that we argue need not describe the physical system.

We propose a principled relaxation: we show that piecewise CFM corresponds to conditional dynamic OT with \emph{hard constraints} on each intermediate marginal, 
and relax these to smooth, finite-strength potential energy terms in the dynamic OT action with OT-potential FM (OTP-FM)---recovering piecewise CFM as a limiting case.
Our contributions are:
\begin{itemize}
    \item \textbf{A multimarginal generalization of dynamic OT} with exact conditional solutions and a theoretical bound on alignment to the ground-truth intermediate marginals, controlled by the potential strength and training loss.
    \item \textbf{A flexible, simulation-free training algorithm} with a broad design space in the potential parameters, 
    effectively allowing the data to determine the interpolated dynamics through optimization over this space rather than the prescriptive approaches of prior methods.
    \item \textbf{State-of-the-art (SOTA) performance and training efficiency} on diverse scientific datasets, with systematic ablations and concrete recommendations for applying OTP-FM to new datasets.
\end{itemize}

\begin{figure*}[t]
\centering
\includegraphics[width=\textwidth]{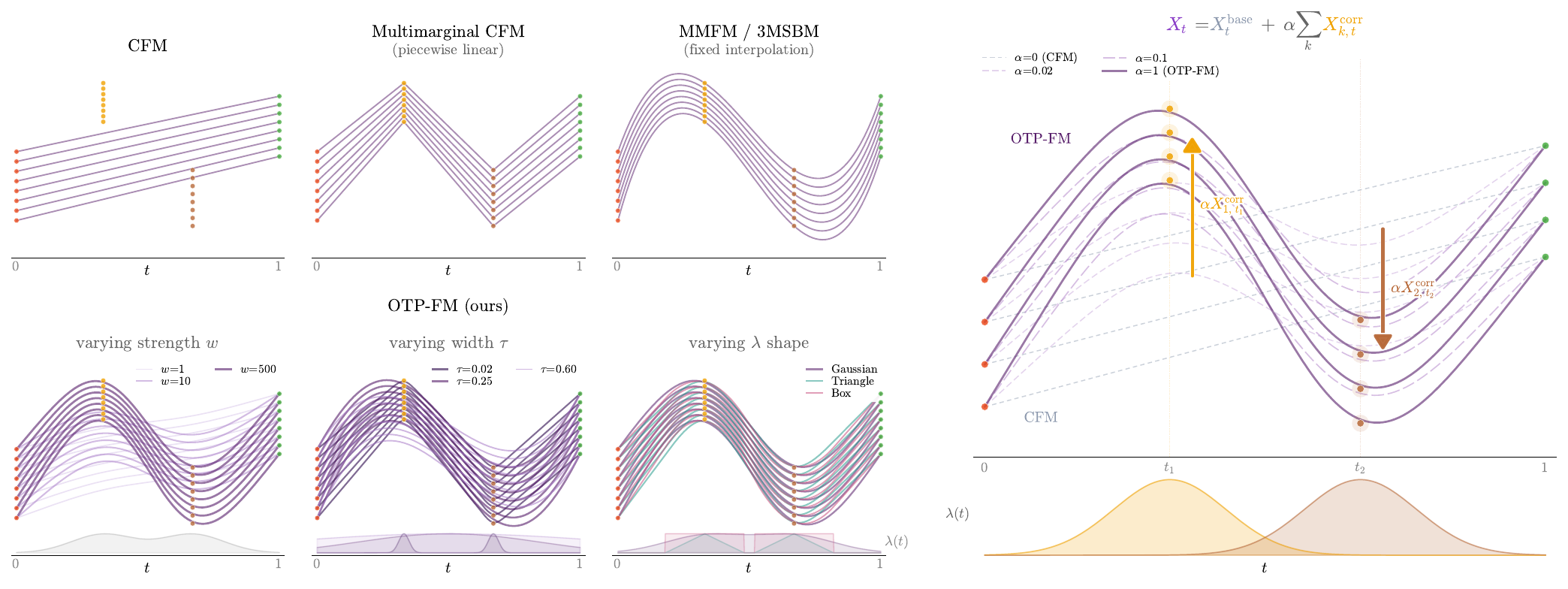}
\caption{
    \textbf{(Left)} Comparing standard CFM --- straight-line trajectories ignoring intermediate marginals; 
    multimarginal CFM --- stitching CFM trajectories piecewise between consecutive marginals; 
    prescriptive approaches such as MMFM and 3MSBM --- using fixed interpolation strategies to smooth kinks;
    and OTP-FM, whose soft potential-driven dynamics with tunable strength $w$, temporal width $\tau$, and $\lambda$ shape yields smooth \emph{and} flexible trajectories.
    \textbf{(Right)} Method overview: trajectories $X_t$ are decomposed as a base CFM path \xbaset plus marginal-driven corrections \xcorrkt that are gradually scaled by the curriculum parameter $\alpha$ to converge to the OTP-FM solution.
}
\label{fig:method_overview}
\end{figure*}

\section{Background and preliminaries}
\label{sec:background}

\subsection{Optimal transport}
\label{sec:background_ot}

The original static Monge OT problem solves for a transport map $\psi\!: \mathbb{R}^d \to \mathbb{R}^d$ 
between source and target measures $\mu_0$ and $\mu_1$, such that the push-forward operation $\psi_\#\mu_0 = \mu_1$,
that is \emph{optimal} with respect to a cost $c\colon \mathbb{R}^d \times \mathbb{R}^d \to \mathbb{R}$~\citep{Villani2009}:
$\mathcal{L}_\mathrm{OT} := \min_{\psi: \psi_\#\mu_0 = \mu_1} \int\! c(x, \psi(x)) \dd{\mu_0(x)}$.
For cost $c_p(x,y) = \|x-y\|^p$, $\mathcal{L}_\mathrm{OT}^{1/p}$ is the Wasserstein $p$-distance $\mathcal W_p$.

Particularly relevant is the \textit{dynamic OT} (DOT) formulation~\citep{BenamouBrenier2000},
where we define a dynamic probability path $\rhot\colon [0, 1] \times \mathbb{R}^d \to \mathbb{R}^+$ interpolating $\mu_0$ to $\mu_1$.
This path is generated by a velocity field $\vt$ via the continuity equation $\partial_t \rho = -\nabla \cdot (\rho u)$,
with sample trajectories satisfying $\dot{X}_t = \vt(X_t)$ for $X_0 \sim \rho_0$,
and the corresponding map, or \textit{flow}, $\psi_t\colon [0, 1] \times \mathbb{R}^d \to \mathbb{R}^d$ 
transports the samples along the trajectory $X_t = \psi_t(X_0) \sim \rho_t$.
The objective for squared Euclidean cost $\nicefrac{c_2}{2}$ is:\footnote{We adopt the conventional factor of $\nicefrac{1}{2}$ to maintain the analogy with physical kinetic energy.}
\begin{equation}
    \label{eq:dynamic_ot_objective}
    \mathcal{L}_\mathrm{DOT} := \min_{\rho_t, \vt} \int_0^1 \int_{\mathbb{R}^d} \frac{1}{2} \|\vt(x)\|^2 \dd{\rho_t(x)} \dt,
\end{equation}
subject to the continuity equation and boundary conditions $\rho_0 = \mu_0$, $\rho_1 = \mu_1$.

Interestingly, this can be interpreted as minimizing the action $S[L]$ of a fluid with Lagrangian $L = T - V$, 
kinetic energy $T = \frac{1}{2}\rho\|u\|^2$, and potential energy $V=0$:
\penalty -10000
\vspace{-0.1em}
\begin{multline}
    \label{eq:dynamic_ot_action}
    S[L] = \int_0^1 \int_{\mathbb{R}^d} \Big[\frac{1}{2}\rho_t(x)\|\vt(x)\|^2 - V(x, t) +\\ \varphi_t(x) [\partial_t \rho_t(x) + \nabla \cdot (\rho_t(x) \vt(x))] \Big]\dx \dt,
\end{multline}
where $\varphi$ is a Lagrange multiplier enforcing the continuity equation.
Minimizers for $V = 0$ satisfy the Euler-Lagrange (E-L) equations:
\begin{equation}
    \label{eq:dynamic_ot_soln_hj}
    \vt = \nabla \varphi_t, \qquad \partial_t \varphi_t(x) + \frac{\|\nabla \varphi_t(x)\|^2}{2} = 0,
\end{equation}
and follow straight-line trajectories $X_t = (1-t)X_0 + t\psi(X_0)$, $\dot{X}_t = \psi(X_0) - X_0$, where $\psi$ is the static OT map for cost $c_2$.
A detailed derivation is provided in App.~\ref{app:ot_dynamic}.
These straight-line minimizers correspond to the conditional paths commonly used in CFM, as we describe next,
while in OTP-FM we explore the case of non-zero $V$.

\subsection{Conditional flow matching}
\label{sec:background_fm}

Conditional flow matching (CFM) similarly aims to find a flow between measures by learning a parametric velocity $\vpar_t$.
Obtaining a valid \emph{marginal} training target to which to regress this is often intractable;
however,~\citet{lipman2023flow, tong2024improving} show that we can derive an equivalent, simpler \emph{conditional} target by
constructing the marginal $\rho_t(x) = \int\!\!\rho_t(x|z) q(z) \dd{z}$ as a mixture of conditional paths $\rho_t(x|z)$,
with associated conditional velocity $\vt(x|z)$, conditioned on a latent variable $z \sim q(z)$, 
and satisfying $\rho_0(x) = \mu_0(x)$,  $\rho_1(x) = \mu_1(x)$.

The simplest and most common way to do so is choosing $z = (x_0, x_1) \sim \pi(x_0, x_1)$, where $\pi$ is some joint distribution of the endpoints,
and $\vt(x|z)$ the \textit{conditional dynamic OT solution} (for $V=0$) between $\rho_0(x|z) = \delta(x - x_0)$ and $\rho_1(x|z) = \delta(x - x_1)$, 
defining the CFM objective:
\begin{align}
    \label{eq:conditional_sample_trajectory}
    X_t(x|z) = (1 - &t) x_0 + t x_1, \\ 
    \label{eq:conditional_velocity}
    \vt(x|z) = \dot{X}_t(x|&z) = x_1 - x_0,\\
    \label{eq:cfm_objective}
    \mathcal{L}_\mathrm{CFM}(\theta) := \mathbb{E}_{t, z, x\sim \rho_t(x|z)} &\left\| \vpar(t, x) - \vt(x|z) \right\|^2.
\end{align}
Perhaps surprisingly, this far simpler objective is gradient-equivalent to the marginal FM objective,
thereby providing an efficient and scalable training algorithm for learning flows.
In OTP-FM, we generalize this regression target to non-zero potentials $V$ that flexibly incorporate intermediate marginal constraints.

\subsection{Few-step and consistency models}
\label{sec:background_consistency}

As we will describe in Sec.~\ref{sec:conditional_otpfm}, OTP-FM can require evaluating sample positions $X_{t_k}$ at intermediate times during training.
To avoid costly ODE simulation for this, we train a \textit{consistency model} for few-step inference.
Broadly, consistency models aim to learn (variations of) the \textit{flow map} $\flowmap\colon [0,1]^2 \times \mathbb{R}^d \to \mathbb{R}^d$
--- a generalization of $\psi_t$ that can transport samples between two arbitrary time points $t_1$ and $t_2$: $\flowmap(X_{t_1}) = X_{t_2}$,
with boundary condition $\Psi_{t, t}(x) = x$~\citep{boffi2025flowmapmatching, boffi2025buildconsistencymodel}.

While OTP-FM is agnostic to the particular training procedure used to learn $\flowmap$,
for our experiments we primarily employ that of \emph{improved MeanFlow} (iMF)~\citep{geng2026improved}, which, at the time of writing, is SOTA in one- and two-step inference.
Namely, we parameterize \flowmap in terms of the \textit{mean velocity} between $t_1$ and $t_2$,
$\umap(X_{t_1}) = \frac{1}{t_2 - t_1} \int_{t_1}^{t_2} \vt(X_t) \dt$,
from which the regression target is derived:
\begin{flalign}
    &\mathcal{L}_\mathrm{iMF}(\theta) := \mathbb{E}_{t_1, t_2,z, x\sim \rho_{t_1}(\cdot|z)}\! \left\| \Upar(x) - \vtone\!(x|z) \right\|^2\!\!\!\!\!\!\!\!\label{eq:imf_regression} && \\
    &\Upar(x) \equiv \umapparnot\!(x) - (t_2 - t_1)\,  \sg\!\left[\umapparnot_{t_1, t_1}\!(x)\!\, \partial_x \umapparnot\! + \partial_{t_1}\! \umapparnot\right], \nonumber &&
\end{flalign}
where $\sg$ is the stop-gradient operator, $\vt(x|z)$ is the \textit{instantaneous} conditional velocity target used in CFM,
and \Upar is a parametrization of the model \umappar that directly regresses this target.\footnote{
    Our formulation is largely equivalent to the original of~\citet{geng2026improved} except for a modification of the training target to flow \textit{forward} in time. Details are provided in App.~\ref{app:meanflow}.}
Further discussion of consistency methods and a derivation of our iMF objective appear in App.~\ref{app:consistency_models}, 
and results with alternatives in App.~\ref{app:exp_singlecell}, demonstrating OTP-FM's flexibility to the choice of consistency model.


\section{The dynamic OTP problem}
\label{sec:dynamicotp}

\begin{figure*}[th]
\centering
\includegraphics[width=\textwidth]{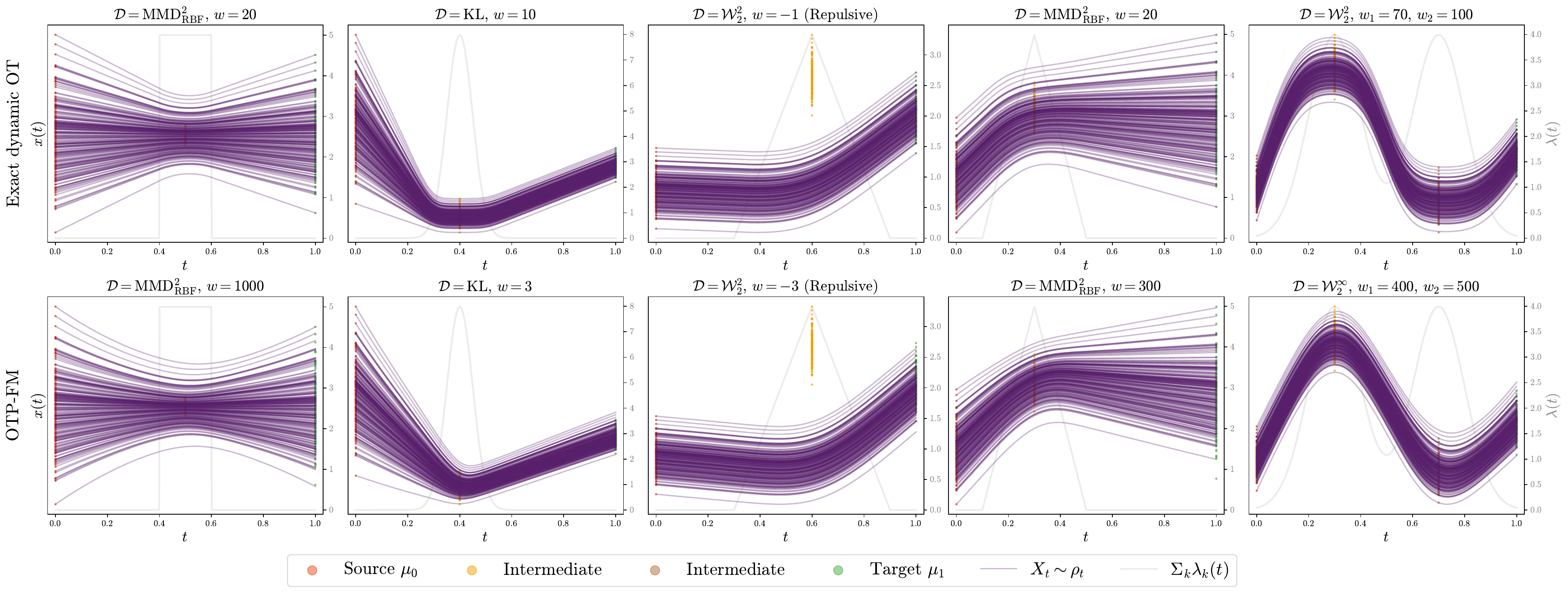}
\caption{
    \textit{Top:} Exact solutions to the marginal dynamic OTP problem for 1D Gaussian marginals for varying potentials, strengths, and \lkt.
    \textit{Bottom:} OTP-FM solutions for the same marginals and potentials, except the rightmost plot, which demonstrates $\md = \wassinf$.
}
\label{fig:gaussians}
\end{figure*}

We now generalize CFM to the multimarginal trajectory inference problem:
given empirical marginals $\{\mu_{t_k}\}_{k=0}^{K+1}$ (two endpoint and $K$ intermediate) at times $t_k \in [0, 1]$,
we aim to learn a velocity field \vpart whose flow induces densities $\rho_t$ aligned with $\mu_{t_k}$ while describing physically plausible interpolated trajectories,
as measured by alignment to held-out marginals (Sec.~\ref{sec:experiments}).
We first observe that standard piecewise CFM targets can be recast as conditional solutions of dynamic OT with \textit{hard} penalty terms, or singular potentials (Sec.~\ref{sec:dynamicotp_singular}).
This motivates a smooth relaxation to what we call the dynamic OT + potentials (OTP) problem (Sec.~\ref{sec:dynamicotp_smooth}),
in which the intermediate marginal constraints appear instead as \textit{soft} potential terms in the OT action.
Finally, we derive sample trajectories and discuss the design space of potentials (Secs.~\ref{sec:dynamicotp_solution} and~\ref{sec:dynamicotp_design}).



\subsection{Piecewise CFM as OT with singular potentials}
\label{sec:dynamicotp_singular}

The standard multimarginal extension of CFM stitches conditional OT solutions piecewise between consecutive marginals 
(e.g.~\citet{tong2024improving, tong2023simulation}, see App.~\ref{app:piecewise_hard}), 
whose target densities satisfy $\rho_{t_k} = \mu_{t_k}$ exactly at each intermediate time.
This is equivalent to imposing \textit{hard constraints} (HC) on the conditional dynamic OT problem (Eq.~\ref{eq:dynamic_ot_objective}),
to whose solutions we regress:
\begin{multline} 
    \label{eq:hard_constrained_ot}
    \hspace{-\multlinegap}\mathcal{L}_\mathrm{HC} := \min_{\rho_t, \vt} \int_0^1\!\!\int_{\mathbb{R}^d}\!\big[\tfrac{1}{2} \rho_t \|\vt\|^2 + \varphi_t [\partial_t \rho_t + \nabla\cdot(\rho_t \vt)] \big]\dx \dt,\\
     \text{s.t.}\;\; \rho_{t_k} = \mu_{t_k} \;\forall k.
\end{multline}
Because the kinetic-energy integral is additive in time and the constraints fix the densities at each $t_k$, 
this problem decouples across consecutive intervals into independent dynamic OT problems between $\mu_{t_k}$ and $\mu_{t_{k+1}}$ (Lemma~\ref{lemma:hc_decoupling} in App.~\ref{app:piecewise_hard}), 
recovering piecewise CFM as its conditional solution (Lemma~\ref{lemma:piecewise_cfm_equivalence}).

Equivalently, these constraints can be enforced by adding singular penalties based on a metric statistical distance $\mathcal D$ such as \wass,
time-localized with Dirac deltas, and taking the penalty strength $w_k \to \infty$:
\vspace{-1.5em}
\begin{multline}
    \label{eq:hard_constrained_action}
    \hspace{-\multlinegap}\mathcal L_\mathrm{HC} := \min \!\int_0^1\!\!\int_{\mathbb{R}^d} \!\Big[\tfrac{1}{2}\rho_t\|\vt\|^2 + 
    \overbrace{\sum_{k=1}^K w_k\,\delta(t-t_k)\,\dptk \rho_t}^{\equiv-V_\mathrm{HC}(x, t)} \\+ 
    \varphi_t [\partial_t \rho_t + \nabla \cdot(\rho_t \vt)] \big]\dx \dt,
\end{multline}
where $\dptk \equiv \mathcal D[\rhot, \mutk]\colon \mathcal P(\mathbb{R}^d) \times \mathcal P(\mathbb{R}^d) \to \mathbb{R}^+$ is the statistical distance between $\rhot$ and $\mutk$.
The limit $w_k \to \infty$ enforces $\dptk = 0 \Leftrightarrow \rhotk = \mutk$ (since $\dptk$ is a metric), recovering the hard constraint 
(Prop.~\ref{prop:hard_constraint_limit_informal}).
Thus, we can interpret the penalties as singular potential terms in the action.

\subsection{The OTP problem: a smooth relaxation}
\label{sec:dynamicotp_smooth}
This viewpoint suggests a natural relaxation to smooth the resultant unphysical kinks in piecewise CFM: 
take $w_k \in \mathbb R$ \textit{finite} and soften the Dirac deltas to normalized temporal kernels $\lambda_k:[0,1]\to\mathbb{R}^+$ centered around $t_k$ with characteristic width $\tau \in \mathbb{R}^+$.
The choice of statistical distance $\mathcal D$ remains free; 
in fact, in the soft setting we can further admit \textit{non-metric divergences} such as the Kullback-Leibler divergence (KLD), which sacrifice exact constraint enforcement in the hard limit but expand the design space (Sec.~\ref{sec:dynamicotp_design}).
These generalizations yield the OTP variational problem:
\vspace{-0.5em}
\begin{multline}
    \label{eq:otp_action}
    \hspace{-\multlinegap}\mathcal{L}_\mathrm{OTP} := \min \!\int_0^1\!\!\int_{\mathbb{R}^d} \!\Big[\tfrac{1}{2}\rho_t\|\vt\|^2 + 
    \overbrace{\sum_{k=1}^K w_k \lkt \dptk \rhot}^{\equiv -V_\mathrm{OTP}(x, t)}
    \\ + \varphi_t [\partial_t \rho_t + \nabla\cdot(\rho_t \vt)] \Big]\dx \dt,
\end{multline}
subject to the boundary conditions $\rho_0=\mu_0$, $\rho_1=\mu_1$.
Each term $w_k\lkt\,\dptk$ thus acts as a per-marginal potential energy in the action: smoothed in time around $t_k$ by $\lkt$; of finite strength $w_k$; and shaped in density space by the choice of $\mathcal D$.
Proposition~\ref{prop:hard_constraint_limit_informal} states that in the singular-potential limit, the OTP problem recovers piecewise CFM and is thus a strict generalization;
the precise statement and proof based on $\Gamma$-convergence are provided in App.~\ref{app:piecewise_hard}.
\begin{proposition}[OTP converges to the hard-constrained problem in the hard-potential limit, informal]
\label{prop:hard_constraint_limit_informal}
In the singular limit $w_k \to \infty$, $\lambda_k \to \delta(t - t_k)$ for all $k$, the OTP problem (Eq.~\ref{eq:otp_action}) with $\mathcal D = \wass$
converges to the hard-constrained problem (Eq.~\ref{eq:hard_constrained_ot}) and any sequence of OTP minimizers converges to its unique piecewise OT solution between consecutive marginals.
\end{proposition}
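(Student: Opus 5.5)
We will prove the statement by $\Gamma$-convergence on the space of curves of probability measures. The natural setting is $\Gamma := \{\rho \in AC^2([0,1];\mathcal P_2(\mathbb{R}^d)) : \rho_0=\mu_0,\ \rho_1=\mu_1\}$ with the topology of uniform-in-time narrow (or $\mathcal W_2$) convergence. On it we write the Benamou--Brenier action as
\[
A(\rho)=\int_0^1 \tfrac12|\dot\rho_t|^2_{\mathcal W_2}\,\dt,
\]
where $|\dot\rho_t|_{\mathcal W_2}$ is the metric derivative. This is the minimum of the kinetic term over velocities $u_t$ satisfying the continuity equation, so the Lagrange multiplier $\varphi_t$ disappears. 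The OTP functional becomes
\[
F_n(\rho)=A(\rho)+\sum_{k=1}^K w_k^{(n)}\int_0^1\lambda_k^{(n)}(t)\,\mathcal W_2^2(\rho_t,\mu_{t_k})\,\dt ,
\]
and the limit is $F_\infty(\rho)=A(\rho)$ if $\rho_{t_k}=\mu_{t_k}$ for all $k$, and $+\infty$ otherwise. By Lemma~\ref{lemma:hc_decoupling}, minimizers of $F_\infty$ are exactly the concatenations of dynamic OT geodesics between consecutive marginals. When each consecutive static OT problem has a unique plan (e.g.\ if the $\mu_{t_k}$ are absolutely continuous, by Brenier), this minimizer is unique. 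We take that uniqueness as part of the hypotheses implicit in ``its unique piecewise OT solution''.

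\textbf{Step 1: equicoercivity.} Fix the competitor $\bar\rho$, the piecewise geodesic. Since $\bar\rho_{t_k}=\mu_{t_k}$ and $\bar\rho$ is $\mathcal W_2$-Lipschitz, we have $\mathcal W_2^2(\bar\rho_t,\mu_{t_k})\le L^2|t-t_k|^2$. Hence the penalty is at most $L^2 w_k^{(n)}\int\lambda_k^{(n)}(t)|t-t_k|^2\,\dt$, and we assume this tends to $0$. This joint condition is needed: for kernels of width $\tau_n$ it reads $w_k^{(n)}\tau_n^2\to0$. If $\tau_n$ vanishes fast enough it holds automatically, and we will state it explicitly as the precise meaning of the joint limit. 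It gives $\limsup_n \min F_n\le A(\bar\rho)<\infty$. Any sequence with bounded $F_n$ then has $A(\rho^n)$ bounded, so the curves are uniformly $\tfrac12$-Hölder in $\mathcal W_2$ with uniformly bounded second moments, anchored at $\mu_0$. Arzelà--Ascoli in $(\mathcal P_2,\mathcal W_2)$, using narrow compactness of sets with bounded second moment, yields a subsequence converging narrowly and uniformly in $t$.

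\textbf{Step 2: $\Gamma$-$\liminf$.} Lower semicontinuity of $A$ under this convergence is standard: it is the convex Benamou--Brenier functional in $(\rho,\rho u)$. The remaining task is to show that $\sup_n F_n(\rho^n)<\infty$ forces $\rho_{t_k}=\mu_{t_k}$. Suppose $\mathcal W_2(\rho_{t_k},\mu_{t_k})=2\eta>0$. By lower semicontinuity of $\mathcal W_2$ under narrow convergence, together with the equi-Hölder bound, we get $\mathcal W_2(\rho^n_t,\mu_{t_k})\ge\eta$ for $|t-t_k|\le\delta$ and $n$ large. Since $\lambda_k^{(n)}$ concentrates at $t_k$, its mass on this window tends to $1$, so the penalty is at least $\tfrac12 w_k^{(n)}\eta^2\to\infty$, a contradiction. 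This is the step I expect to be most delicate. Narrow convergence only gives lower semicontinuity of $\mathcal W_2$, which is the direction we need, but transferring the lower bound from the single time $t_k$ to a neighbourhood uniformly in $n$ needs care. I would derive it from the uniform Hölder estimate $\mathcal W_2(\rho^n_t,\rho^n_{t_k})\le C|t-t_k|^{1/2}$ applied at each $n$ and time $t$, combined with lower semicontinuity at the single time $t_k$.

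\textbf{Step 3: recovery sequence and conclusion.} For $\rho$ with $F_\infty(\rho)<\infty$, take the constant sequence $\rho^n=\rho$. The same Lipschitz-type argument as in Step 1, now with $A(\rho)<\infty$ giving only $\tfrac12$-Hölder continuity, bounds the penalty by $C w_k^{(n)}\int\lambda_k^{(n)}(t)|t-t_k|\,\dt$. This needs the slightly stronger scaling $w_k^{(n)}\tau_n\to0$, which we impose. A cleaner alternative is to reparametrize time locally so the curve is Lipschitz, but we keep the stated scaling. The fundamental theorem of $\Gamma$-convergence, combined with equicoercivity, then gives $\min F_n\to\min F_\infty=\mathcal L_{\mathrm{HC}}$ and shows that every cluster point of OTP minimizers minimizes $F_\infty$. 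Uniqueness of that minimizer upgrades subsequential convergence to convergence of the whole sequence. Finally, velocities converge weakly as measures $\rho^n u^n\rightharpoonup\bar\rho\bar u$ by the bounded action, which recovers the piecewise CFM field, consistent with Lemma~\ref{lemma:piecewise_cfm_equivalence}.
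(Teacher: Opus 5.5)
Your proposal is correct and follows essentially the same route as the paper's proof: $\Gamma$-convergence plus equi-coercivity from the kinetic energy (the $\tfrac12$-H\"older estimate and Arzel\`a--Ascoli), a constant recovery sequence that needs $w\tau\to0$, and a liminf split into lower semicontinuity of the kinetic energy versus blow-up of the penalty when some $\rho_{t_k}\neq\mu_{t_k}$. Your handling of that last case is slightly more careful than the paper's. You use only narrow convergence, lower semicontinuity of $\mathcal{W}_2$ at $t_k$, and the uniform H\"older bound to get a lower bound on a time window, whereas the paper invokes uniform $\mathcal{W}_2$-convergence, and Arzel\`a--Ascoli with merely bounded second moments only gives compactness in the narrow topology.
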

\vspace{-1em}

\subsection{Minimizers and sample trajectories}
\label{sec:dynamicotp_solution}
To further understand what we have gained with this relaxation, we derive the E-L equations and corresponding sample trajectories of the OTP problem.
\begin{restatable}{theorem}{thmpotentialsolnel}
\label{thm:potential_soln_el}
Minimizers of the OTP problem (Eq.~\ref{eq:otp_action}) satisfy the following E-L equations:
\begin{align}
    \vt(x) &= \nabla \varphi_t(x), \\
    \label{eq:potential_soln_hj}
    \partial_t \varphi_t(x) + \frac{\|\nabla \varphi_t(x)\|^2}{2} &= \sum_{k=1}^K w_k \lkt \gkxt,
\end{align}
along with the continuity equation,
where $\gkxt \equiv \delta\dptk/\delta \rhot(x)$ is the functional derivative of \dptk w.r.t.\ $\rho_{t_k}(x)$.
\end{restatable}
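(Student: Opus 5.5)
We will derive the Euler--Lagrange equations by taking first variations of the action in Eq.~\ref{eq:otp_action} with respect to the three independent fields $\vt$, $\rhot$, and $\varphi_t$. We follow the same route as the $V=0$ derivation in App.~\ref{app:ot_dynamic}, and only track the extra contribution from the potential. As a first step, we integrate the constraint term by parts, in time and in space, assuming enough decay at infinity to drop the spatial boundary terms. This rewrites it as
\[
\int_0^1\!\!\int \varphi_t[\partial_t\rho_t+\nabla\cdot(\rho_t\vt)]\,\dx\,\dt
= \Big[\int \varphi_t\rho_t\,\dx\Big]_0^1 - \int_0^1\!\!\int \rho_t\big(\partial_t\varphi_t + \vt\cdot\nabla\varphi_t\big)\,\dx\,\dt .
\]
The time-boundary terms do not vary, because $\rho_0=\mu_0$ and $\rho_1=\mu_1$ are fixed, so admissible variations satisfy $\delta\rho_0=\delta\rho_1=0$.

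Next we take the variation in each field.

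\textbf{Variation in $\varphi_t$.} This returns the continuity equation directly.

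\textbf{Variation in $\vt$.} Only the terms $\tfrac12\rho_t\|\vt\|^2-\rho_t\vt\cdot\nabla\varphi_t$ depend on $\vt$, and the potential does not. Setting the variation to zero gives $\rho_t(\vt-\nabla\varphi_t)=0$, hence $\vt=\nabla\varphi_t$ on the support of $\rho_t$. This is the first E-L equation.

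\textbf{Variation in $\rhot$.} The kinetic and constraint terms contribute pointwise, giving $\tfrac12\|\vt\|^2-\partial_t\varphi_t-\vt\cdot\nabla\varphi_t$. The potential term $\sum_k w_k\lkt\,\dptk$ is a functional of the whole density $\rhot$ at each time, not a pointwise integrand. Since $\lkt$ and $w_k$ do not depend on $\rho$, its first variation is
\[
\int_0^1 \sum_k w_k\lkt \int \frac{\delta \dptk}{\delta\rhot(x)}\,\delta\rho_t(x)\,\dx\,\dt,
\]
which contributes $\sum_k w_k\lkt\,\gkxt$ to the integrand. Setting the total variation to zero for arbitrary $\delta\rho_t$ yields
\[
\tfrac12\|\vt\|^2-\partial_t\varphi_t-\vt\cdot\nabla\varphi_t+\sum_k w_k\lkt\,\gkxt=0 .
\]
Substituting $\vt=\nabla\varphi_t$ collapses this to $\partial_t\varphi_t+\tfrac12\|\nabla\varphi_t\|^2=\sum_k w_k\lkt\,\gkxt$, which is Eq.~\ref{eq:potential_soln_hj}.

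We expect the main obstacle to be justifying the density variation. Perturbations must preserve the total mass (and nonnegativity), so $\int\delta\rho_t\,\dx=0$. Stationarity then only determines the integrand up to an additive function $c(t)$. The functional derivative $\gkxt$ is likewise only defined up to a constant; for $\wass$, for example, it is a Kantorovich potential. Our plan is to absorb $c(t)$ into $\varphi_t$ by the shift $\varphi_t\mapsto\varphi_t+\int_0^t c(s)\,\dd{s}$. This shift leaves $\nabla\varphi_t$, and therefore $\vt$, unchanged, so the equation holds exactly in the stated form.

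We will also note two further points. First, the argument assumes that each $\dptk$ is differentiable along admissible variations; this holds for KLD, MMD, and, under mild regularity, $\wass$. Second, outside the support of $\rho_t$ the equations should be read in the usual viscosity or a.e. sense. We state both as standing regularity assumptions, exactly as is done for the $V=0$ case.
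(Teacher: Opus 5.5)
Your proposal is correct and follows essentially the same route as the paper: the integration by parts inherited from the $V=0$ case, the three first variations (in $\varphi_t$, $u_t$, $\rho_t$), and a gauge shift $\varphi_t\mapsto\varphi_t+c(t)$ that removes an $x$-independent term from the H-J equation without changing $\nabla\varphi_t$. The only difference is where that term comes from. The paper keeps the factor $\rho_t(x)$ in the potential and obtains $h(t)=\sum_k w_k\lambda_k(t)\,\mathcal{D}^k_t$ from the product rule. You drop that factor by normalization and instead attribute the additive $c(t)$ to mass-preserving variations and to the constant ambiguity in $g_k$, which is, if anything, the more careful justification.
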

\begin{restatable}{corollary}{corpotentialsample}
\label{cor:potential_sample}
Minimizers of the OTP problem (Eq.~\ref{eq:otp_action}) satisfy the following sample trajectories:
\begin{equation}
    \label{eq:otp_sample_exact}
    \ddot{X}_t = \sum_{k=1}^K w_k \lkt \ggkXt.
\end{equation}
\vspace{-1.5em}
\end{restatable}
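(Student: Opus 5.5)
The plan is to take the E-L equations of Theorem~\ref{thm:potential_soln_el} as given and compute the material (Lagrangian) derivative of the velocity along a characteristic $\dot X_t = \vt(X_t)$. Since $X_t$ is a flow line, the chain rule gives
\begin{equation*}
\ddot X_t = \frac{\mathrm{d}}{\dt}\,\vt(X_t) = \partial_t \vt(X_t) + \big(\nabla \vt(X_t)\big)\,\vt(X_t),
\end{equation*}
where $\nabla\vt$ is the Jacobian. Everything then reduces to rewriting the right-hand side using $\vt=\nabla\varphi_t$ and the Hamilton--Jacobi-type equation~\eqref{eq:potential_soln_hj}.

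The first step uses the gradient structure. Because $\vt=\nabla\varphi_t$, the Jacobian $\nabla\vt=\nabla^2\varphi_t$ is symmetric. Hence
\begin{equation*}
(\nabla\vt)\vt=\nabla^2\varphi_t\nabla\varphi_t=\nabla\!\left(\tfrac12\|\nabla\varphi_t\|^2\right).
\end{equation*}
Similarly, $\partial_t\vt=\nabla\partial_t\varphi_t$, assuming enough regularity to exchange $\partial_t$ and $\nabla$. Combining the two gives
\begin{equation*}
\ddot X_t=\nabla\!\left[\partial_t\varphi_t+\tfrac12\|\nabla\varphi_t\|^2\right](X_t).
\end{equation*}

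The second step takes the spatial gradient of~\eqref{eq:potential_soln_hj}. The factors $w_k$ and $\lkt$ do not depend on $x$, so
\begin{equation*}
\nabla\!\left[\partial_t\varphi_t+\tfrac12\|\nabla\varphi_t\|^2\right]=\sum_{k=1}^K w_k\lkt\,\ggkxt.
\end{equation*}
Evaluating at $x=X_t$ yields exactly Eq.~\eqref{eq:otp_sample_exact}. Note that $g_k$ depends on $\rho_t$, which is fixed by the minimizer, so taking $\nabla$ only differentiates in $x$. This mirrors the $V=0$ case of App.~\ref{app:ot_dynamic}, where the right-hand side vanishes and one recovers straight lines.

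The main obstacle is regularity rather than algebra. We need $\varphi_t$ to be $C^2$ in space and $C^1$ in time, so that mixed derivatives commute and the characteristics are well defined. We also need $g_k(\cdot,t)$ to be differentiable. For $\mathcal D=\wass$, $g_k$ is a Kantorovich potential, which is only semiconcave in general. I would therefore state the result under the standing assumption, implicit in Theorem~\ref{thm:potential_soln_el}, of smooth minimizers. Where necessary I would interpret $\ggkxt$ as holding $\rho_t$-a.e., using a.e.\ differentiability of the potentials, and note that the identity then holds along $\rho_0$-a.e.\ trajectory.
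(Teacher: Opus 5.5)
Your proposal is correct and follows essentially the same route as the paper: take the spatial gradient of the modified Hamilton--Jacobi equation, $\partial_t\nabla\varphi_t + (\nabla^2\varphi_t)\nabla\varphi_t = \sum_k w_k\lambda_k(t)\nabla g_k$, and identify its left-hand side at $X_t$ with the material derivative $\frac{\mathrm{d}}{\mathrm{d}t}u_t(X_t)$ using $u_t=\nabla\varphi_t$. Your regularity caveats, such as the only a.e.\ differentiability of the Kantorovich potential for $\mathcal{W}_2^2$, go beyond what the paper states but do not change the argument.
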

Proofs are provided in App.~\ref{app:ot_dynamic_potential}.
We can interpret $\ggkXt$ as the \textit{force} applied by the marginal $\mutk$ to each sample in analogy to Newton's second law.
Clearly, it is an important object, and in general not trivial to compute;
however, we show that it has convenient forms for common choices of $\mathcal D$ such as \wass, MMD, and KLD
in Table~\ref{tab:potential_derivatives} (derived in App.~\ref{app:distances}).
Solving these trajectories for the \textit{marginal} problem remains computationally challenging;
however, the \textit{conditional} solutions can be computed highly efficiently, as we show in Sec.~\ref{sec:otpfm}.

\begin{table}[t]
\caption{Functional derivatives $\gkxt \equiv \nicefrac{\delta \dptk}{\delta \rhot(x)}$ 
and their gradients for different statistical distances $\dptk \equiv \md[\rhot, \mutk]$.
$\varphi^*$ and $\psi^*$ are the OT H-J potential and map for $c_2$ cost, respectively.
}
\label{tab:potential_derivatives}
\vspace{-1em}
\begin{center}
    \renewcommand{\arraystretch}{1.4}
    \resizebox{\columnwidth}{!}{
    \begin{tabular}{c|c|c}
        \toprule
        \dptk & \gkxt & \ggkxt \\
        \midrule
        \wass & $-\varphi^*(x)$ & $x - \psi^*(x)$ \\
        $\dmmd^{(\text{kernel}\ k)}$ & 
            $2\!\int \!k(x, y) \! \left[\rhot(y) - \mutk(y)\right] \dd{y}$ & 
            $2\!\int \!\nabla_x k(x, y) \! \left[\rhot(y) - \mutk(y)\right] \dd{y}$\\
        $\mathcal D_\mathrm{KL}$ & $\ln\rhot(x) - \ln\mutk(x) + 1$ & $\nabla\ln\rhot(x) - \nabla\ln\mutk(x)$ \\
        \bottomrule
        \end{tabular}
    }
\end{center}
\end{table}

\subsection{Design space of potentials}
\label{sec:dynamicotp_design}

One case where we \textit{can} solve the marginal problem exactly is for isotropic Gaussian marginals (App.~\ref{app:otpfm_derivations_gaussian}),
illustrated in Fig.~\ref{fig:gaussians} for various configurations to visualize our rich design space along three axes: 
the strength $w_k$, the temporal kernel $\lkt$, and the statistical distance $\mathcal D$.
Overall, we observe intuitive qualitative behavior: trajectories are driven towards (or away from) intermediates depending on $w_k$, with the smoothness and temporal dynamics controlled by \lkt.\footnote{Counter-intuitively, boundary conditions cause \textit{repulsive} forces (negative $V$) to drive trajectories towards intermediates; 
the sign convention in Eq.~\ref{eq:otp_action} ensures positive $w_k$ corresponds to ``attractive'' dynamics.}
We discuss practical and performance considerations regarding this design space in the next two sections.
\section{The OTP-FM algorithm}
\label{sec:otpfm}

We now leverage the OTP dynamics to formulate a practical training algorithm by deriving a CFM-style regression objective.
We first construct conditional OTP-FM trajectories---conditioned on endpoint and intermediate-marginal samples---and obtain an explicit consistency-model loss (Sec.~\ref{sec:conditional_otpfm}).
We then derive the regression targets for different statistical distances (Sec.~\ref{sec:otpfm_forces}), identifying \wasstwoinf---with either OT or independent couplings---as a particularly clean, efficient, and stable choice that we recommend in practice.
Algorithmic details on the loss and curriculum follow in Sec.~\ref{sec:otpfm_algo}, and we conclude with theoretical bounds on the alignment between learned and ground-truth marginals---controlled by the potential strengths and training loss---in Sec.~\ref{sec:theory}.

\subsection{Conditional OTP-FM and training objective}
\label{sec:conditional_otpfm}

\paragraph{Temporal localization.}
Since $\ggkXt$ in Eq.~\ref{eq:otp_sample_exact} depends on $\rhot$ and $X_t$ throughout the support of \lkt, 
using these dynamics directly as a regression target would require simulating the trajectory at every training step.
Instead, we can exploit the temporal localization of $\lkt$: on its effective support of width $\tau$ around $t_k$, $X_t - X_{t_k}$ and $\rhot - \rhotk$ are both $\mathcal{O}(\tau)$ (assuming bounded velocity), 
so $\ggkXt = \ggkXtk + \mathcal{O}(\tau)$.
Substituting this into Eq.~\ref{eq:otp_sample_exact} yields the \textit{OTP-FM dynamics}:
\begin{equation}
    \label{eq:otp_sample}
    \ddot{X}_t = \sum_{k=1}^K w_k \lkt \ggkXtk.
\end{equation}
As the force \ggkXt now depends only on the trajectory at the discrete marginal times $\{t_k\}$, 
we are able to formulate the fully \textit{simulation-free} training objective below.
Furthermore, we observe qualitatively identical behavior to the exact OTP solutions (Fig.~\ref{fig:gaussians}),
and retain the important theoretical bounds (Sec.~\ref{sec:theory}) with this approximation.

\paragraph{Conditional solutions.}
Integrating Eq.~\ref{eq:otp_sample} once and twice yields the velocity and trajectory:
\begin{align}
\vspace{-1em}
    \vtotp(X_t) =& u_i + \sum_{k=1}^K w_k \ggkXtk \Int{\lambda_k}\!(t), \label{eq:otpfm_sample_u} \\
    \Xtotp\! =\! X_0 + u_i t &+\! \sum_{k=1}^K w_k \ggkXtk \Int[2]{\lambda_k}\! (t), \label{eq:otpfm_sample_x}
\end{align}
where $\Int[n]{f}(t)$ denotes the $n$-th time integral from $0$ to $t$ and $u_i \in \mathbb{R}^d$ is an integration constant fixed by the boundary conditions.
As in CFM, we obtain tractable regression targets by conditioning on endpoints $(x_0, x_1)$, 
as well as, in general, a minibatch $\batch$ of samples drawn from a joint distribution \piall over all marginals.
Imposing $X_0 = x_0$ and $X_1 = x_1$ pins down $u_i$, and the resulting conditional trajectory and velocity decompose into a CFM base term plus potential-driven corrections:
\begin{multline}
    \label{eq:otpfm_conditional_velocity}
    \vtotp(X_t|x_0, x_1, \batch) = \underbrace{x_1 - x_0}_{\equiv\ubase} + \\
    \sum_{k=1}^K \underbrace{w_k\ggkXtkb \left[\Int{\lambda_k}(t) - \Int[2]{\lambda_k} (1)\right]}_{\equiv\ucorrkt}.\raisetag{4.5\baselineskip}
\end{multline}
\vspace{-1.5em}
\begin{multline}
    \label{eq:conditional_Xt}
    \Xtotp(x_0, x_1, \batch) = \underbrace{x_0 + (x_1 - x_0) t}_{\equiv\xbaset} + \\
    \sum_{k=1}^K \underbrace{w_k \ggkXtkb \left[\Int[2]{\lambda_k}(t) - \Int[2]{\lambda_k} (1)\,t\right]}_{\equiv\xcorrkt},\raisetag{4.5\baselineskip}
\end{multline}
The base terms recover the CFM straight-line path and velocity (Corollary~\ref{cor:dynamic_ot_trajectory}), while each correction $\xcorrkt, \ucorrkt$ captures the impulse from the $k$-th potential.

\paragraph{The self-consistent fixed-point problem}
Note that Eq.~\ref{eq:conditional_Xt} is implicit in $\{X_{t_k}\}$: each \ggkXtkb is evaluated at $X_{t_k}$, which itself depends on the trajectory through \ggkXtkb.
Solving for $\{X_{t_k}\}$ that are self-consistent with the trajectory they induce is therefore a fixed-point problem,
whose structure and solver depend on the choice of distance \md and how its force is estimated, as discussed next in Sec.~\ref{sec:otpfm_forces}.

\paragraph{The OTP-FM training objective}
We train a consistency model $\umappar$ (Sec.~\ref{sec:background_consistency}) by regressing it onto the conditional OTP-FM velocity $\vtotp(x|z)$ (Eq.~\ref{eq:otpfm_conditional_velocity}). 
Using the iMF formulation (Eq.~\ref{eq:imf_regression}), to be explicit, we obtain the loss:
\begin{equation}
    \lotpfm(\theta) := \mathbb{E}_{t_1,t_2,z,x} \|\Upar(x) - \vtotp(x|z)\|^2,\!\!\! \label{eq:otpfm_loss}
\end{equation}
where $z = (x_0, x_1, \batch)$ and $\Upar(x)$ is the parametrization of \upar defined in Eq.~\ref{eq:imf_regression}.
We reiterate, however, that OTP-FM is agnostic to the choice of consistency model; 
we ablate alternative objectives in App.~\ref{app:consistency_models}.

\subsection{Computing forces and fixed-points}
\label{sec:otpfm_forces}

Eq.~\ref{eq:otpfm_loss} introduced the general conditioning variable $z = (x_0, x_1, \batch)$ with \batch a minibatch of size $M$ from the joint coupling \piall, from which the force estimator \ggkXtkb is built.
For the \wass potential, $\nabla g_k(x) = x - \psi^*(x)$ is well-defined pointwise, so \emph{\batch reduces to a single sample per marginal} and the estimator is linear---together yielding the closed-form fixed point and a clean training signal.
MMD and KLD, by contrast, are functionals of the full density \rhotk and degenerate on Diracs, so their estimators require all $M$ samples and yield nonlinear fixed points (Sec.~\ref{sec:ablations}).

\paragraph{The \wass distance}
With a single sample $x_{t_k} \sim \mutk$ per marginal, the conditional OT map collapses to $\psi^*(X_{t_k} | x_{t_k}) = x_{t_k}$,
yielding the one-sample estimator $\ggkXtkb = X_{t_k} - x_{t_k}$ at $\mathcal O(M)$ cost per training step.
Substituted into Eq.~\ref{eq:conditional_Xt} at $t \in \{t_1, \dots, t_K\}$, this yields a linear system in $X_T \equiv [X_{t_1}, \dots, X_{t_K}]^\top$ with the direct closed-form solution
\begin{multline}
    \label{eq:fp_w2_main}
    X_T^\mathrm{FP} = (\identity - A)^{-1}\!\left(\xbase_T - A\,x_T\right), \\
    A_{ik} \equiv w_k\!\left[\Int[2]{\lambda_k}(t_i) - \Int[2]{\lambda_k}(1)\,t_i\right],
\end{multline}
where $x_T \equiv [x_{t_1}, \dots, x_{t_K}]^\top$ and the $K \times K$ inverse $(\identity - A)^{-1}$ depends only on the time grid and kernel and is precomputed once per training (further details in App.~\ref{app:fp_w2}).
The remaining design choice is the joint coupling \piall across times;
we use $\piallot \equiv \prod_{i=0}^{K} \pi^\mathrm{OT}_{t_i, t_{i+1}}$, the product of OT couplings between consecutive marginals, precomputed once across the dataset.
This is closest in spirit to the \wass distance on the marginal problem.
We could alternatively condition on the full minibatch \batch and estimate $\psi^*$ via minibatch OT~\cite{fatras2021minibatch} at each step,
but find the per-iteration overhead impractical relative to this precomputed variant.

\paragraph{The \wassinf distance}
While precomputing the OT map has been possible for all experiments in this work, 
we propose replacing \piallot with the independent coupling \piallind for applications to larger-scale, higher-dimensional datasets where the exact OT coupling may not be practical. 
This is akin to couplings based on entropically regularized OT, \wasse~\cite{cuturi2013sinkhorn},
in the limit of the regularization parameter $\varepsilon \to \infty$; 
hence, we refer to this as the \wassinf potential.
Eq.~\ref{eq:fp_w2_main} applies unchanged; only $x_T$ on the RHS is now drawn from \piallind rather than \piallot.
The resulting estimator $\ggkXtkb = X_{t_k} - x_{t_k}$ paired with \piallind likewise costs $\mathcal O(M)$ per training step and is hence extremely efficient;
furthermore, it does not require the precomputation of the OT map across all marginals.

\paragraph{Beyond \wass: MMD and KLD potentials}
As exploratory generalizations of the design space, we also consider MMD and KLD potentials, with corresponding forces \ggk derived in Table~\ref{tab:potential_derivatives}.
The MMD admits a natural $\mathcal{O}(M^2)$ empirical estimator using a tunable RBF kernel,
while the KLD requires score estimators for \rhotk and \mutk, for which we explore an $\mathcal O(M)$ Gaussian fit and an $\mathcal O(M^2)$ RBF kernel-density estimate (App.~\ref{app:otpfm_derivatives}).
Both yield \emph{nonlinear} fixed-point systems in $X_{t_k}$ and noisier finite-sample gradients than the linear \wass estimator, 
requiring $\leq 5$ Picard iterations with damped Anderson acceleration~\cite{anderson1965iterative} per training step and proving less performant overall (Sec.~\ref{sec:ablations}).

\subsection{Algorithmic details}
\label{sec:otpfm_algo}

\begin{algorithm}
\caption{The $i$th training iteration of OTP-FM}
\label{alg:otpfm_training}
\begin{center}
\begin{minipage}{\linewidth}
\begin{algorithmic}[1]
\STATE \textbf{Input:} $\upar(x, t_1, t_2)$: model; $M$: batch size; $\alpha(i)$: curriculum parameter; 
\STATE Sample $\batch \sim \piall$ \quad\algcoml{shape: $M \times (K + 2) \times D$}
\STATE Sample $(t_1, t_2) \sim p_{t_1, t_2}$ \quad\algcoml{Eq.~\ref{eq:time_sampling}} \\[1mm]
\STATE $x_0, x_1 \leftarrow \batch[:, 0], \batch[:, -1]$
\STATE $\ubase \leftarrow x_1 - x_0$ \\[1mm]
\STATE Compute fixed points $\Xtkfp(\batch)$ (Eq.~\ref{eq:fp_w2_main} for \wass or FP iterations for MMD / KLD) \alglabel{alg:fp_iterations}
\STATE Compute \ucorrktone and \xcorrktone using Eqs.~\ref{eq:otpfm_conditional_velocity},~\ref{eq:conditional_Xt} and $\Xtk^{\mathrm{FP}}$ \\[1mm]
\STATE \algcoml{$\alpha$-weighted Eqs.~\ref{eq:otpfm_conditional_velocity} and~\ref{eq:conditional_Xt}}
\STATE $\vtone \leftarrow \ubase + \alpha(i) \sum_{k=1}^K \ucorrktone$ \quad\algcoml{$M \times D$} 
\STATE $X_{t_1} \leftarrow x_0 + (x_1 - x_0) t_1 + \alpha(i) \sum_{k=1}^K \xcorrktone$ \\[1mm]
\STATE Compute iMF $\Upar(X_{t_1})$ using Eq.~\ref{eq:imf_regression} and \texttt{jvp}
\STATE Compute $\lotpfm\colon$error between \Upar and \vtone
\STATE Update \upar using $\nabla\lotpfm$
\end{algorithmic}
\end{minipage}
\end{center}
\end{algorithm}
\vspace{-1em}

\paragraph{Consistency training}
We use a consistency model such as iMF~\cite{geng2026improved} for few-step inference.
This is crucial for efficient training of the MMD/KLD potentials, 
which require evaluation of the learned \Xtk during training to compute the forces \ggkXtkb and perform FP iterations (Sec.~\ref{sec:otpfm_forces}).
While not necessary for the \wass potential, it nevertheless enables efficient inference.
As is common for image generation, we additionally maintain an exponential moving average (EMA) of the model weights \uparema for both inference and evaluation of \Xtk during training.

\paragraph{Loss function}
We experiment with three weightings of the squared-L2 regression target (Eq.~\ref{eq:otpfm_loss}):
1) unweighted MSE;
2) the adaptive weighting of MeanFlow, replacing $\|\Delta\|^2_2$ with $\|\Delta\|^2_2 / \sg(\|\Delta\|^2_2 + c)^p$, 
where $\Delta$ is the regression residual, $c$ a small constant, and $p$ a tunable exponent; and
3) a learnt log-variance weighting~\cite{karras2024analyzing}.
We find the adaptive weighting with $p = 1$ generally the most performant; comparisons are shown in Apps.~\ref{app:exp_gaussians} and~\ref{app:exp_singlecell}.

\paragraph{Homotopy curriculum}
As the fixed points of Eq.~\ref{eq:conditional_Xt} are not guaranteed to be attractive for arbitrary initializations,
we find it beneficial to introduce a training curriculum that scales the correction terms by a smoothly increasing parameter $\alpha(i) \in [0,1]$ over training iterations $i$,
rewriting Eq.~\ref{eq:conditional_Xt} as a family of fixed-point maps:
\begin{equation}
    \label{eq:homotopy}
    X_t = F_\alpha(X_t) \equiv \xbaset + \alpha \sum_{k=1}^K \xcorrkt, \quad \forall t \in \{t_k\}.
\end{equation}
This is an application of the \textit{homotopy continuation} technique~\cite{allgower2003continuation}:
we start by allowing the model to learn at $\alpha \approx 0$, 
where the problem reduces to CFM with a trivially attractive fixed point,
and gradually transition to the full OTP-FM dynamics at $\alpha = 1$.
Under standard regularity conditions,
the implicit function theorem guarantees the existence of a
locally unique and smooth solution branch $\alpha \mapsto X_{t_k}^\mathrm{FP}(\alpha)$,
and increasing $\alpha$ gradually allows the training dynamics to
track this branch (App.~\ref{app:homotopy_continuation}).
In practice, we use a sigmoid schedule with tunable mean and slope, with ablations in Sec.~\ref{sec:ablations}.
Interestingly, we find this more performant even for the \wass potentials, for which we have a closed-form solution.

\subsection{Theoretical bounds}
\label{sec:theory}

As in standard CFM, $\lotpfm$ is gradient-equivalent to a marginal FM loss regressing $\upar$ onto the induced marginal velocity 
$\vt(x) = \int \vt(x|z)\,\rho_t(z|x)\,\dd{z}$ along $\rho_t(x) = \int \rho_t(x|z)\,q(z)\,\dd{z}$.\footnote{This 
    follows by direct application of~\citet[Thms.~3.1--3.2]{tong2024improving} with the OTP-FM conditioning variables $z = (x_0, x_1, \batch) \sim q$.}
The following two propositions show that OTP-FM with the \wass potential \emph{guarantees alignment of this induced \rhot and the ground-truth intermediate marginals \mutk}, 
with the \wasstwo bound between them \emph{controlled by the potential strength $w$ and training loss $\lotpfm$}.
Proofs are provided in App.~\ref{app:proofs_bounds}.

\begin{restatable}{proposition}{propwbound}
\label{prop:w2_bound}
\textbf{(Marginal alignment bound).}
For \wass-type potentials of strength $w$, away from singular configurations:
$\mathcal{W}_2^2(\rho_{t_k}, \mu_{t_k}) \leq C_k / w^2 + \mathcal{O}(w^{-3})$,
where $C_k$ is a problem-dependent constant independent of $w$.
\end{restatable}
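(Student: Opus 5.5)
The plan is to work with the conditional OTP-FM fixed point for the \wass potential, Eq.~\ref{eq:fp_w2_main}, with all $w_k = w$. Write $A = w B$, where $B_{ik} = \Int[2]{\lambda_k}(t_i) - \Int[2]{\lambda_k}(1)\,t_i$ depends only on the time grid and kernels. Then the conditional positions at the marginal times are
\begin{equation*}
X_T^\mathrm{FP} = (\identity - wB)^{-1}\left(\xbase_T - wB\,x_T\right).
\end{equation*}
\emph{Away from singular configurations} will mean that $B$ is invertible and $\identity - wB$ stays invertible for all large $w$. Under this assumption we expand in powers of $1/w$:
\begin{equation*}
(\identity - wB)^{-1} = -\tfrac{1}{w}B^{-1}\left(\identity - \tfrac{1}{w}B^{-1}\right)^{-1} = -\tfrac{1}{w}B^{-1} - \tfrac{1}{w^2}B^{-2} + \mathcal O(w^{-3}).
\end{equation*}
Substituting gives
\begin{equation*}
X_T^\mathrm{FP} - x_T = -\tfrac{1}{w}B^{-1}\left(\xbase_T - x_T\right) + \mathcal O(w^{-2}),
\end{equation*}
where the $\mathcal O(w^{-2})$ term is again a linear function of $\xbase_T - x_T$, with coefficients depending only on $B$. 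So each conditional sample lands within $\mathcal O(1/w)$ of its paired target sample $x_{t_k}\sim\mu_{t_k}$. The prefactor is controlled by $\|B^{-1}\|$ and by the deviation of the straight-line CFM path from the intermediate sample.

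Next we pass from conditional to marginal statements. The induced marginal $\rho_{t_k}$ is the law of $X_{t_k}^\mathrm{FP}(z)$ with $z=(x_0,x_1,\batch)\sim q$. The pair $(X_{t_k}^\mathrm{FP}(z), x_{t_k})$ is therefore a coupling of $\rho_{t_k}$ and $\mu_{t_k}$, since $x_{t_k}$ is the $k$-th slice of a draw from \piall whose marginal is $\mu_{t_k}$. By the definition of $\mathcal W_2$ as an infimum over couplings,
\begin{equation*}
\mathcal W_2^2(\rho_{t_k},\mu_{t_k}) \le \mathbb E_z \|X_{t_k}^\mathrm{FP} - x_{t_k}\|^2 = \tfrac{1}{w^2}\,\mathbb E_z\big\|[B^{-1}(\xbase_T - x_T)]_k\big\|^2 + \mathcal O(w^{-3}).
\end{equation*}
The cross term between the leading $\mathcal O(1/w)$ piece and the $\mathcal O(w^{-2})$ remainder is what produces the $\mathcal O(w^{-3})$ correction. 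We identify $C_k = \mathbb E_z\|[B^{-1}(\xbase_T-x_T)]_k\|^2$, which is independent of $w$ and finite provided the marginals have finite second moments. The argument works identically for \piallot and \piallind; only the value of $C_k$ changes.

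The main obstacle is making the expansion uniform and honest. First, I must check that $B$ is invertible for the kernels considered. A natural approach is to note that $B_{ik}$ is (minus) the Green's function of $\partial_t^2$ with Dirichlet conditions evaluated against $\lambda_k$. In the limit $\lambda_k\to\delta(t-t_k)$ this gives the matrix $-\min(t_i,t_k)(1-\max(t_i,t_k))$, which is negative definite. I would argue invertibility by continuity for small $\tau$ and label the remaining cases as the excluded \emph{singular configurations}. Second, the remainder must be controlled uniformly in $z$, so that it can be taken outside the expectation. Linearity in $\xbase_T - x_T$ makes this immediate, with only a second-moment assumption needed.

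If the statement is meant for the learned model rather than the exact target, I would add a triangle inequality. It would compare the pushforward under the learned flow with the target $\rho_{t_k}$, with the gap controlled by \lotpfm via a Grönwall-type stability estimate. I would keep this as a separate step so that it does not affect the $w$-dependence above.
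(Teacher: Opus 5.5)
Your proposal is correct and follows the paper's proof essentially step for step: the exact residual identity $X_T - x_T = (\identity - A)^{-1}(\xbase_T - x_T)$, the coupling $(X_{t_k}, x_{t_k})$ bounding $\mathcal{W}_2^2$, and the large-$w$ Neumann expansion $(\identity - w\tilde A)^{-1} = -\tilde A^{-1}/w + \mathcal{O}(w^{-2})$ under invertibility of $\tilde A$. The only differences are minor. You keep $C_k = \mathbb{E}\|[\tilde A^{-1}(\xbase_T - x_T)]_k\|^2$, whereas the paper first applies Cauchy--Schwarz to get the factorized $\|P_k \tilde A^{-1}\|^2\,\mathbb{E}\|\xbase_T - x_T\|^2$; yours is a slightly sharper constant. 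Your Green's-function remark (negative definiteness in the $\lambda_k \to \delta$ limit) also concretely justifies the nonsingularity that the paper simply assumes.
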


\begin{restatable}{proposition}{propendtoend}
\label{prop:end_to_end}
\textbf{(End-to-end bound).}
For Eulerian and Lagrangian consistency losses (App.~\ref{app:consistency_models}), 
combining Prop.~\ref{prop:w2_bound} with flow-map learning bounds from~\citet{boffi2025flowmapmatching}:
$\mathcal{W}_2(\rho^\theta_{t_k}, \mu_{t_k}) \leq D_k \sqrt{\mathcal{L}_\mathrm{OTP-FM}} + \sqrt{C_k} / w + \mathcal{O}(w^{-3/2})$,
where $C_k$ and $D_k$ are problem- and consistency-loss-dependent constants, respectively, both independent of $w$.
\end{restatable}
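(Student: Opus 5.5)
The plan is a triangle-inequality argument. Let $\rho^\theta_{t_k}$ be the marginal produced by the learned flow map $\Psi^\theta_{0,t_k}$ pushed forward from $\mu_0$. Let $\rho_{t_k}$ be the induced OTP-FM marginal from Sec.~\ref{sec:theory}, i.e.\ the law of the conditional fixed-point positions $X_{t_k}^\mathrm{FP}$ under $z\sim q$. Since $\mathcal W_2$ is a metric,
\begin{equation*}
\mathcal W_2(\rho^\theta_{t_k},\mu_{t_k}) \le \mathcal W_2(\rho^\theta_{t_k},\rho_{t_k}) + \mathcal W_2(\rho_{t_k},\mu_{t_k}).
\end{equation*}
The two terms are then bounded separately: the first is a learning error, the second a modelling (soft-constraint) error.

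\textbf{Modelling term.} I take the square root of Prop.~\ref{prop:w2_bound}, $\mathcal W_2^2(\rho_{t_k},\mu_{t_k})\le C_k/w^2+\mathcal O(w^{-3})$, and use $\sqrt{a+b}\le\sqrt a+\sqrt{|b|}$. This gives $\mathcal W_2(\rho_{t_k},\mu_{t_k})\le \sqrt{C_k}/w+\mathcal O(w^{-3/2})$, which are exactly the last two terms of the statement. No further work is needed here beyond noting that we stay in the same non-singular regime assumed in Prop.~\ref{prop:w2_bound}.

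\textbf{Learning term.} First, by the footnote in Sec.~\ref{sec:theory} (Tong et al., Thms.~3.1--3.2), $\lotpfm$ differs from the marginal loss, which regresses onto the induced marginal velocity $u_t$ along $\rho_t$, only by a $\theta$-independent constant. I interpret the statement's $\lotpfm$ as this marginal (excess) loss. Second, I invoke the flow-map-matching bounds of \citet{boffi2025flowmapmatching}. For the Eulerian and Lagrangian consistency losses, and assuming the learned velocity is Lipschitz in $x$ uniformly in $t$, they give
\begin{equation*}
\mathcal W_2^2\bigl((\Psi^\theta_{0,t})_\#\rho_0,\rho_t\bigr)\le D_k^2\,\mathcal L,
\end{equation*}
with a Grönwall-type constant $D_k$ depending on the Lipschitz constant, on $t_k$, and on the consistency loss. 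The bound comes from coupling the true and learned trajectories from the same initial point and applying Grönwall to $\mathbb E\|X_t-X^\theta_t\|^2$.

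Two checks are needed for this step. First, $\rho_0=\mu_0$ in both flows, since the conditional trajectories satisfy $X_0=x_0$. Second, $D_k$ must be independent of $w$. This is the main obstacle: the induced velocity field depends on $w$, so a naive Grönwall argument would put $w$-dependence into the constant. I would try to run Grönwall using the Lipschitz constant of the learned model $u^\theta$ rather than of the target $u_t$, as Boffi et al.\ do, so that $w$ enters only through the loss value. Failing that, I would state $D_k$ as uniform over the bounded range of $w$ considered.

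Combining the two bounds gives the claim.
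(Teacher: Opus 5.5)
Your skeleton is the paper's: the triangle inequality, the square root of Prop.~\ref{prop:w2_bound} for the modelling term (your $\sqrt{a+b}\le\sqrt a+\sqrt{|b|}$ step is exactly what is needed), and the flow-map bounds of \citet{boffi2025flowmapmatching} for the learning term, with \lotpfm read as the marginal loss.

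The gap is in the learning term, at the point you flag yourself, and neither of your proposed resolutions closes it. Running Gr\"onwall with the Lipschitz constant of the learned field does not make $D_k$ independent of $w$. Since $u^\theta$ is trained to track a $w$-dependent target, its Lipschitz constant carries the same dependence; you have only moved it into an assumption. It also does not match the mechanism. A consistency model produces $\rho^\theta_{t_k}$ by the single jump $\hat\Psi^\theta_{0,t_k}$, not by integrating its instantaneous velocity. Moreover, in the Lagrangian residual the term that must be Gr\"onwalled is $u_t(\hat\Psi^\theta_{0,t}(x_0)) - u_t(\Psi_{0,t}(x_0))$. That term involves the \emph{teacher} field, so learned-model regularity cannot substitute for it. Your fallback of uniformity over a bounded range of $w$ does not give the statement, because the statement is a large-$w$ expansion with $w$-independent constants.

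The missing idea is that the Eulerian bound needs no Gr\"onwall step at all. Take a true characteristic $x_s = \Psi_{0,s}(x_0)$ of the OTP-FM marginal velocity. Then $\frac{\mathrm{d}}{\mathrm{d}s}\hat\Psi^\theta_{s,t_k}(x_s) = \partial_s\hat\Psi^\theta_{s,t_k}(x_s) + \nabla\hat\Psi^\theta_{s,t_k}(x_s)\,u_s(x_s)$ is exactly the Eulerian residual. Since $\hat\Psi^\theta_{t_k,t_k} = \mathrm{id}$, the error $\hat\Psi^\theta_{0,t_k}(x_0) - \Psi_{0,t_k}(x_0)$ is minus the integral of that residual over $[0,t_k]$. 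Cauchy--Schwarz and the synchronous coupling then give $\mathcal W_2^2(\rho^\theta_{t_k},\rho_{t_k}) \le \kappa_k$ times the marginal loss, with $\kappa_k$ depending only on $t_k$. The paper gets $\kappa_k = e^{t_k}$ from Boffi et al.'s Prop.~3.10 restricted to $[0,t_k]$. Hence $D_k=\sqrt{\kappa_k}$ is manifestly $w$-independent for the ESD/MeanFlow losses actually used.

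For the Lagrangian (LSD) loss your worry is legitimate. Boffi et al.'s Prop.~3.9 gives $\kappa_k = e^{t_k + 2\int_0^{t_k}|C_t|\,\mathrm{d}t}$, where $C_t$ is the one-sided Lipschitz constant of the teacher velocity that the paper assumes. Independence of $w$ there requires $C_t$ to be bounded uniformly in $w$, an assumption the paper leaves implicit.
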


\section{Related work}
\label{sec:related_work}

Like OTP-FM, previous work in trajectory inference has focused on modeling dynamics through neural ordinary or stochastic differential equations (ODEs or SDEs)~\citep{chen2018neuralode}.
Originally, these methods were trained by repeatedly simulating the ODE or SDE, 
with examples such as TrajectoryNet~\citep{tong2020trajectorynet}, MIOFlow~\citep{huguet2022manifold}, 
and Schr\"odinger-bridge methods like DMSB~\cite{chen2023deep}, NLSB~\cite{koshizuka2023neural}, and DeepRUOT~\cite{zhang2025learning}.
These, while flexible, generally prove computationally prohibitive for large-scale, high-dimensional data.

More recently, flow- and score-matching methods successfully pioneered \emph{simulation-free} training as described above.
However, most focus only on trajectories between two endpoint marginals,
simply stitching trajectories piecewise in the case of multiple marginals, as in I-/OT-CFM and [SF]$^2$M~\citep{tong2024improving,tong2023simulation}.
MMFM~\citep{rohbeck2025mmfm} and 3MSBM~\citep{theodoropoulos2025momentum} are two recent methods that smooth these trajectories,
through cubic spline interpolation and lifting to phase space, respectively.
However, as evidenced by our experiments below, these prescribed smoothing strategies need not describe the dynamics of the physical system.
In contrast, OTP-FM provides a principled framework for deriving smooth, flexible dynamics that lets the data, rather than an ad-hoc interpolation rule, 
determine the trajectories through optimization over the broad design space.
New methods such as MFM~\citep{kapusniak2024metric} and VGFM~\citep{wang2026joint} represent orthogonal improvements to CFM---learning the data manifold and modeling unbalanced cell growth, respectively---that can be combined with our approach.

Finally, methods such as WLF~\citep{neklyudov2024computational}, JKOnet(*)~\citep{bunne2022proximal, terpin2024learning}, and iJKOnet~\citep{persiianov2026learning} directly solve a multimarginal OT-like variational problem over the entire dataset, 
and are promising but fundamentally different computational approaches to solving the \textit{conditional} OT problem within the CFM framework, as in OTP-FM.
We benchmark against all of the above methods.

\section{Experiments}
\label{sec:experiments}

\begin{table*}[ht]
    \caption{Ablation of OTP-FM design choices on EB 100D L2O ($\overline{\text{MMD}}$ on held-out times, lower is better).
    The optimal value is in \textbf{bold}.}
    \label{tab:ablation}
    \centering
    \begin{minipage}[t]{0.165\textwidth}\centering
        \resizebox{\linewidth}{!}{\begin{tabular}{lc}
\toprule
\textbf{Potential} & \textbf{$\overline{\text{MMD}}\!\downarrow$} \\
\midrule
\textbf{$\mathcal{W}_2^2$} & \textbf{0.0675} \\
$\mathcal{W}_2^\infty$ & 0.0813 \\
MMD & 0.0861 \\
KL & 0.0876 \\
\bottomrule
\end{tabular}
}
    \end{minipage}\hspace{1em}%
    \begin{minipage}[t]{0.18\textwidth}\centering
        \resizebox{\linewidth}{!}{\begin{tabular}{lc}
\toprule
\textbf{$w$} & \textbf{$\overline{\text{MMD}}\!\downarrow$} \\
\midrule
$w = 10$ & 0.1431 \\
$w = 100$ & 0.0726 \\
\textbf{$w = 1000$} & \textbf{0.0675} \\
$w = 10000$ & 0.0683 \\
\bottomrule
\end{tabular}
}
    \end{minipage}\hspace{1em}%
    \begin{minipage}[t]{0.16\textwidth}\centering
        \resizebox{\linewidth}{!}{\begin{tabular}{lc}
\toprule
\textbf{$\lambda(t)$} & \textbf{$\overline{\text{MMD}}\!\downarrow$} \\
\midrule
\textbf{Gaussian} & \textbf{0.0675} \\
Box & 0.0681 \\
Triangle & 0.0723 \\
\midrule
\addlinespace[2pt]
$\tau = 0.2$ & 0.0690 \\
\textbf{$\tau = 0.33$} & \textbf{0.0675} \\
$\tau = 0.5$ & 0.0693 \\
\bottomrule
\end{tabular}
}
    \end{minipage}\hspace{1em}%
    \begin{minipage}[t]{0.275\textwidth}\centering
        \resizebox{\linewidth}{!}{\begin{tabular}{lc}
\toprule
\textbf{Curriculum} & \textbf{$\overline{\text{MMD}}\!\downarrow$} \\
\midrule
Linear ($\alpha = 0 \!\to\! 1$) & 0.0800 \\
Constant ($\alpha = 1$) & 0.0687 \\
\textbf{Sigmoid ($\mu=0$, $\beta=6$)} & \textbf{0.0675} \\
Sigmoid ($\mu=1$, $\beta=6$) & 0.0758 \\
Sigmoid ($\mu=0$, $\beta=3$) & 0.0683 \\
Sigmoid ($\mu=0$, $\beta=12$) & 0.0681 \\
\bottomrule
\end{tabular}
}
    \end{minipage}
\end{table*}

\subsection{Synthetic data}

We first validate OTP-FM on 1D Gaussian marginals where exact dynamic OT solutions are available (App.~\ref{app:otpfm_derivations_gaussian}).
Figure~\ref{fig:gaussians} illustrates that OTP-FM faithfully reproduces the qualitative behavior across potential types and configurations.
Empirically, \wass with the \piallot coupling is the most performant and stable to train, with \wassinf and \piallind comparably effective while avoiding the OT precomputation overhead;
the kernel-based MMD and KLD potentials, by contrast, are less stable due to their nonlinear fixed-point problem and noisier gradient estimates (further details in App.~\ref{app:exp_gaussians}).

\subsection{Ablation studies}
\label{sec:ablations}

We ablate the key design choices of OTP-FM on the EB 100D L2O benchmark (described below) in Table~\ref{tab:ablation}
to show sensitivity to 1) potential type, 2) potential strength, 3) \lkt width and shape, and 4) curriculum.
We vary one parameter at a time around the optimum (bold) and use an 8-layer MLP ResNet architecture for \upar (further details in App.~\ref{app:exp_singlecell_architecture}).

Comparing potential types is the central ablation: \wass is most performant and \wassinf second; combined with their efficiency and the training-stability issues of MMD/KLD, 
this establishes \wasstwoinf as the preferred potentials for OTP-FM.
Surprisingly, a sigmoid curriculum outperforms a constant one even for the \wass potential---despite its original motivation as homotopy continuation for MMD/KLD (Sec.~\ref{sec:otpfm_algo})---suggesting training still benefits from starting from the simpler CFM target.
The sensitivity to potential strength and temporal dynamics demonstrates the flexibility OTP-FM offers over prescribed-interpolation approaches: these are crucial axes for the SOTA performance shown next.
Further ablations on the consistency model and loss function, as well as trajectory visualizations, are in App.~\ref{app:exp_singlecell}.

\subsection{Single-cell RNA sequencing}
\label{sec:exp_singlecell}

We perform a comprehensive comparison in Table~\ref{tab:singlecell_combined} with all baselines in Sec.~\ref{sec:related_work} on two scRNA-seq datasets: 
developing embryoid bodies (EB, 5 time intervals $t_0$---$t_4$)~\cite{moon2019visualizing} and CITE-seq of human CD34+ HSPCs (CITE, 4 time intervals)~\cite{burkhardt2022multimodal}, 
operating on up to a 100D PCA representation.
To ensure at least one reported point of comparison with each baseline, we adopt six established protocols:
EB 5D leave-one-marginal-out (L1O) evaluating \wassone at the held-out time~\cite{tong2020trajectorynet};
EB 100D L0O (no holdouts, MMD averaged over training times) and L1O (MMD over held-out times)~\cite{persiianov2026learning};
EB 100D L2O, holding out $t_1$ and $t_3$ with MMD averaged over \emph{all four} times after $t_0$~\cite{theodoropoulos2025momentum};
and CITE 5D/50D L1O, averaging \wassone over held-out times~\cite{kapusniak2024metric}; 5D/50D refer to the first 5/50 PCs.
Evaluating on both training and held-out marginals tests how well each method recovers the training distribution \emph{and} learns physically plausible trajectories that accurately \emph{interpolate} unseen marginals.

We additionally measure each method's training time on consistent hardware and plot training time vs.\ performance for CITE 5D and EB 100D L2O in Fig.~\ref{fig:pareto_main}.
Overall, OTP-FM achieves SOTA results on nearly all metrics in both performance \emph{and} training efficiency, 
completing training within 3--5 minutes on an NVIDIA L40S GPU in all settings.
The one exception in terms of fidelity is the EB 100D L0O experiment, where it is outperformed by plain piecewise CFM and [SF]$^2$M; 
this is the only setting that does not measure interpolation ability --- precisely the limitation of those methods that OTP-FM is designed to address.
Additional results, trajectory visualizations, and dataset, model, and benchmarking details are in App.~\ref{app:exp_singlecell}.

\begin{figure*}[t]
    \begin{minipage}[t]{0.66\textwidth}
        \vspace{0pt}
        \centering
        \captionof{table}{Results on the EB and CITE scRNA-seq datasets for different experimental protocols, as defined in the text.
        Mean and std.\ dev.\ over 5 seeds is shown where available (lower is better for all metrics).
        Results for all methods described in Sec.~\ref{sec:related_work} are shown. 
        A * indicates our own training of the method using the provided code, while gray indicates we were unable to produce reasonable results on that experiment; 
        otherwise, the result is from the prior work.
        Best performing method is in \textbf{bold} and second-best in \textit{italics}.
        }
        \label{tab:singlecell_combined}
        \vspace{0.5em}
        \resizebox{\linewidth}{!}{
        \begin{tabular}{lcccccc}
\toprule
{} & \multicolumn{1}{c}{\textbf{EB 5D}} & \multicolumn{3}{c}{\textbf{EB 100D}} & \multicolumn{1}{c}{\textbf{CITE 5D}} & \multicolumn{1}{c}{\textbf{CITE 50D}} \\
\cmidrule(lr){2-2}\cmidrule(lr){3-5}\cmidrule(lr){6-6}\cmidrule(lr){7-7}
Method & L1O $\overline{\mathcal{W}_1}\!\downarrow$ & L0O $\overline{\text{MMD}}\!\downarrow$ & L1O $\overline{\text{MMD}}\!\downarrow$ & L2O $\overline{\text{MMD}}\!\downarrow$ & L1O $\overline{\mathcal{W}_1}\!\downarrow$ & L1O $\overline{\mathcal{W}_1}\!\downarrow$ \\
\midrule
\multicolumn{7}{l}{\textit{Neural~ODE / SDE: Simulation-based training}} \\
TrajectoryNet & 0.784 & \cellcolor{gray!8}{} & \cellcolor{gray!8}{} & \cellcolor{gray!8}{} & \cellcolor{gray!8}{} & \cellcolor{gray!8}{} \\
NLSB & 0.970 & 0.660 & 0.373{\scriptsize$\pm$0.000} & \cellcolor{gray!8}{} & 0.767\textsuperscript{*} & \cellcolor{gray!8}{} \\
MIOFlow & 0.881\textsuperscript{*} & 0.230 & 0.453{\scriptsize$\pm$0.000} & 1.164\textsuperscript{*} & 0.494\textsuperscript{*} & 38.333{\scriptsize$\pm$0.002} \\
DMSB & 1.775{\scriptsize$\pm$0.429} & 1.976\textsuperscript{*} & 0.579\textsuperscript{*} & 0.518\textsuperscript{*} & 1.705{\scriptsize$\pm$0.160} & 61.188\textsuperscript{*} \\
DeepRUOT & 0.774 & 1.096\textsuperscript{*} & 1.793\textsuperscript{*} & 0.718\textsuperscript{*} & 0.845 & 37.892{\scriptsize$\pm$0.002} \\
\midrule
\multicolumn{7}{l}{\textit{Variational / OT}} \\
WLF-UOT & 0.800{\scriptsize$\pm$0.002} & 1.414\textsuperscript{*} & 0.190\textsuperscript{*} & 0.262\textsuperscript{*} & 0.733{\scriptsize$\pm$0.063} & 37.035{\scriptsize$\pm$0.079} \\
JKOnet* & 3.055\textsuperscript{*} & 0.229{\scriptsize$\pm$0.052} & 0.249{\scriptsize$\pm$0.010} & 2.016\textsuperscript{*} & 1.237\textsuperscript{*} & 242.784\textsuperscript{*} \\
iJKOnet & 1.137\textsuperscript{*} & 0.085{\scriptsize$\pm$0.024} & \textit{0.119}{\scriptsize$\pm$0.001} & 1.347\textsuperscript{*} & 0.594\textsuperscript{*} & 92.540\textsuperscript{*} \\
\midrule
\multicolumn{7}{l}{\textit{Neural~ODE / SDE: Simulation-free training}} \\
3MSBM & \cellcolor{gray!8}{} & \cellcolor{gray!8}{} & \cellcolor{gray!8}{} & 0.135{\scriptsize$\pm$0.014} & \cellcolor{gray!8}{} & \cellcolor{gray!8}{} \\
I-CFM & 0.872{\scriptsize$\pm$0.087} & \textit{0.028\textsuperscript{*}} & 0.390\textsuperscript{*} & 0.284\textsuperscript{*} & 0.965{\scriptsize$\pm$0.111} & 41.834{\scriptsize$\pm$3.284} \\
OT-CFM & 0.790{\scriptsize$\pm$0.068} & 0.079\textsuperscript{*} & 0.151\textsuperscript{*} & 0.093\textsuperscript{*} & 0.882{\scriptsize$\pm$0.058} & 38.756{\scriptsize$\pm$0.010} \\
{[}SF{]}\textsuperscript{2}M & 0.793{\scriptsize$\pm$0.066} & \textbf{0.023\textsuperscript{*}} & 0.248\textsuperscript{*} & 0.157\textsuperscript{*} & 0.920{\scriptsize$\pm$0.049} & 38.524{\scriptsize$\pm$0.293} \\
OT-MFM & 0.713{\scriptsize$\pm$0.039} & 0.188\textsuperscript{*} & 0.201\textsuperscript{*} & 0.105\textsuperscript{*} & 0.724{\scriptsize$\pm$0.070} & \textbf{36.394} \\
VGFM & \textit{0.676} & 0.129\textsuperscript{*} & 0.208\textsuperscript{*} & 0.131\textsuperscript{*} & 0.745 & 37.386 \\
MMFM & 0.899{\scriptsize$\pm$0.010}\textsuperscript{*} & 0.247{\scriptsize$\pm$0.006}\textsuperscript{*} & 0.286{\scriptsize$\pm$0.008}\textsuperscript{*} & 0.101{\scriptsize$\pm$0.003}\textsuperscript{*} & 0.553{\scriptsize$\pm$0.003}\textsuperscript{*} & 44.433{\scriptsize$\pm$0.399}\textsuperscript{*} \\
\cmidrule(lr){1-7}
OTP-FM ($\mathcal{W}_2^2$) & \textbf{0.675}{\scriptsize$\pm$0.019} & 0.041{\scriptsize$\pm$0.002} & \textbf{0.117}{\scriptsize$\pm$0.010} & \textbf{0.068}{\scriptsize$\pm$0.001} & \textbf{0.435}{\scriptsize$\pm$0.003} & \textit{36.966}{\scriptsize$\pm$0.398} \\
OTP-FM ($\mathcal{W}_2^\infty$) & 0.804{\scriptsize$\pm$0.023} & 0.173{\scriptsize$\pm$0.015} & 0.151{\scriptsize$\pm$0.004} & \textit{0.081}{\scriptsize$\pm$0.002} & \textit{0.466}{\scriptsize$\pm$0.022} & 40.312{\scriptsize$\pm$0.239} \\
\bottomrule
\end{tabular}

        }
    \end{minipage}\hfill
    \begin{minipage}[t]{0.33\textwidth}
        \vspace{0pt}
        \centering
        \includegraphics[width=\linewidth]{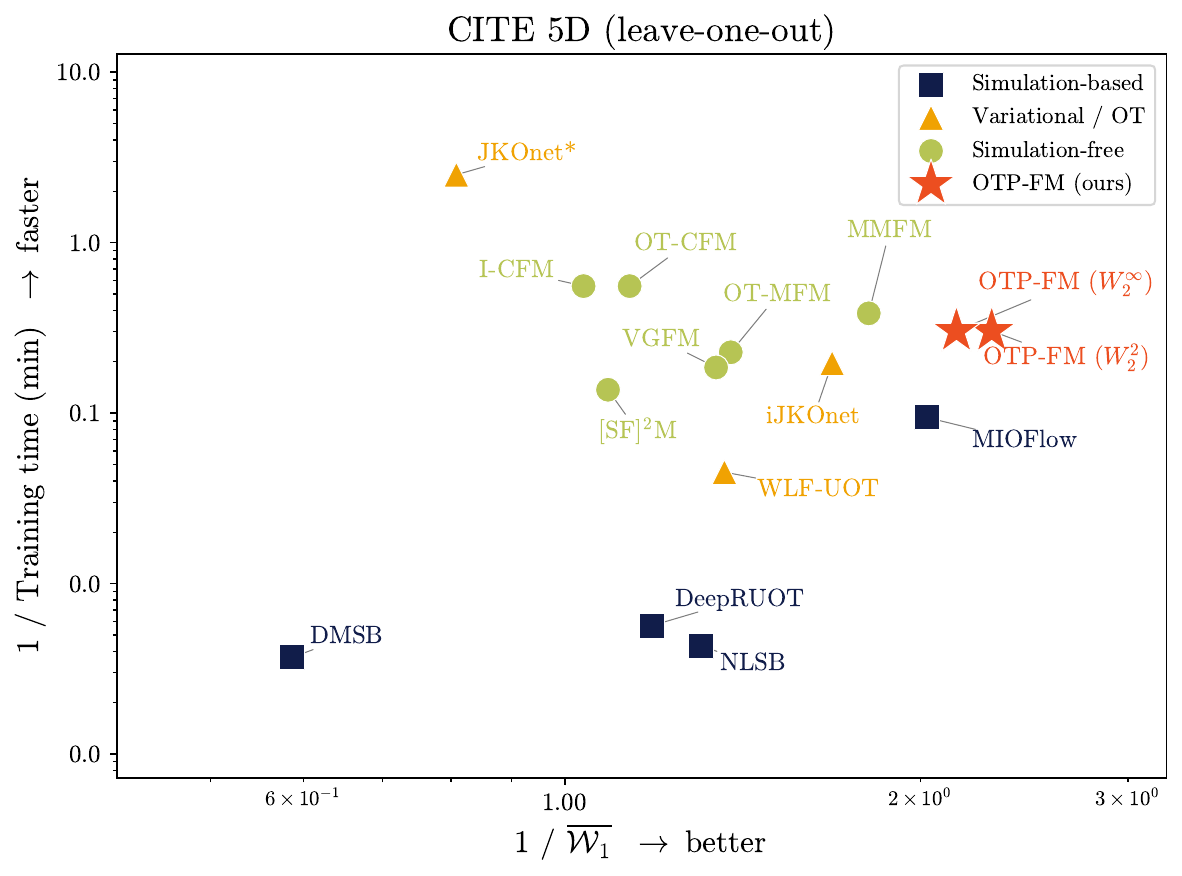}
        \includegraphics[width=\linewidth]{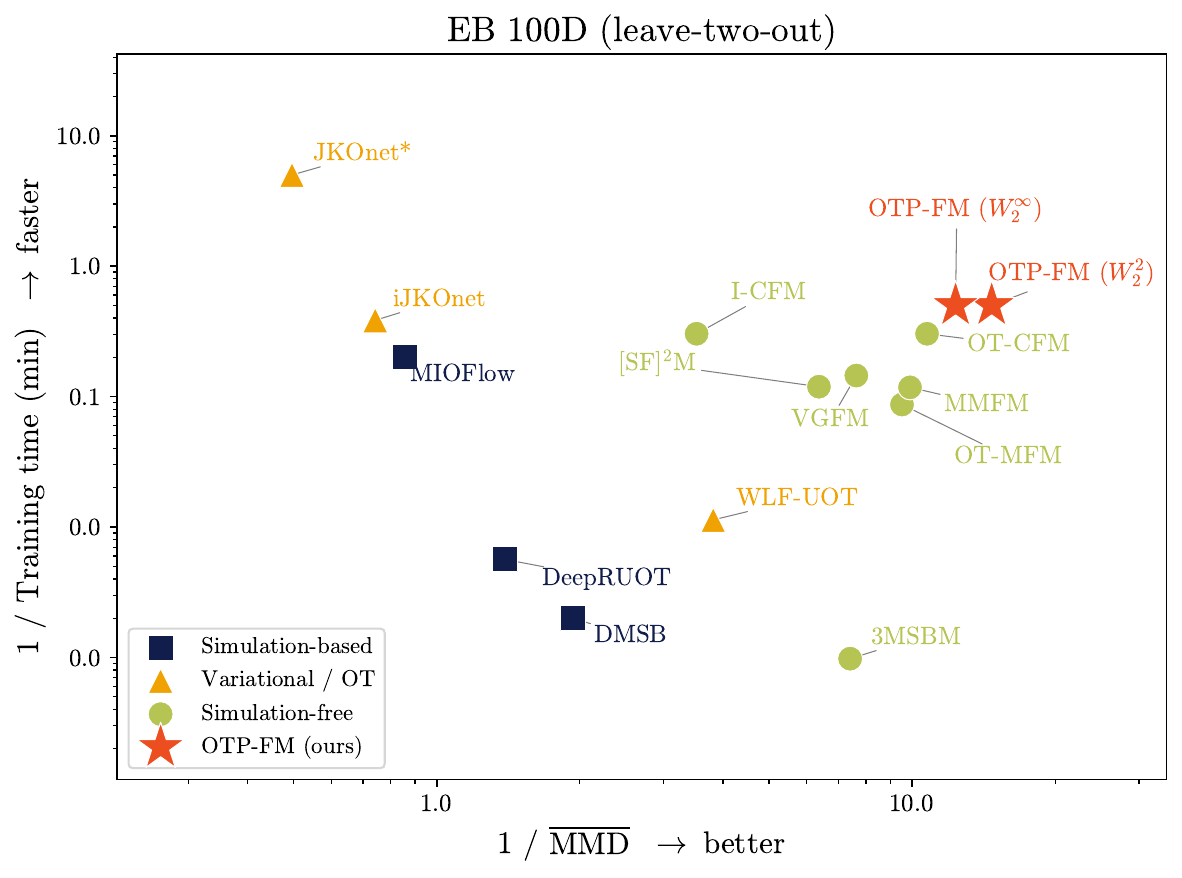}
        \vspace{-2em}
        \captionof{figure}{Training time vs. performance of different methods for the CITE 5D L1O (top) and EB 100D L2O (bottom) experiments. 
        Red stars denote OTP-FM.
        Top right is better.
        }
        \label{fig:pareto_main}
    \end{minipage}
    \vspace{-0.1em}
\end{figure*}



\subsection{Gulf of Mexico and Beijing air quality}
\label{sec:exp_gom_beijing}

We further evaluate on two non-biological datasets: ocean current particle transport in the Gulf of Mexico (2D, 9 timepoints)~\cite{shen2025multimarginal} and hourly particle concentrations from Beijing air quality monitoring stations (1D, 13 timepoints)~\cite{chen2017beijing}. Following~\citet{theodoropoulos2025momentum}, Table~\ref{tab:gom_beijing} reports the average \wasstwo across four held-out timepoints each as well as the remaining training timepoints.
We compare against the two simulation-free methods closest in motivation to OTP-FM---MMFM and 3MSBM, both of which learn smooth multimarginal trajectories through \emph{prescribed} interpolation strategies (Fig.~\ref{fig:method_overview}, Sec.~\ref{sec:related_work}).
OTP-FM significantly outperforms both at interpolating the held-out times, particularly on Beijing, which we attribute directly to its flexible, data-driven dynamics; trajectory visualizations and further analysis are in Apps.~\ref{app:exp_gom} and~\ref{app:exp_beijing}.

\begin{table}[ht]
    \caption{Results on GoM and Beijing air quality datasets showing average \wasstwo across held-out and training timepoints (lower is better). 
    MMFM results are with our training (marked with *) and 3MSBM from the original paper, which does not report training \wasstwo scores on the Beijing air quality dataset.
    Best performing method is in \textbf{bold} and second-best in \textit{italics}.}
    \label{tab:gom_beijing}
    \centering
    \resizebox{\columnwidth}{!}{
        \begin{tabular}{lcccc}
\toprule
{} & \multicolumn{2}{c}{\textbf{Gulf of Mexico}} & \multicolumn{2}{c}{\textbf{Beijing Air Quality}} \\
\cmidrule(lr){2-3}\cmidrule(lr){4-5}
Method & Holdout $\overline{\mathcal{W}_2}\!\downarrow$ & Train $\overline{\mathcal{W}_2}\!\downarrow$ & Holdout $\overline{\mathcal{W}_2}\!\downarrow$ & Train $\overline{\mathcal{W}_2}\!\downarrow$ \\
\midrule
3MSBM & 0.135{\scriptsize$\pm$0.012} & \textbf{0.050}{\scriptsize$\pm$0.030} & 38.09{\scriptsize$\pm$3.67} & \cellcolor{gray!8}{} \\
MMFM & 0.150{\scriptsize$\pm$0.005}\textsuperscript{*} & 0.083{\scriptsize$\pm$0.008}\textsuperscript{*} & 38.89{\scriptsize$\pm$3.92}\textsuperscript{*} & \textbf{11.88}{\scriptsize$\pm$2.81}\textsuperscript{*} \\
\cmidrule(lr){1-5}
OTP-FM ($\mathcal{W}_2^2$) & \textbf{0.107}{\scriptsize$\pm$0.001} & \textit{0.059}{\scriptsize$\pm$0.006} & \textbf{17.28}{\scriptsize$\pm$0.56} & \textit{16.52}{\scriptsize$\pm$2.33} \\
OTP-FM ($\mathcal{W}_2^\infty$) & \textit{0.121}{\scriptsize$\pm$0.003} & 0.069{\scriptsize$\pm$0.009} & \textit{23.15}{\scriptsize$\pm$1.08} & 24.08{\scriptsize$\pm$1.79} \\
\bottomrule
\end{tabular}

    }
\end{table}



\section{Conclusion}
\label{sec:conclusion}

We introduced OTP-FM, a principled generalization of multimarginal CFM
grounded in dynamic OT with soft potential energy terms to steer learnt trajectories toward intermediate marginals.
It recovers piecewise CFM in the singular-potential limit and unlocks a broad design space for the spatiotemporal dynamics.
We derived exact conditional solutions and an efficient simulation-free training algorithm, 
proved Wasserstein bounds on the alignment with ground-truth marginals controlled by potential strength and training loss, 
and demonstrated that OTP-FM pushes the frontier in both performance and training efficiency across biological, oceanographic, and meteorological datasets.

\paragraph{Practical recommendations}
Across the design space, \wasstwoinf potentials are clearly preferred: they yield closed-form, low-variance gradients, a linear fixed-point system, and the best empirical performance.
Our out-of-the-box recipe is $\wassinf$ with strength $w \approx 1000$, equally-spaced Gaussian \lkt, and a sigmoid curriculum (Sec.~\ref{sec:ablations}, App.~\ref{app:exp_singlecell}); 
$w$ is the most impactful parameter to tune, with \lkt shape and width having secondary effects. 
Switching to \wass with the \piallot coupling can provide further improvement when the $\mathcal{O}(N^3)$ OT precomputation is feasible.

\paragraph{Limitations and future work}
While matching intermediate marginals encourages more plausible trajectories, 
it does not alone guarantee physically meaningful interpolated states, particularly for sparsely-sampled or noisily-labeled data;
methods that explicitly learn the data manifold such as MFM~\citep{kapusniak2024metric} and pixel MeanFlow~\citep{lu2026onestep}
are important orthogonal directions for improvement.
Future work will also explore incorporating domain-specific inductive biases such as cell growth in VGFM~\citep{wang2026joint}, 
and conditional inference as in MMFM~\citep{rohbeck2025mmfm}.
Like CFM, OTP-FM solves the \emph{conditional} OTP problem given a joint coupling \piall, rather than jointly optimizing the dynamics and coupling toward the full marginal variational problem.
A rectified-flow-style retraining~\citep{liu2023flow} 
or combining our conditional formulation with new direct multimarginal-OT solvers such as WLF~\citep{neklyudov2024computational}
are interesting directions toward self-consistent dynamics--coupling pairs.

Beyond the \wass potential, our MMD and KLD force estimators rely on kernel-based estimates of the full marginal density,
which are noisier than \wass's linear, pointwise gradient and yield nonlinear fixed-point systems.
Improving them via adaptive kernels or learned score models could unlock a richer family of physically-motivated potentials.
Finally, while \wassinf is a strong default, OTP-FM's broad design space imposes a tuning burden absent from prescriptive methods;
automating this search, exploring a broader parameter space, and deploying to novel scientific domains are all exciting directions for future work.

\section*{Impact Statement}
This work introduces a principled and computationally efficient framework for learning the latent dynamics of physical systems from discrete empirical snapshots.
In biological contexts, OTP-FM enables reconstructing continuous trajectories from sparsely sampled, high-dimensional observations.
In climate science and oceanography, the same formulation improves dynamical modeling from limited temporal data.
The efficiency and performance of OTP-FM lowers the barrier to trajectory inference on new, large-scale scientific datasets.

\bibliography{bibliography}
\bibliographystyle{icml2026}

\newpage
\appendix
\onecolumn

\section{Proofs of theorems}
\label{app:proofs}
\subsection{Dynamic OT without a potential}
\label{app:ot_dynamic}

Proofs for the E-L equations and straight-line trajectories for the dynamic OT problem
can be found in standard texts such as~\citet{Villani2009}, and are reproduced here for
completeness and later reference in App.~\ref{app:ot_dynamic_potential}.

\begin{theorem}[E-L equations]
\label{thm:dynamic_ot_soln}
For $V = 0$, minimizers of Eq.~\ref{eq:dynamic_ot_action} satisfy the following Euler--Lagrange (E-L) equations:
\begin{equation}
    \vt = \nabla \varphi_t, \qquad \partial_t \varphi_t(x) + \frac{\|\nabla \varphi_t(x)\|^2}{2} = 0,
\end{equation}
along with the continuity equation.
\end{theorem}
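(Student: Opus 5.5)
We take first variations of the action in Eq.~\ref{eq:dynamic_ot_action} with $V=0$, treating $\rho_t$, $\vt$, and the multiplier $\varphi_t$ as independent fields and requiring stationarity under compactly supported perturbations. The boundary values $\rho_0=\mu_0$ and $\rho_1=\mu_1$ are held fixed, so variations $\delta\rho$ vanish at $t=0,1$. We also assume enough regularity ($\rho_t>0$ on the relevant support, $\varphi$ and $u$ smooth with decay at infinity) that integration by parts produces no boundary terms. This is the standard formal Lagrangian argument.

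\textbf{Variation in $\varphi$.} The Lagrange-multiplier term is linear in $\varphi_t$, so setting its variation to zero for arbitrary $\delta\varphi$ directly returns the continuity equation $\partial_t\rho_t+\nabla\cdot(\rho_t\vt)=0$.

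\textbf{Variation in $u$.} First integrate the constraint term by parts in space, using $\int\varphi\,\nabla\cdot(\rho u)\,\dx=-\int\rho\,u\cdot\nabla\varphi\,\dx$. The $u$-dependent part of the integrand then becomes $\rho_t\bigl(\tfrac12\|\vt\|^2-\vt\cdot\nabla\varphi_t\bigr)$. Its variation is $\rho_t(\vt-\nabla\varphi_t)\cdot\delta u$, and requiring this to vanish for all $\delta u$ gives $\vt=\nabla\varphi_t$ wherever $\rho_t>0$.

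\textbf{Variation in $\rho$.} Integrate the $\varphi\,\partial_t\rho$ term by parts in time, giving $-\int\rho\,\partial_t\varphi$ plus endpoint terms. These endpoint terms drop because $\delta\rho_0=\delta\rho_1=0$. The integrand is then linear in $\rho$:
\[
\rho_t\Bigl(\tfrac12\|\vt\|^2-\partial_t\varphi_t-\vt\cdot\nabla\varphi_t\Bigr).
\]
Stationarity under arbitrary $\delta\rho$ therefore requires $\tfrac12\|\vt\|^2-\partial_t\varphi_t-\vt\cdot\nabla\varphi_t=0$. Substituting $\vt=\nabla\varphi_t$ from the previous step gives $\tfrac12\|\nabla\varphi_t\|^2-\partial_t\varphi_t-\|\nabla\varphi_t\|^2=0$.

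This yields $\partial_t\varphi_t+\tfrac12\|\nabla\varphi_t\|^2=0$, with the sign as stated in Eq.~\ref{eq:dynamic_ot_soln_hj}. The main point needing care is sign bookkeeping. The action in Eq.~\ref{eq:dynamic_ot_action} adds $+\varphi[\partial_t\rho+\nabla\cdot(\rho u)]$, so both integrations by parts introduce minus signs, and we will track them carefully to confirm the Hamilton--Jacobi equation has the stated sign rather than the opposite one.

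A remaining subtlety is that the $\rho$-variation only determines the equation on the support of $\rho_t$, or up to a $t$-dependent additive constant in $\varphi$ that does not affect $\nabla\varphi$. We will state the result on the support of $\rho_t$ and note this gauge freedom. A fully rigorous treatment (Benamou--Brenier convex duality with $\varphi$ a viscosity solution) is beyond what is needed here, and we will refer to \citet{Villani2009} for it.
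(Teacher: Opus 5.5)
Your proposal is correct and follows essentially the same route as the paper: integrate the multiplier term $\varphi_t[\partial_t\rho_t + \nabla\cdot(\rho_t u_t)]$ by parts in time and space, discard the endpoint term $\int[\varphi_t\rho_t]_0^1\,\mathrm{d}x$ because the boundary densities are fixed, and read off the continuity equation and $u_t=\nabla\varphi_t$. Substituting the latter into the $\rho$-variation then gives the Hamilton--Jacobi equation with the stated sign, and your remarks on the support of $\rho_t$ and on regularity make explicit what the paper's formal derivation leaves implicit.
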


\begin{proof}
We start with the dynamic OT action without a potential:
\begin{equation}
    \label{eq:dynamic_ot_action_nopot}
    S[\mathcal L] = \int_0^1 \int_{\mathbb{R}^d} \Big[\frac{1}{2}\|\vt(x)\|^2\rho_t(x)
    + \varphi_t(x) [\partial_t \rho_t(x) + \nabla \cdot (\rho_t(x) \vt(x))] \Big]\dx \dt.
\end{equation}
Integrating the second term by parts yields:
\begin{equation}
    \label{eq:dynamic_ot_action_ibp}
    \int_0^1 \int_{\mathbb{R}^d} \varphi_t(x) [\partial_t \rho_t(x) + \nabla \cdot (\rho_t(x) \vt(x))] =
    \int_{\mathbb{R}^d}\left[\varphi_t(x)\rho_t(x)\right]^1_0 \dx -
    \int_0^1 \int_{\mathbb{R}^d} \left[ \partial_t \varphi_t(x) + \vt(x) \cdot \nabla \varphi_t(x)\right]\rho_t(x)\dx\dt
\end{equation}
For fixed boundary conditions, the first term is simply a constant with respect to $u$ and $\rho$.

The minima can thus be found by taking the functional derivatives of $S$:
\begin{align}
    \frac{\delta S}{\delta \varphi} 
        &= \partial_t \rho_t(x) + \nabla \cdot (\rho_t(x) \vt(x)) = 0 &&
        && \text{(Continuity equation)} \\
    \label{eq:dynamic_ot_action_nopot_vphi_derivation}
    \frac{\delta S}{\delta u} 
        &= \left( \vt(x) - \nabla \varphi_t(x) \right) \rho_t(x) = 0 
        &\Rightarrow& \, \vt(x) =  \nabla \varphi_t(x) \\
    \label{eq:dynamic_ot_action_nopot_hj_derivation}
    \frac{\delta S}{\delta \rho} 
        &= \frac{\|\vt(x)\|^2}{2} - \left[ \partial_t \varphi_t(x) + \vt(x) \cdot \nabla \varphi_t(x)\right] = 0 
        &\Rightarrow& \, \partial_t \varphi_t(x) + \frac{\|\nabla \varphi_t(x)\|^2}{2} = 0 
        && \text{(H-J equation)}
\end{align}
where in Eq.~\ref{eq:dynamic_ot_action_nopot_hj_derivation} we plugged in Eq.~\ref{eq:dynamic_ot_action_nopot_vphi_derivation}.
\end{proof}

\begin{corollary}[Straight-line trajectories]
\label{cor:dynamic_ot_trajectory}
Minimizers of Eq.~\ref{eq:dynamic_ot_action} (with $V = 0$) yield straight-line sample trajectories $X_t = (1-t)X_0 + t\psi(X_0)$ with $\dot{X}_t = \psi(X_0) - X_0$, where $\psi$ is the static OT map for cost $c_2$.
\end{corollary}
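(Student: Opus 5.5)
We prove Corollary~\ref{cor:dynamic_ot_trajectory} by turning the Eulerian E-L system of Theorem~\ref{thm:dynamic_ot_soln} into a Lagrangian statement about individual trajectories, and then identifying the endpoint map with the static OT map.

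First, fix a minimizer $(\rho_t, \vt)$ and a sample $X_0 \sim \mu_0$. Let $X_t$ solve $\dot X_t = \vt(X_t) = \nabla\varphi_t(X_t)$. Differentiating along the trajectory with the chain rule gives
\begin{equation*}
\ddot X_t = \partial_t \nabla\varphi_t(X_t) + \nabla^2\varphi_t(X_t)\,\nabla\varphi_t(X_t) = \nabla\!\left[\partial_t\varphi_t + \tfrac12\|\nabla\varphi_t\|^2\right]\!(X_t) = 0 .
\end{equation*}
The middle equality uses the symmetry of the Hessian, so that $\nabla^2\varphi\,\nabla\varphi = \nabla\tfrac12\|\nabla\varphi\|^2$. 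The last equality is the H-J equation. Hence $\dot X_t$ is constant in time, $X_t = X_0 + t(X_1 - X_0)$, and $\dot X_t = X_1 - X_0$.

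Second, define the endpoint map $\psi(X_0) := X_1$. Because the flow pushes $\rho_0 = \mu_0$ to $\rho_1 = \mu_1$, we have $\psi_\#\mu_0 = \mu_1$. Straight-line trajectories make the cost computable: $\mathcal L_\mathrm{DOT} = \tfrac12\int\|\psi(x)-x\|^2\,\dd{\mu_0(x)}$. Conversely, any admissible Monge map $\psi'$ for cost $c_2/2$ yields an admissible dynamic competitor through the displacement interpolation $\rho_t = ((1-t)\mathrm{id}+t\psi')_\#\mu_0$, whose kinetic energy is at most $\tfrac12\int\|\psi'-x\|^2\,\dd{\mu_0}$ (the Benamou--Brenier inequality, via Jensen applied to the velocity field averaging the particle velocities). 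Therefore the minimizing dynamic value equals the static OT value, and $\psi$ attains it. So $\psi$ is an optimal static map, unique $\mu_0$-a.e.\ by Brenier's theorem when $\mu_0$ is absolutely continuous.

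The main obstacle is regularity: the chain-rule step needs $\varphi_t$ to be $C^2$ along trajectories. The H-J equation holds classically only on the support of $\rho_t$, and only in a viscosity or a.e.\ sense in general. As in the paper's formal derivation, we will assume smoothness. We can also sidestep this entirely with the Brenier route: take $\psi = \nabla\phi$ with $\phi$ convex. Then $\varphi_t$ is given by the Hopf--Lax formula, and one checks directly that $x_0 \mapsto (1-t)x_0 + t\nabla\phi(x_0)$ is injective for $t<1$. This makes the velocity field well defined and reproduces the straight lines without differentiating $\varphi$ twice.
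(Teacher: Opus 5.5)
Your proposal is correct and follows the paper's proof. The chain-rule computation, combined with the gradient of the H-J equation (using symmetry of the Hessian), gives $\ddot X_t = 0$, and the endpoint is then identified with the static OT map. Where the paper simply invokes the equivalence of dynamic and static OT, you justify it directly: the endpoint map is an admissible Monge map whose cost equals the dynamic minimum, and displacement interpolation plus Jensen gives the reverse inequality. You also make explicit the smoothness assumption the paper uses silently. Your Brenier/Hopf--Lax side route, however, only shows that the displacement interpolation is a straight-line minimizer; to conclude this for \emph{every} minimizer it would also need uniqueness of the Benamou--Brenier minimizer, which does hold for absolutely continuous $\mu_0$. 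Your main argument does not depend on this.
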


\begin{proof}
We first take the gradient of the H-J equation:
\begin{equation}
    \label{eq:dynamic_ot_hj_gradient}
    \partial_t \nabla \varphi + (\nabla^2\varphi)\,\nabla\varphi = 0,
\end{equation}
where $\nabla^2\varphi$ is the Hessian of $\varphi$.
Next, the time derivative of the sample velocity $\dot{X}_t$ (Eq.~\ref{eq:dynamic_ot_action_nopot_vphi_derivation}):
\begin{equation}
    \begin{aligned}
        \frac{\dd{u(X_t)}}{\dt} &= \frac{\dd}{\dt}\nabla\varphi_t(X_t) \\
         &= \partial_t \nabla\varphi_t(X_t) + (\nabla^2\varphi)(X_t)\,\dot{X}_t \\
         &\stackrel{(i)}{=} \partial_t \nabla\varphi_t(X_t) + (\nabla^2\varphi)(X_t)\,\nabla\varphi_t(X_t) \\
         &\stackrel{(ii)}{=} 0,
    \end{aligned}
\end{equation}
where in $(i)$ we applied Eq.~\ref{eq:dynamic_ot_action_nopot_vphi_derivation} again and in $(ii)$ used Eq.~\ref{eq:dynamic_ot_hj_gradient}.
Thus, the sample trajectories obey
\begin{equation}
    \label{eq:dynamic_ot_sample_ode}
    \ddot{X}_t = 0.
\end{equation}
As the dynamic OT solution is equivalent to that of the static OT problem with cost $c_2$,
given a sample $X_0 \sim \mu_0$, its trajectory must lead to $X_1 = \psi(X_0) \sim \mu_1$.
Therefore, integrating Eq.~\ref{eq:dynamic_ot_sample_ode} with boundary conditions $X_0$ at $t=0$ and $\psi(X_0)$ at $t=1$ yields $X_t = (1-t)X_0 + t\psi(X_0)$ and $\dot{X}_t = \psi(X_0) - X_0$.
\end{proof}

\subsection{Piecewise multimarginal CFM as the hard-potential limit of OTP}
\label{app:piecewise_hard}

We make the connection between standard piecewise multimarginal CFM and the OTP variational problem (Sec.~\ref{sec:dynamicotp}) precise via two main results:
piecewise multimarginal CFM is equivalent to a single CFM training regressing onto conditional solutions of the hard-constrained problem $\mathcal L_\mathrm{HC}$ (Lemma~\ref{lemma:piecewise_cfm_equivalence}), 
and OTP minimizers converge to the solutions of the hard-constrained problem in the singular, hard-penalty limit of $w_k \to \infty$, $\lambda_k \to \delta(t-t_k)$ (Prop.~\ref{prop:hard_constraint_limit}, stated for $\mathcal D = \wass$).
In order to prove the former, we first establish that $\mathcal L_\mathrm{HC}$ decomposes additively into $K+1$ independent Benamou--Brenier (BB) subproblems (Lemma~\ref{lemma:hc_decoupling}),
where $K$ is the number of intermediate marginals.

Throughout this subsection, we assume $0 = t_0 < t_1 < \cdots < t_K < t_{K+1} = 1$ and that all marginals $\{\mutk\}_{k=0}^{K+1}$ are absolutely continuous probability measures on $\mathbb{R}^d$,
uniformly bounded in $\mathcal P_2(\mathbb{R}^d)$ --- i.e., with finite second moments --- so that $\wass(\mutk, \mu_{t_{k+1}}) < \infty$ for all $k$.
An \emph{admissible flow} is a pair $(\rhot, \vt)_{t\in[0,1]}$ with $\rhot \in \mathcal{P}_2(\mathbb{R}^d)$ a curve of probability measures and 
$\vt$ a velocity field satisfying the continuity equation and endpoint boundary conditions $\rho_0 = \mu_0$, $\rho_1 = \mu_1$.
We define intervals $I_k \equiv [t_k, t_{k+1}]$ and write the rescaled time $s_k(t) := (t - t_k)/(t_{k+1} - t_k) \in [0,1]$ for convenience.

We refer to \emph{piecewise multimarginal CFM} as the standard procedure from, e.g.,~\citet{tong2024improving,tong2023simulation}: 
pick any joint coupling $\piall$ of $(\mu_{t_0}, \dots, \mu_{t_{K+1}})$ 
and train $K+1$ independent CFM models $u^k_{\theta_k}: \mathbb R^d \times I_k \to \mathbb R^d$ ($k = 0, \dots, K$), one per consecutive interval $I_k$ between $(\mutk, \mu_{t_{k+1}})$, 
against the standard CFM regression losses $\mathcal L^\mathrm{piecewise}_k(\theta_k)$ with endpoint pairs drawn from the marginal of $\piall$ on $(x_{t_k}, x_{t_{k+1}})$ 
and conditional-OT solutions between them as regression targets (Eq.~\ref{eq:conditional_velocity}), 
then stitch the $K+1$ learned flows at each $\{t_k\}_{k=1}^K$:
\begin{equation}
    \label{eq:piecewise_stitch}
    u^\mathrm{piecewise}_{\theta}(x, t) = \sum_{k=0}^{K} \indicator_{I_k}(t)\,u^k_{\theta_k}(x, t), \qquad \theta \equiv (\theta_0, \dots, \theta_K),
\end{equation}
where $\indicator_{I_k}$ is the indicator function of $I_k$.\footnote{The overlap at the $K$ interior endpoints $\{t_k\}_{k=1}^K$ is measure zero and so irrelevant to any time integral.}

\begin{lemma}[Additive decomposition of $\mathcal L_\mathrm{HC}$]
\label{lemma:hc_decoupling}
Under the setup above, the hard-constrained action $\mathcal{L}_\mathrm{HC}$ (Eq.~\ref{eq:hard_constrained_ot}) decomposes additively into $K+1$ independent Benamou--Brenier dynamic OT subproblems between consecutive marginals,
\begin{equation}
    \label{eq:hc_decoupling}
    \mathcal{L}_\mathrm{HC} \;=\; \sum_{k=0}^{K} \min_{(\rho, u)|_{I_k}} \int_{I_k}\!\!\int \tfrac12 \|\vt\|^2 \rhot\,\dx\,\dt \;=\; \sum_{k=0}^{K} \frac{\wass(\mutk, \mu_{t_{k+1}})}{2(t_{k+1}-t_k)},
\end{equation}
where the inner minimization on $I_k$ is over admissible flows constrained by $\rhotk = \mutk$ and $\rho_{t_{k+1}} = \mu_{t_{k+1}}$.
\end{lemma}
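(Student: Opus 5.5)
The proof has two parts: show that the hard-constrained problem splits into independent problems on the intervals $I_k$ (a restriction/gluing argument), then evaluate each piece with the rescaled Benamou--Brenier formula. The Lagrange-multiplier term $\varphi_t[\partial_t\rho_t+\nabla\cdot(\rho_t\vt)]$ in Eq.~\ref{eq:hard_constrained_ot} only encodes the continuity equation. I will therefore read $\mathcal L_\mathrm{HC}$ as the infimum of the kinetic action $\int_0^1\!\int \tfrac12\|\vt\|^2\rhot\,\dx\,\dt$ over admissible flows with $\rhotk=\mutk$ for all $k$.

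\emph{Lower bound.} Take any admissible flow satisfying all the constraints. Its restriction to $I_k$ is a curve from $\mutk$ to $\mu_{t_{k+1}}$ that satisfies the continuity equation in the distributional sense on $I_k$, so it is feasible for the $k$-th subproblem. The action is an integral in $t$, and the $I_k$ cover $[0,1]$ with overlaps only at the finitely many points $t_k$, which have measure zero. Hence the total action equals the sum of the restricted actions, and each restricted action is at least the corresponding subproblem minimum. Taking the infimum gives $\mathcal L_\mathrm{HC}\ge\sum_k(\text{subproblem minimum})$.

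\emph{Upper bound by gluing.} For each $k$, pick an admissible flow $(\rho^k,u^k)$ on $I_k$ whose action is within $\epsilon/(K+1)$ of the subproblem infimum; by Benamou--Brenier this can be taken to be the actual minimizer. Concatenate them as in Eq.~\ref{eq:piecewise_stitch}. The concatenated curve is continuous at each $t_k$, because both adjacent pieces equal $\mutk$ there. It is the step I expect to need the most care: checking that the glued pair solves the continuity equation weakly on all of $[0,1]$. I would test against $\phi\in C_c^\infty((0,1)\times\mathbb R^d)$ and split $\int_0^1$ into $\sum_k\int_{I_k}$. Each piece contributes the boundary terms $\int\phi(t_{k+1},\cdot)\,\dd\mu_{t_{k+1}}-\int\phi(t_k,\cdot)\,\dd\mutk$. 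These telescope because the densities match at the junctions, and the outermost terms vanish by compact support. This gives $\mathcal L_\mathrm{HC}\le\sum_k(\text{subproblem minimum})+\epsilon$.

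\emph{Evaluation of each piece.} Apply the time rescaling $s=s_k(t)$ and set $\tilde\rho_s=\rho_{t(s)}$, $\tilde u_s=(t_{k+1}-t_k)\,u_{t(s)}$, with $\Delta_k\equiv t_{k+1}-t_k$. This substitution preserves the continuity equation and maps admissible flows on $I_k$ bijectively to admissible flows on $[0,1]$. The action transforms as
\[
\int_{I_k}\!\!\int\tfrac12\|u\|^2\rho\,\dx\,\dt=\frac{1}{\Delta_k}\int_0^1\!\!\int\tfrac12\|\tilde u\|^2\tilde\rho\,\dx\,\dd s .
\]
The Benamou--Brenier theorem~\citep{BenamouBrenier2000,Villani2009} states that the minimum of the right-hand integral is $\tfrac12\wass(\mutk,\mu_{t_{k+1}})$; it is finite under the uniform $\mathcal P_2$ assumption and attained, by the displacement interpolation of Cor.~\ref{cor:dynamic_ot_trajectory}. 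Summing over $k$ yields Eq.~\ref{eq:hc_decoupling}.
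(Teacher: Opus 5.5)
Your proposal is correct and follows the paper's route: the kinetic energy is additive over the $I_k$, which decouples the constrained problem, and then the rescaling $s=s_k(t)$, $\tilde u=(t_{k+1}-t_k)u$ together with Benamou--Brenier evaluates each piece. The paper simply asserts the decoupling (``no constraint couples distinct intervals''); your restriction lower bound and gluing upper bound, including the telescoping check of the weak continuity equation at the junctions $t_k$, supply the justification that assertion leaves implicit.
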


\begin{proof}
Let $(\rhot, \vt)$ be an admissible flow that additionally satisfies $\rhotk = \mutk$ for all $k$. 
The kinetic energy is additive over the intervals $I_k$,
\begin{equation}
    \int_0^1\!\!\int \tfrac12\|\vt\|^2 \rhot\,\dx\,\dt
    \;=\; \sum_{k=0}^{K} \int_{I_k}\!\!\int \tfrac12 \|\vt\|^2 \rhot\,\dx\,\dt,
\end{equation}
and the only active constraints over each interval are the continuity equation $(\rhot, \vt)|_{I_k}$ and the endpoint conditions $\rhotk = \mutk$, $\rho_{t_{k+1}} = \mu_{t_{k+1}}$; no constraint couples distinct intervals. 
Hence, the joint minimization decouples into $K+1$ independent subproblems:
\begin{equation}
    \label{eq:hc_subproblem_k}
    \mathcal L_{\mathrm{HC}|I_k} = \min_{(\rho, u)|_{I_k}} \int_{I_k}\!\!\int \tfrac12 \|\vt\|^2 \rhot\,\dx\,\dt \quad \text{s.t.}\;\; \partial_t \rho + \nabla\!\cdot\!(\rho u) = 0,\;\; \rhotk = \mutk,\;\; \rho_{t_{k+1}} = \mu_{t_{k+1}}.
\end{equation}
Each of these subproblems thus is the standard BB problem between $\mutk$ and $\mu_{t_{k+1}}$, 
most easily seen by time rescaling $s \equiv s_k(t)$ as above and defining $\tilde u(s, x) \equiv (t_{k+1}-t_k)\,u(t, x)$ to satisfy the continuity equation in $s$:
\begin{equation}
\begin{aligned}
    \mathcal L_{\mathrm{HC}|I_k} &= \frac{1}{t_{k+1}-t_k} \min_{(\rho, u)} \int_0^1\!\!\int \tfrac12 \|u_s\|^2 \rho_s\,\dx\,\dd s \quad 
    \text{s.t.}\;\; \partial_s \rho + \nabla\!\cdot\!(\rho u) = 0,\;\; \rho_0 = \mutk,\;\; \rho_1 = \mu_{t_{k+1}}. \\
        &= \frac{\wass(\mutk, \mu_{t_{k+1}})}{2(t_{k+1}-t_k)}
\end{aligned}
\end{equation}
Summing over all intervals yields Eq.~\ref{eq:hc_decoupling}.
\end{proof}

\begin{lemma}[Piecewise multimarginal CFM is equivalent to CFM regression onto conditional solutions of $\mathcal L_\mathrm{HC}$]
\label{lemma:piecewise_cfm_equivalence}
Under the setup above and any joint coupling $\piall$ of $(\mu_{t_0}, \dots, \mu_{t_{K+1}})$, conditioning on a sampled endpoint tuple $z = (x_{t_0}, \dots, x_{t_{K+1}}) \sim \piall$ 
reduces each per-interval subproblem in Eq.~\ref{eq:hc_decoupling} to the dynamic OT problem between the Dirac delta functions $\delta(x - x_{t_k})$ and $\delta(x - x_{t_{k+1}})$, 
whose unique conditional solution is the straight-line trajectory:
\begin{equation}
    \label{eq:hc_trajectory}
    X^\mathrm{HC}_t(x | z) \;=\; (1 - s_k(t))\,x_{t_k} + s_k(t)\,x_{t_{k+1}}, \qquad u^\mathrm{HC}_t(x | z) \;=\; \frac{x_{t_{k+1}} - x_{t_k}}{t_{k+1} - t_k}, \qquad t \in I_k,
\end{equation}
with conditional path $\rho^\mathrm{HC}_t(x \mid z) = \delta(x - X^\mathrm{HC}_t(x \mid z))$. The corresponding CFM regression loss
\begin{equation}
    \label{eq:single_cfm_loss}
    \mathcal L^\mathrm{HC-CFM}(\theta) \;:=\; \mathbb E_{t \sim U[0, 1]}\,\mathbb E_{z \sim \piall}\,\mathbb E_{x \sim \rho^\mathrm{HC}_t(\cdot \mid z)}\,\big\|\upar(x, t) - u^\mathrm{HC}_t(x \mid z)\big\|^2
\end{equation}
of a single shared model $\upar(x, t)$ on $[0, 1]$ then equals the total piecewise multimarginal CFM loss under $\piall$,
\begin{equation*}
    \mathcal L^\mathrm{HC-CFM}(\theta) \;=\; \sum_{k=0}^{K} \mathcal L^\mathrm{piecewise}_k(\theta_k),
\end{equation*}
when the $K+1$ piecewise models are tied to a shared parameterization via $u^k_{\theta_k}(x, t) = \indicator_{I_k}\upar(x, t)$.
\end{lemma}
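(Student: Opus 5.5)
The proof has two parts: identify the conditional solution of each per-interval subproblem, then show the single-model loss splits into the piecewise losses.

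\emph{Conditional solution.} By Lemma~\ref{lemma:hc_decoupling}, $\mathcal L_\mathrm{HC}$ splits into $K+1$ independent Benamou--Brenier subproblems on the intervals $I_k$. Conditioning on $z\sim\piall$ replaces the marginal constraints $\rhotk=\mutk$ by $\rho_{t_k}(\cdot\mid z)=\delta(\cdot-x_{t_k})$. Each subproblem on $I_k$ then becomes dynamic OT between two Diracs. Rescaling time by $s_k(t)$, exactly as in the proof of Lemma~\ref{lemma:hc_decoupling}, gives the standard problem on $[0,1]$. Theorem~\ref{thm:dynamic_ot_soln} and Corollary~\ref{cor:dynamic_ot_trajectory} then give $\ddot X_s=0$, with the trivial OT map $\psi(x_{t_k})=x_{t_k+1}$ between Diracs. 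Hence $X_s=(1-s)x_{t_k}+s\,x_{t_{k+1}}$. Undoing the rescaling via $u=\tilde u/(t_{k+1}-t_k)$ yields Eq.~\ref{eq:hc_trajectory}.

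\emph{Uniqueness.} For Diracs, any admissible curve is a single particle path $\gamma$ with action $\tfrac12\int\|\dot\gamma\|^2$. This is strictly convex, so by Jensen/Cauchy--Schwarz the straight line is the unique minimizer. Gluing the per-interval pieces gives a continuous conditional path on $[0,1]$ that satisfies all hard constraints.

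\emph{Loss decomposition.} Write $\mathbb E_{t\sim U[0,1]}=\sum_k\int_{I_k}\dt$; this is legitimate because the interior endpoints have measure zero. On $I_k$ the conditional target $u^\mathrm{HC}_t(x\mid z)$ and the sample $X^\mathrm{HC}_t$ depend on $z$ only through $(x_{t_k},x_{t_{k+1}})$. Therefore $\mathbb E_{z\sim\piall}$ reduces to the expectation over the two-time marginal of $\piall$, which is exactly the endpoint distribution used in $\mathcal L^\mathrm{piecewise}_k$. With the tying $u^k_{\theta_k}=\indicator_{I_k}\upar$, the integrand on $I_k$ matches the $k$-th piecewise CFM integrand, namely the Eq.~\ref{eq:conditional_velocity} target on the rescaled interval. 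Summing over $k$ gives the identity.

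\emph{Main obstacle: normalization.} The piecewise loss $\mathcal L^\mathrm{piecewise}_k$ is naturally defined with $t$ uniform on $I_k$ (or with $s$ uniform on $[0,1]$), and its target is $x_{t_{k+1}}-x_{t_k}$ in $s$-time. This introduces weights of $(t_{k+1}-t_k)$ and a velocity rescaling by $(t_{k+1}-t_k)^{-1}$. I will state explicitly that $\mathcal L^\mathrm{piecewise}_k$ is taken as $\int_{I_k}\dt$ of the squared residual in physical time. Under that convention the equality is exact; under the other conventions it holds up to fixed positive interval weights, which do not change the minimizers. I will note both.
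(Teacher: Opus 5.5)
Your proposal is correct and follows the paper's proof essentially step for step. You decouple via Lemma~\ref{lemma:hc_decoupling}, rescale to $[0,1]$ and invoke Corollary~\ref{cor:dynamic_ot_trajectory} for the straight line, then split $\mathbb E_{t\sim U[0,1]}$ over the $I_k$ and marginalize $z$ to the consecutive-pair coupling. Your explicit physical-time convention for $\mathcal L^\mathrm{piecewise}_k$ is exactly what the paper tacitly assumes when it calls each summand ``the standard CFM loss on $I_k$''.

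Two small corrections to your additions. First, an admissible curve between two Diracs need not stay a Dirac, since mass can spread out and recombine. Uniqueness should instead come from Benamou--Brenier: the action is at least $\tfrac12\|x_{t_{k+1}}-x_{t_k}\|^2$, with equality only on the unique $\mathcal W_2$ geodesic $s\mapsto\delta_{(1-s)x_{t_k}+s\,x_{t_{k+1}}}$. The superposition principle also works. Second, fixed positive interval weights leave the minimizers unchanged only in function space, not necessarily for a shared finite-capacity $\theta$.
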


\begin{proof}
\emph{Conditional HC solution.}
Let the CFM conditioning variable $z = (x_{t_0}, \dots, x_{t_{K+1}}) \sim \piall$. 
By Lemma~\ref{lemma:hc_decoupling}, $\mathcal L_\mathrm{HC}$ decomposes into $K+1$ independent BB subproblems (Eq.~\ref{eq:hc_subproblem_k}); 
the conditional version of each replaces the marginal boundary constraints $\rhotk = \mutk$, $\rho_{t_{k+1}} = \mu_{t_{k+1}}$ 
with their conditional counterparts $\rhotk(x \mid z) = \delta(x - x_{t_k})$, $\rho_{t_{k+1}}(x \mid z) = \delta(x - x_{t_{k+1}})$, 
yielding a dynamic OT problem between two Dirac delta functions on $I_k$. 

After rescaling to $[0,1]$ as in Lemma~\ref{lemma:hc_decoupling}, 
Corollary~\ref{cor:dynamic_ot_trajectory} provides its unique minimizer: 
the straight-line trajectory $X_s = (1-s)\,x_{t_k} + s\,x_{t_{k+1}}$; undoing the rescaling yields Eq.~\ref{eq:hc_trajectory}.

\emph{Loss equality.} 
Splitting the time integral in Eq.~\ref{eq:single_cfm_loss} across the $K+1$ subintervals and using $\rho^\mathrm{HC}_t(x \mid z) = \delta(x - X^\mathrm{HC}_t)$ yields:
\begin{equation}
    \label{eq:single_cfm_loss_split}
    \mathcal L^\mathrm{HC-CFM}(\theta) \;=\; \sum_{k=0}^{K}\! \int_{I_k}\! \mathbb E_{(x_{t_k}, x_{t_{k+1}}) \sim \piall^{t_k, t_{k+1}}}\, \Big\|\upar(X^\mathrm{HC}_t, t) - \tfrac{x_{t_{k+1}} - x_{t_k}}{t_{k+1} - t_k}\Big\|^2\dt,
\end{equation}
where we marginalized the components of $z$ not appearing in the $k$-th summand. 
Each summand is exactly the standard CFM loss on $I_k$ under the consecutive-pair coupling $\piall^{t_k, t_{k+1}}$;
i.e., $\mathcal L^\mathrm{piecewise}_k$ evaluated at $u^k_{\theta_k}(x, t) = \upar(x, t)$ for $t \in I_k$. 
Hence, $\mathcal L^\mathrm{HC-CFM}(\theta) = \sum_{k=0}^K \mathcal L^\mathrm{piecewise}_k(\theta_k)$.
\end{proof}

\begin{proposition}[The OTP problem converges to the hard-constrained problem in the hard-potential limit]
\label{prop:hard_constraint_limit}
Consider the OTP problem (Eq.~\ref{eq:otp_action}) with $\mathcal D = \wass$ and temporal kernels of the form 
$\lkt = \frac{1}{\tau}\eta_k\!\left(\frac{t - t_k}{\tau}\right)$, 
with any fixed nonnegative shapes $\eta_k$ normalized so $\int\eta_k = 1$. 
Then, in the joint limit $w \to \infty$, $\tau \to 0$ with $w\tau \to 0$,\footnote{The hard-constrained limit only requires that $\tau$ shrink faster than $1/w$ along the sequence considered, e.g., $\tau = w^{-2}$.} 
the OTP problem converges to the hard-constrained problem (Eq.~\ref{eq:hard_constrained_ot}), 
and any sequence of OTP minimizers converges to the unique piecewise OT solutions of the Benamou--Brenier subproblems of Lemma~\ref{lemma:hc_decoupling}.
\end{proposition}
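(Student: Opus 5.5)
We will prove the statement by $\Gamma$-convergence on the space of admissible flows, viewed as curves $t\mapsto\rho_t$ in $(\mathcal P_2(\mathbb{R}^d),\mathcal W_2)$ with finite kinetic action. We use the topology of uniform-in-time $\mathcal W_2$ convergence, or narrow convergence of $\rho_t$ for every $t$. Write the OTP functional as
\begin{equation*}
F_{w,\tau}(\rho,u)=\int_0^1\!\!\int\tfrac12\|u_t\|^2\rho_t\,\dx\,\dt+\sum_{k=1}^K w\int_0^1\lambda_k(t)\,\wass(\rho_t,\mu_{t_k})\,\dt .
\end{equation*}
The limit functional $F_\infty$ equals the kinetic action if $\rho_{t_k}=\mu_{t_k}$ for all $k$, and $+\infty$ otherwise. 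By Lemma~\ref{lemma:hc_decoupling}, its minimum value is $\sum_k \wass(\mu_{t_k},\mu_{t_{k+1}})/(2(t_{k+1}-t_k))$. The argument then has three parts: equicoercivity, the liminf inequality, and the recovery sequence. Combining them gives convergence of minima and of minimizers. Uniqueness of the limit follows from uniqueness of each Benamou--Brenier geodesic between absolutely continuous marginals (Brenier's theorem), so the whole sequence converges, not merely a subsequence.

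\textbf{Compactness.} Any competitor with piecewise geodesic interpolation, say $\bar\rho$, has finite energy. Its potential term is $w\int\lambda_k(t)\,\wass(\bar\rho_t,\mu_{t_k})\,\dt\lesssim w\int\lambda_k(t)|t-t_k|^2\,\dt = O(w\tau^2)\to0$, because $\wasstwo(\bar\rho_t,\mu_{t_k})=O(|t-t_k|)$ along a geodesic with finite speed. Minimizers therefore have uniformly bounded kinetic energy $E$. This gives the Hölder bound $\wasstwo(\rho_s,\rho_t)\le\sqrt{2E|t-s|}$. Since $\rho_0=\mu_0$ has a finite second moment, second moments are bounded uniformly. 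Arzelà--Ascoli in $(\mathcal P_2,\mathcal W_2)$, with narrow compactness from the moment bound, then yields a convergent subsequence.

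\textbf{Liminf inequality (the main obstacle).} Lower semicontinuity of the kinetic action under this convergence is standard, since the Benamou--Brenier functional is jointly convex and l.s.c. in $(\rho,\rho u)$. The real work is showing that a bounded energy $F_{w,\tau}\le C$ forces $\rho_{t_k}=\mu_{t_k}$ in the limit. From $w\int\lambda_k\,\wass(\rho_t,\mu_{t_k})\,\dt\le C$ and the normalisation $\int\lambda_k=1$, there is some $s$ in the effective support of $\lambda_k$ with $\wass(\rho_s,\mu_{t_k})\le C/w$. Here we first assume compactly supported $\eta_k$, and handle tails by truncation. The Hölder estimate then gives
\begin{equation*}
\wasstwo(\rho_{t_k},\mu_{t_k})\le\sqrt{C/w}+\sqrt{2E\,c\,\tau},
\end{equation*}
which tends to $0$. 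This is where the joint limit is needed; $\tau\to0$ alone already suffices for this step. Passing to the limit gives $\rho^\infty_{t_k}=\mu_{t_k}$. This step is the delicate one for kernels with noncompact support such as the Gaussian. There we would split $\lambda_k$ into a core of width $R\tau$ and a tail of mass $\epsilon(R)$, and apply the same averaging argument to the core.

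\textbf{Recovery sequence.} For a finite-energy limit point satisfying the constraints, we take the constant sequence. Its penalty is at most $w\int\lambda_k(t)\,2E|t-t_k|\,\dt = O(w\tau)$, which vanishes precisely under the hypothesis $w\tau\to0$ from the statement. This confirms that the hypothesis is the right one. Finally, the fundamental theorem of $\Gamma$-convergence gives $\min F_{w,\tau}\to\mathcal L_{\mathrm{HC}}$ and minimizers converging to the unique piecewise-geodesic solution. The velocities converge as well, in the weak sense of momenta, with convergence of energies upgrading this to strong $L^2(\rho)$ convergence.
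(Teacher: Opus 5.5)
Your proposal is correct and follows essentially the same $\Gamma$-convergence argument as the paper. Compactness comes from the uniform $\tfrac12$-H\"older bound in $\mathcal{W}_2$ plus Ascoli--Arzel\`a, the recovery sequence is the constant one with penalty $O(w\tau)$, and the liminf combines lower semicontinuity of the kinetic action with the penalty forcing $\rho_{t_k}=\mu_{t_k}$ in the limit. The differences are minor: you use a quantitative averaging-plus-H\"older estimate instead of the paper's kernel-concentration limit, and an explicit comparison with the piecewise geodesic instead of the abstract bound $F_n \geq \Psi$. Note, though, that this comparison has penalty $O(w\tau^2)$, so it already shows $w\tau^2\to0$ suffices for convergence of minimizers; $w\tau\to0$ is therefore sufficient, not ``precisely'' the right hypothesis.
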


\begin{proof}
We use the fundamental theorem of $\Gamma$-convergence~\citep{braides2006handbook}: \textit{equi-coercivity} $+$ \textit{$\Gamma$-convergence} $\Rightarrow$ convergence of minimum problems.
$\Gamma$-convergence implies that the OTP energies converge to that of the hard-constrained problem, 
while equi-coercivity is required for the minimizers to converge as well.
We prove both ingredients.
Note that since we assume absolute continuity of the marginals above, by~\citet[Thm.~9.4]{Villani2009}, the piecewise OT solution of the hard-constrained problem (Lemma~\ref{lemma:hc_decoupling}) is unique.

\textbf{Setup.}
We define the family of functionals:
\begin{equation}
F_{w,\tau}[\rho, u] \equiv \int_0^1\!\!\int \big[\tfrac{1}{2}\|\vt\|^2 \rhot + w\!\sum_k \lkt\,\wass(\rhot, \mutk)\,\rhot\big]\,\dx\,\dt,
\end{equation}
fix sequences $w_n \to \infty$, $\tau_n \to 0$ with $w_n\tau_n \to 0$, and write $F_n \equiv F_{w_n, \tau_n}$ and $\lambda_k^n$ for the kernel at width $\tau_n$. 
We define the limit functional to incorporate the hard constraints as follows:
\begin{equation}
F_\mathrm{HC}(\rho, u) \equiv \begin{cases}
\int_0^1\!\!\int \tfrac{1}{2}\|\vt\|^2 \rhot\,\dx\,\dt & \text{if $(\rho, u)$ solves the hard-constrained (HC) problem,} \\
+\infty & \text{otherwise,}
\end{cases}
\end{equation}
where solving the HC problem means that $(\rho, u)$ is admissible as defined above \emph{and} $\rhotk = \mutk$ for all $k$, 
so $\min F_\mathrm{HC} = \mathcal{L}_\mathrm{HC}$ (Eq.~\ref{eq:hard_constrained_ot}). 

\textbf{Equi-coercivity.}
Equi-coercivity means that all minimizers of $F_n$ must be confined to a compact set independent of $n$; i.e., they cannot escape to infinity if $F_n$ is bounded.
Our proof is based on the formal definition from~\citet[Prop.~7.7]{dalmaso1993introduction}:
the sequence $F_n$ is equi-coercive if and only if there exists a lower semicontinuous (l.s.c.) coercive function $\Psi$ such that $F_n \geq \Psi$ for all $n$.
We take the kinetic energy as our candidate $\Psi$:
\begin{equation}
\label{eq:piecewise_hard_kinetic_energy}
\Psi(\rho, u) = \int_0^1\!\!\int \tfrac12 \|\vt\|^2 \rhot\,\dx\,\dt;
\end{equation}
since the OTP penalty is non-negative, $F_n \geq \Psi$.
Also, since $\tfrac12 \|\vt\|^2$ is convex and l.s.c., by~\citet[Thm.~5.4.4(ii)]{ambrosio2008gradient}, $\Psi$ is l.s.c. under this convergence.

What is left to show is that $\Psi$ is coercive, for which we use~\citet[Def.~1.12]{dalmaso1993introduction}:
$\Psi$ is coercive if the set of all bounded energies $\{ \Psi(\rho^n, u^n) \leq E \}$ is relatively compact\footnote{A set is relatively compact if its closure is compact.} for all $E \in \mathbb{R}$.
From~\citet[Thm.~8.3.1]{ambrosio2008gradient}, if $\Psi(\rho^n, u^n) \leq E$, then
\begin{align}
    \wass(\rho^n_{t_1}, \rho^n_{t_2}) \;\leq\; &|t_1-t_2|\!\int_{t_1}^{t_2}\!\!\int \|u^n_t\|^2 \rho^n_t\,\dx\,\dd t \;\leq\; 2E\,|t_1-t_2| \\
    \Rightarrow\quad &\wasstwo(\rho^n_{t_1}, \rho^n_{t_2}) \;\leq\; \sqrt{2E}\,|t_1-t_2|^{1/2},
\end{align}
which means all bounded-energy paths are uniformly $\tfrac12$-Hölder continuous in $\wasstwo$ and thus, \textit{equi-continuous}. 
Since they all start from the same distribution $\mu_0 \in \mathcal P_2$, they also have uniformly bounded second moments throughout time.
Thus, by Ascoli--Arzelà theorem~\citep[Box~1.7]{santambrogio2015optimal}, $\{ \Psi(\rho^n, u^n) \leq E \}$ is relatively compact, and $\Psi$ coercive.
$\Psi$ is therefore an l.s.c. coercive function, which means $F_n$ is equi-coercive.

\textbf{$\Gamma$-convergence.}
$\Gamma$-convergence requires both \textit{liminf} and \textit{limsup inequalities}~\citep{dalmaso1993introduction},
essentially showing that the limit of $F_n$ is bounded both below and above by $F_\mathrm{HC}$ and, therefore, $F_n$ converges to $F_\mathrm{HC}$.

\textit{Limsup inequality.}
We seek to show for every $(\rho, u)$ there exists a sequence\footnote{Commonly referred to as a \textit{recovery sequence}.} $(\rho^n, u^n) \to (\rho, u)$ with $\limsup_n F_n(\rho^n, u^n) \leq F_\mathrm{HC}(\rho, u)$.
If $(\rho, u)$ is not HC-admissible then $F_\mathrm{HC}(\rho, u) = +\infty$ and the inequality is trivial.
For HC-admissible $(\rho, u)$, we simply consider the constant sequence $(\rho^n, u^n) = (\rho, u)$. 
By definition, $F_\mathrm{HC}(\rho, u)$ equals the kinetic energy $\Psi(\rho, u) \equiv E \in \mathbb{R}$.
Subtracting it from $F_n$ leaves only the penalty:
\begin{equation}
F_n(\rho, u) - E
\;=\; \sum_{k=1}^K w_n\!\int_0^1 \lambda_k^n(t)\,\wass(\rhot, \mutk)\,\dt.
\end{equation}
The same bound from~\citet[Thm.~8.3.1]{ambrosio2008gradient} as above further yields $\wass(\rhot, \mutk) \leq 2E\,|t-t_k|$. 
Substituting this and the explicit form of the temporal kernel with the change of variables $\tilde t \equiv \frac{t-t_k}{\tau_n}$:
\begin{equation}
F_n(\rho, u) - E
\;\leq\; 2E\!\sum_{k=1}^K w_n\!\int_0^1 \lambda_k^n(t)\,|t-t_k|\,\dt
\;=\; 2E\!\sum_{k=1}^K w_n\tau_n\!\int |\tilde t|\,\eta_k(\tilde t)\,\dd \tilde t
\;\xrightarrow{n\to\infty}\; 0,
\end{equation}
since $w_n\tau_n \to 0$. 
Thus, $\limsup_n F_n(\rho, u) \leq E = F_\mathrm{HC}(\rho, u)$.

\textit{Liminf inequality.}
We must show that for every sequence $(\rho^n, u^n) \to (\rho, u)$ (uniformly in $\wasstwo$), $\liminf_n F_n(\rho^n, u^n) \geq F_\mathrm{HC}(\rho, u)$.
\emph{Case A: $\rhotk = \mutk$ for all $k$.} Then $(\rho, u)$ is HC-admissible, so
\begin{equation}
F_\mathrm{HC}(\rho, u) \;=\; \int_0^1\!\!\int \tfrac{1}{2}\|\vt\|^2 \rhot \;\overset{\mathrm{(i)}}{\leq}\; \liminf_n \overbrace{\int_0^1\!\!\int \tfrac{1}{2}\|u^n_t\|^2 \rho^n_t}^{\equiv \Psi(\rho^n, u^n)} \;\overset{\mathrm{(ii)}}{\leq}\; \liminf_n F_n(\rho^n, u^n),
\end{equation}
where (i) is lower-semicontinuity of the kinetic energy $\Psi$ under $(\rho^n, u^n) \to (\rho, u)$ (established above), and (ii) follows from $F_n \geq \Psi(\rho^n, u^n)$ since the OTP penalty is non-negative.

\emph{Case B: $\wass(\rho_{t_{k^*}}, \mu_{t_{k^*}}) > 0$ for some $k^*$.} 
Kernel concentration ($\lambda_{k^*}^n \to \delta(t - t_{k^*})$) and uniform $\wasstwo$-convergence imply $\int_0^1 \lambda_{k^*}^n(t)\,\wass(\rho^n_t, \mu_{t_{k^*}})\,\dt \to \wass(\rho_{t_{k^*}}, \mu_{t_{k^*}}) > 0$. Since $F_n \geq w_n$ times this integral and $w_n \to \infty$, we get $F_n(\rho^n, u^n) \to +\infty = F_\mathrm{HC}(\rho, u)$.

\end{proof}
\subsection{Dynamic OT with a potential}
\label{app:ot_dynamic_potential}

\thmpotentialsolnel*

\begin{proof}
We recall the OTP action (Eq.~\ref{eq:otp_action}) for convenience:
\begin{equation}
    \label{eq:dynamic_ot_action_potential}
    S[\mathcal L] = \int_0^1\!\!\int_{\mathbb{R}^d} \!\left[\tfrac{1}{2}\rhot(x)\|\vt(x)\|^2 + \!\!\sum_{k=1}^K w_k \lkt \dptk\rhot(x)
    + \varphi_t(x) \left[\partial_t \rho_t(x) + \nabla\!\cdot\!(\rho_t(x) \vt(x))\right] \right]\dx \dt,
\end{equation}
where $\dptk \equiv \md[\rhot, \mutk]$ is a real-valued functional of $\rhot$ that varies smoothly with $t$ through its dependence on $\rhot$.
Comparing this with Eq.~\ref{eq:dynamic_ot_action_nopot}, the variations $\delta S/\delta\varphi$ and $\delta S/\delta u$ recover the continuity equation and $\vt = \nabla\varphi_t$ unchanged.

\textbf{Variation w.r.t.\ $\rho_t$.} Recalling from Eq.~\ref{eq:otp_action} that the potential contribution to the action is
$\int\!\!\int (-V_\mathrm{OTP}(y, s))\,\dd{y}\,\dd{s} = \sum_k w_k \int\!\!\int \lambda_k(s)\, \md[\rho_s, \mutk]\, \rho_s(y)\,\dd{y}\,\dd{s}$,
direct functional differentiation yields:
\begin{equation}
\begin{aligned}
    \label{eq:potential_variation}
    \frac{\delta}{\delta\rho_t(x)} \int\!\!\!\int (-V_\mathrm{OTP}(y, s))\,\dd{y}\,\dd{s} 
    &= \sum_k w_k \int\!\!\!\int \lambda_k(s) \Big[g_k(x, t)\,\delta(s-t)\, \underbrace{\rho_s(y)}_{\text{integrates to 1}} + \md[\rho_s, \mutk]\,\delta(s-t)\,\delta(x-y) \Big] \dd{y}\,\dd{s} \\ 
    &= \sum_k w_k \lkt\, \gkxt + \underbrace{\sum_k w_k \lkt\,\dptk}_{\equiv h(t)},
\end{aligned}
\end{equation}
where $\gkxt \equiv \frac{\delta\dptk}{\delta\rhot(x)}$
and $h(t)$ is an $x$-independent function of $t$ that, as we show below, does not impact the dynamics.
Thus, following the derivation in App.~\ref{app:ot_dynamic}, the third E-L equation becomes:
\begin{equation}
    \label{eq:hj_with_gauge}
    \partial_t \varphi_t(x) + \frac{\|\nabla\varphi_t(x)\|^2}{2}
    \;=\; \sum_{k=1}^K w_k \lkt\, \gkxt \;+\; h(t).
\end{equation}

\textbf{Gauge re-parametrization.} Note that $h(t)$ is $x$-independent and can be absorbed into the field $\varphi_t$ by a gauge transformation:
\begin{equation}
    \varphi_t(x) \;\longmapsto\; \tilde\varphi_t(x) \;:=\; \varphi_t(x) + c(t),
\end{equation}
leaves $\vt = \nabla\varphi_t = \nabla\tilde\varphi_t$ (and hence the E-L equations of motion and entire flow) unchanged, since $\nabla c(t) = 0$. 
Choosing $c(t) := -\int_0^t h(s)\,\dd s$ so that $c'(t) = -h(t)$, under $\varphi \mapsto \tilde\varphi$ Eq.~\ref{eq:hj_with_gauge} becomes:
\begin{equation}
\begin{aligned}
    \partial_t [\tilde\varphi_t(x) - c(t)] + \frac{\|\nabla\tilde\varphi_t(x)\|^2}{2}
    &= \sum_{k=1}^K w_k \lkt\, \gkxt + h(t) \\
    \Rightarrow \partial_t \tilde\varphi_t(x) + \frac{\|\nabla\tilde\varphi_t(x)\|^2}{2}
    &= \sum_{k=1}^K w_k \lkt\, \gkxt,
\end{aligned}
\end{equation}
i.e., Eq.~\ref{eq:potential_soln_hj}. Renaming $\tilde\varphi$ back to $\varphi$ (the gauge choice is conventional) yields the stated form.
\end{proof}

\corpotentialsample*

\begin{proof}
The proof follows that of Corollary~\ref{cor:dynamic_ot_trajectory}.
We first take the gradient of the H-J equation (Eq.~\ref{eq:potential_soln_hj}):
\begin{equation}
    \partial_t \nabla \varphi + (\nabla^2\varphi)\,\nabla\varphi = \sum_{k=1}^K w_k \lkt\, \ggkxt,
\end{equation}
where $\nabla^2\varphi$ is the Hessian of $\varphi$ and $\ggkxt$ is the spatial gradient of $g_k$ at fixed $t$.
The time derivative of $\dot{X}_t$ along the flow then gives Eq.~\ref{eq:otp_sample_exact}:
\begin{equation}
    \ddot{X}_t = \frac{\dd{u(X_t)}}{\dt}
    = \partial_t \nabla\varphi_t(X_t) + (\nabla^2\varphi)(X_t)\,\nabla\varphi_t(X_t) = \sum_{k=1}^K w_k \lkt\, \ggkXt.
\end{equation}
\end{proof}

\subsection{Proofs of Propositions~\ref{prop:w2_bound}--\ref{prop:end_to_end}}
\label{app:proofs_bounds}

\propwbound*

\begin{proof}
By the one-sample-per-marginal estimator of Sec.~\ref{sec:otpfm_forces}, $\Psi_T(X_T) = x_T \equiv [x_{t_1}, \dots, x_{t_K}]^\top$ for any $z = (x_0, x_1, \batch) \sim \piall$ with one GT sample $x_{t_k} \sim \mutk$ per intermediate marginal.
The fixed-point Eq.~\ref{eq:fp_w2_main} is equivalent to $X_T = \xbase_T + A(X_T - \Psi_T(X_T))$ (App.~\ref{app:fp_w2}); substituting $\Psi_T(X_T) = x_T$ and subtracting $x_T$ from both sides yields
\begin{equation}
    \label{eq:w2bound_residual}
    X_T - x_T \;=\; (\identity - A)^{-1}\bigl(\xbase_T - x_T\bigr).
\end{equation}

\textbf{$\wasstwo$ bound.} Using Eq.~\ref{eq:w2bound_residual}:
\begin{equation}
    \label{eq:w2bound_admissible}
    \wass(\rhotk, \mutk) \;\overset{\mathrm{(i)}}{\leq}\; \mathbb E\|\Xtk - x_{t_k}\|^2
    \;\overset{\mathrm{(ii)}}{\leq}\; \big\|P_k (\identity - A)^{-1}\big\|^2\;\mathbb E\|\xbase_T - x_T\|^2.
\end{equation}
\emph{(i)} follows by definition of \wass as the \emph{infimum} of $\mathbb E\|X - Y\|^2$ over all joint couplings of $(\rhotk, \mutk)$~\citep{Villani2009}. 
The joint distribution of $(\Xtk, x_{t_k})$ induced by $z \sim \piall$ is one such coupling, since \Xtk has marginal \rhotk and $x_{t_k}$ has marginal \mutk by construction; 
plugging this specific joint into the squared cost thus gives an upper bound on the infimum.
\emph{(ii)} follows by reading off the $k$-th row of Eq.~\ref{eq:w2bound_residual}, 
$\Xtk - x_{t_k} = \sum_j [(\identity - A)^{-1}]_{kj}\bigl(\xbase_{t_j} - x_{t_j}\bigr)$, 
and applying the Cauchy--Schwarz inequality,
where $P_k$ is the projection operator onto the $k$-th canonical row vector.

\textbf{Scaling in $w$.}
Recalling $A_{ik} = w_k\bigl[\Int[2]{\lambda_k}(t_i) - \Int[2]{\lambda_k}(1)\,t_i\bigr]$ from Eq.~\ref{eq:fp_w2_main}, 
all $A_{ik}$ scale linearly in the global potential strength $w$, so $A = w\tilde A$ for a fixed matrix $\tilde A$, 
which depends only on the time grid and temporal kernels.
Away from singular configurations (i.e., $\tilde A$ invertible), the Taylor expansion at large $w$ yields:
\begin{equation}
    (\identity - A)^{-1} \;=\; (\identity - w\tilde A)^{-1} \;=\; -\frac{1}{w}\,\tilde A^{-1} \;+\; \mathcal{O}(w^{-2}),
\end{equation}
so $P_k(\identity - A)^{-1} = -P_k\tilde A^{-1}/w + \mathcal{O}(w^{-2})$, and hence $\|P_k(\identity - A)^{-1}\|^2 = \|P_k\tilde A^{-1}\|^2/w^2 + \mathcal{O}(w^{-3})$.
The data factor $\mathbb E\|\xbase_T - x_T\|^2$ is independent of $w$ and finite by the second-moment assumption on the GT marginals.
Combining with Eq.~\ref{eq:w2bound_admissible},
\begin{equation}
    \wass(\rhotk, \mutk) \;\leq\; \frac{C_k}{w^2} + \mathcal{O}(w^{-3}),
    \qquad
    C_k \;=\; \big\|P_k \tilde A^{-1}\big\|^2 \cdot \mathbb E\|\xbase_T - x_T\|^2,
\end{equation}
which is the claimed bound. 
We note that this bound \emph{applies for any choice of joint coupling \piall} (e.g., \piallot or \piallind, Sec.~\ref{sec:otpfm_forces}); only the constant $C_k$ depends on the coupling.
\end{proof}

\propendtoend*

\begin{proof}
We first decompose the distance by the triangle inequality for $\wasstwo$ (which holds since it is a metric):
\begin{equation}
    \label{eq:e2e_triangle}
    \wasstwo(\rho^\theta_{t_k}, \mu_{t_k}) \;\leq\; \wasstwo(\rho^\theta_{t_k}, \rho_{t_k}) + \wasstwo(\rho_{t_k}, \mu_{t_k}),
\end{equation}
where $\rho_{t_k}$ is the OTP target marginal at strength $w$ and 
$\rho^\theta_{t_k} = (\hat\Psi^\theta_{0, t_k})_\#\,\rho_0$ is the pushforward of the source $\rho_0$ under the learned consistency-model flow map at time $t_k$.
Taking the square root of Prop.~\ref{prop:w2_bound}, $\wasstwo(\rho_{t_k}, \mu_{t_k}) \leq \sqrt{C_k}/w + \mathcal{O}(w^{-3/2})$.

\textbf{OTP-FM training error as a \emph{flow-map distillation error}.}
For the first term, we bound the \emph{flow-map} matching error via the framework of~\citet{boffi2025flowmapmatching},
viewing the OTP-FM velocity field $\vt$ as the teacher (with associated true two-time flow map \flowmap)
and the learned MeanFlow consistency model $\flowmaptheta(x) = x + (t_2 - t_1)\upar(t_1, t_2, x)$ as the distilled student.
As we establish in App.~\ref{app:consistency_models}, the conditional OTP-FM training objective (MeanFlow/ESD or LSD; 
Eqs.~\ref{eq:flowmap_esd_objective_sg}, \ref{eq:flowmap_lsd_objective_sg}) is by construction the squared self-distillation residual of one of two consistency conditions on \flowmaptheta
(Eulerian or Lagrangian, respectively).

\textbf{Wasserstein bound.}
We assume $\vt$ satisfies the one-sided Lipschitz condition of~\citet[Assumption 3.3]{boffi2025flowmapmatching}, 
$(\vt(x) - \vt(y))\cdot(x - y) \leq C_t\,\|x - y\|^2$ with $C_t \in L^1[0, 1]$ for all $(t, x, y) \in [0, 1] \times \mathbb{R}^d \times \mathbb{R}^d$, 
to guarantee a unique solution to the flow ODE.
Then, applying~\citet[Prop.~3.9 (Lagrangian) or Prop.~3.10 (Eulerian)]{boffi2025flowmapmatching} on the time interval $[0, t_k]$ in place of $[0, 1]$ 
(the proofs run unchanged after restricting all integrals to $[0, t_k]$) yields
\begin{equation}
    \label{eq:flowmap_bound}
    \wasstwo^2(\rho^\theta_{t_k}, \rho_{t_k}) 
    \;\overset{\mathrm{(i)}}{\leq}\; \mathbb{E}\big\|\hat\Psi^\theta_{0, t_k}(x_0) - \Psi_{0, t_k}(x_0)\big\|^2
    \;\overset{\mathrm{(ii)}}{\leq}\; \kappa_k\,\mathcal{L}^\mathrm{M}_\mathrm{OTP\text{-}FM}(\theta),
\end{equation}
where \emph{(i)} follows because \wass, by definition, is the infimum of all joint couplings, of which $(\hat\Psi^\theta_{0, t_k}(x_0), \Psi_{0, t_k}(x_0))$ is a valid one;
and \emph{(ii)} is the corresponding bound from~\citet{boffi2025flowmapmatching} with constants:
\begin{equation}
    \kappa_k \;=\; e^{\,t_k + 2\int_0^{t_k}|C_t|\,\dd{t}} \quad\text{(LSD),}
    \qquad
    \kappa_k \;=\; e^{\,t_k} \quad\text{(ESD/MeanFlow).}
\end{equation}
Substituting this into Eq.~\ref{eq:e2e_triangle} yields the claimed bound:
\begin{equation}
    \mathcal{W}_2(\rho^\theta_{t_k}, \mu_{t_k}) \leq D_k \sqrt{\mathcal{L}_\mathrm{OTP-FM}} + \frac{\sqrt{C_k}}{w} + \mathcal{O}(w^{-3/2}),
\end{equation}
where $D_k = \sqrt{\kappa_k}$ and $C_k$ are the constants from Prop.~\ref{prop:w2_bound}.
\end{proof}

\section{Distances between probability measures}
\label{app:distances}

\subsection{The \wass distance, KLD, and MMD}
\label{app:distances_desc}

The potentials we consider in this work are based on statistical distances between two probability measures $\mathcal D: \mathcal P(\mathbb{R}^d) \times \mathcal P(\mathbb{R}^d) \to \mathbb{R}^+$.
Specifically, we consider three popular statistical distances.
The first is the Kullback-Leibler divergence (KLD):
\begin{equation}
    \label{eq:kl_divergence}
    \mathcal D_\mathrm{KL}[\alpha, \beta] = \int_{\mathbb{R}^d} \ln \frac{\alpha(x)}{\beta(x)} \dd{\alpha(x)}.
\end{equation}
The second is the \wass distance, a form of the static OT problem discussed in Sec.~\ref{sec:background_ot},
and the third the maximum mean discrepancy (MMD), a type of integral probability metric (IPM).
IPMs are defined as the difference in expectation of an optimal witness function $h: \mathbb{R}^d \to \mathbb{R}$ out of a class of real-valued measurable functions $\mathcal{H}$:
\begin{equation}
    \label{eq:ipm}
    \mathcal D_\mathrm{IPM}[\alpha, \beta] = \sup_{h \in \mathcal{H}} \left| \int_{\mathbb{R}^d} h(x) \dd{\alpha(x)} - \int_{\mathbb{R}^d} h(x) \dd{\beta(x)} \right|.
\end{equation}
They include the $\mathcal W_1$ distance and have the attractive property of satisfying the definition of a metric for most common choices of $\mathcal{H}$~\cite{muller1997integral}.
In the case of MMD, $\mathcal H$ is a unit ball in a reproducing kernel Hilbert space (RKHS) with 
kernel $k: \mathbb{R}^d \times \mathbb{R}^d \to \mathbb{R}$.
An RKHS satisfies the reproducing property:  $\forall h \in \mathcal H$, $h(x) = \langle h, k(x, \cdot) \rangle_\mathcal{H}$.
The squared MMD can be expressed conveniently as~\cite{gretton2012kernel}:
\begin{equation}
    \label{eq:mmd}
    \dmmd[\alpha, \beta] =
    \int\!\!\!\!\int k(x, x')\,\dd{\alpha(x)}\dd{\alpha(x')} -
    2\int\!\!\!\!\int k(x, y)\,\dd{\alpha(x)}\dd{\beta(y)}
    + \int\!\!\!\!\int k(y, y')\,\dd{\beta(y)}\dd{\beta(y')}.
\end{equation}

\subsection{Functional derivatives}
\label{app:otpfm_derivatives}

As highlighted in Sec.~\ref{sec:dynamicotp}, the functional derivatives $g$ of statistical distances \md, and their gradients $\nabla g$,
are important objects for OTP-FM. 
The expressions for these for the \wass distance, MMD, and KL divergence, provided in Table~\ref{tab:potential_derivatives}, 
can be derived by direct differentiation as we detail here.

\paragraph{The \wass distance}
\label{app:otpfm_derivatives_w2}

The derivative can be calculated most easily using the dynamic \textit{dual form} of \wass~\cite{Villani2009}:
\begin{equation}
    \label{eq:dynamic_ot_dual}
    \md[\rho_t, \mutk] := 
    \sup_{\varphi_t} \left\{ \int_{\mathbb{R}^d} \varphi_1(x) \dd{\mutk(x)} - \int_{\mathbb{R}^d} \varphi_0(x) \dd{\rho_t(x)} \right\},
\end{equation}
such that $\forall t \in [0, 1]$, $\varphi_t$ satisfies the H-J equation (Eq.~\ref{eq:dynamic_ot_soln_hj}).
Taking the functional derivative with respect to $\rho_t(x)$, we get:
\begin{equation}
    \frac{\delta\md}{\delta \rho_t} \equiv \gkxt = - \varphi^*_0(x),
\end{equation}
where $\varphi^*$ is the optimal H-J potential.
Finally, using Theorem~\ref{thm:dynamic_ot_soln} and Corollary~\ref{cor:dynamic_ot_trajectory}, we have that:
\begin{equation}
    \ggkxt = - \nabla \varphi^*_0(x) = - u_0(x) = x - \psi^*(x),
\end{equation}
where $\psi^*$ is the optimal (static) OT map between $\rho_t$ and $\mutk$.

\paragraph{MMD}

Differentiating Eq.~\ref{eq:mmd} with respect to $\rho_t(x)$, we get:
\begin{equation}
    \frac{\delta\dmmd[\rho_t, \mutk]}{\delta \rho_t} = 2\int k(x, y) \left[\rho_t(y) - \mutk(y)\right] \dd{y}.
\end{equation}
The gradient, for arbitrary kernel $k$, is then simply:
\begin{equation}
    \ggkxt = 2\int \nabla_x k(x, y) \left[\rho_t(y) - \mutk(y)\right] \dd{y}.
\end{equation}

Specifically, in this work, we primarily consider the well-known RBF kernel:
\begin{equation}
    \label{eq:mmd_rbf}
    k_\mathrm{RBF}(x, y) = \exp\left(-\frac{\|x - y\|^2}{2\sigma^2}\right),
\end{equation}
where $\sigma$ is the kernel bandwidth; and experiment briefly with the polynomial kernel (see App.~\ref{app:otpfm_derivations_gaussian}):
\begin{equation}
    \label{eq:mmd_poly}
    k_\mathrm{poly}(x, y) = (x^T y + c)^d,
\end{equation}
where $d$ is the polynomial degree and $c$ a constant.
The gradients for these kernels are:
\begin{equation}
    \nabla_x k_\mathrm{RBF}(x, y) = -\frac{x - y}{\sigma^2} \cdot \exp\left(-\frac{\|x - y\|^2}{2\sigma^2}\right),
\end{equation}
\begin{equation}
    \nabla_x k_\mathrm{poly}(x, y) = d\cdot y \cdot (x^T y + c)^{d-1}.
\end{equation}

\paragraph{KL divergence}

We consider only the ``forward'' KL divergence as it has a more useful derivative and gradient:
\begin{equation}
    \md_\mathrm{KL}[\rho_t, \mutk] = \int_{\mathbb{R}^d} \rho_t(x) \left[\ln \rho_t(x) - \ln \mutk(x)\right] \dx.
\end{equation}
Taking the functional derivative:
\begin{equation}
    \frac{\delta\md_\mathrm{KL}}{\delta \rho_t} = \ln\rho_t(x) + 1 - \ln\mutk(x).
\end{equation}
The gradient is then simply:
\begin{equation}
    \ggkxt = \nabla\ln\rho_t(x) - \nabla\ln\mutk(x).
\end{equation}

\section{Consistency models and MeanFlow}
\label{app:consistency_models}

\subsection{Flow map matching}
\label{app:flowmap_matching}

Consistency models have been presented in the literature in a wide variety of forms, with variations on, among other things:
discrete- vs. continuous-time training; the use of one- vs. two-time flow maps; 
the requirement (or lack thereof) for a pre-trained ``teacher'' flow matching model; and training objectives.

\textit{Flow map matching}~\cite{boffi2025flowmapmatching, boffi2025buildconsistencymodel} provides a useful framework unifying many SOTA continuous-time approaches, 
such as that of~\citet{song2023consistency}, consistency trajectory models~\cite{kim2023ctm}, shortcut models~\cite{frans2024shortcut}, and Align Your Flow~\cite{sabour2025alignyourflow}.
These models are generally trained via objectives based on one of three self-consistent conditions required for a valid flow map \flowmap.
\begin{align}
    \label{eq:flowmap_conditions}
    \textit{Semigroup condition:}& \quad \psi_{t_2, t_3}(\psi_{t_1, t_2}(x)) = \psi_{t_1, t_3}(x), \\
    \textit{Eulerian condition:}& \quad \partial_{t_1} \flowmap(x) + \nabla \flowmap(x) \vtone(x) = 0, \\
    \textit{Lagrangian condition:}& \quad \partial_{t_2} \flowmap(x) = \vttwo(\flowmap(x)),
\end{align}
for all $(t_1, t_2, t_3) \in [0, 1]^3$ and $x \in \mathbb{R}^d$, where $\vt(x)\colon [0, 1] \times \mathbb{R}^d \to \mathbb{R}^d$ is the instantaneous velocity field satisfying 
$\vt(X_t) = \frac{\dd X_t}{\dt}$ along a trajectory $X_{t_2} = \flowmap(X_{t_1})$.
Shortcut models, for example, are trained to satisfy the semigroup condition, while most other consistency models the Eulerian;
the Lagrangian condition is a relatively newer idea introduced by~\citet{boffi2025flowmapmatching, boffi2025buildconsistencymodel}.
In our experiments, we consider only the Eulerian and Lagrangian conditions for their desirable theoretical properties~\citep{boffi2025buildconsistencymodel} and superior performance in computer vision.

By substituting the flow map parametrization in terms of the mean velocity $\umap\colon [0, 1]^2 \times \mathbb{R}^d \to \mathbb{R}^d$:
\begin{equation}
    \label{eq:meanflow_parametrization}
    \flowmap(X_{t_1}) = X_{t_1} + (t_2 - t_1) \umap(X_{t_1}),
\end{equation}
we obtain the Eulerian and Lagrangian conditions in terms of \umap:
\begin{align}
    \label{eq:flowmap_eulerian_u}
    \textit{Eulerian condition:}& \quad \umap(x) = \vtone(x) + (t_2 - t_1) (\vtone\!(x)\, \partial_x \umap + \partial_{t_1}\! \umap). \\
    \label{eq:flowmap_lagrangian_u}
    \textit{Lagrangian condition:}& \quad  \umap(x) = \vttwo(\flowmap(x)) - (t_2 - t_1) \partial_{t_2} \umap(x).
\end{align}

\paragraph{Flow map self-distillation.}
We can next inject a training target, referred to as a ``teacher'' flow map, into these conditions,
to obtain a training objective to \emph{distill} the teacher into a consistency model \flowmaptheta (or, equivalently, \umappar).
Generally, distillation requires a pre-trained flow matching model as the teacher;
however, in this work we focus exclusively on the \emph{self-distillation} paradigm, 
wherein the ``teacher'' target is formed directly from the CFM-style conditional velocity targets $\vt(x|z)$ 
(modified in Sec.~\ref{sec:conditional_otpfm} for OTP-FM).
Explicitly, we consider the Eulerian and Lagrangian self-distillation (ESD and LSD) training objectives:
\begin{align}
    \label{eq:flowmap_esd_objective}
    \mathcal{L}_\mathrm{ESD}(\theta) &:= \mathbb{E}_{t_1, t_2,\, x_{t_1}\sim \rho_{t_1}(\cdot|z),\, z\sim q} 
    \left\| \umappar(x_{t_1}) - \left[\vtone(x_{t_1}|z) + (t_2 - t_1) (\vtone\!(x_{t_1}|z)\!\, \partial_x \umappar + \partial_{t_1}\! \umappar)\right] \right\|^2, \\
    \label{eq:flowmap_lsd_objective}
    \mathcal{L}_\mathrm{LSD}(\theta) &:= \mathbb{E}_{t_1, t_2,\, x_{t_1}\sim \rho_{t_1}(\cdot|z),\, z\sim q} 
    \left\| \umappar(x_{t_1}) - \left[\vttwo(\flowmap(x_{t_1})|z) - (t_2 - t_1) \partial_{t_2} \umappar(x_{t_1})\right] \right\|^2,
\end{align}
where we see that we aim to minimize the residual between the LHS and RHS of the Eulerian and Lagrangian conditions through a mean-squared error regression loss,
with the ground-truth training signal injected via $\vt(x|z)$.

As in many (continuous-time) consistency models, this objective involves derivatives of \upar with respect to its inputs $(x, t_1, t_2)$;
these can be computed efficiently using the Jacobian-vector product (JVP) operation implemented in modern automatic differentiation libraries such as PyTorch and JAX.
Furthermore, to avoid ``double-backpropagation'' through the JVP and improve training stability, it is standard practice to not backpropagate through the RHS term,
represented by the ``stop-gradient'' operator $\sg$:
\begin{align}
    \label{eq:flowmap_esd_objective_sg}
    \mathcal{L}_\mathrm{ESD}(\theta) &:= \mathbb{E}_{t_1, t_2,\, x_{t_1}\sim \rho_{t_1}(\cdot|z),\, z\sim q} 
    \left\| \umappar(x_{t_1}) - \sg\left[\vtone(x_{t_1}|z) + (t_2 - t_1) (\vtone\!(x_{t_1}|z)\!\, \partial_x \umappar + \partial_{t_1}\! \umappar)\right] \right\|^2, \\
    \label{eq:flowmap_lsd_objective_sg}
    \mathcal{L}_\mathrm{LSD}(\theta) &:= \mathbb{E}_{t_1, t_2,\, x_{t_1}\sim \rho_{t_1}(\cdot|z),\, z\sim q} 
    \left\| \umappar(x_{t_1}) - \sg\left[\vttwo(\flowmap(x_{t_1})|z) - (t_2 - t_1) \partial_{t_2} \umappar(x_{t_1})\right] \right\|^2,
\end{align}
We note our stop-gradient version of LSD differs slightly from the original in~\citet{boffi2025buildconsistencymodel} with the time-derivative moved inside the \sg operator for further stability.
Unlike ESD, LSD does not involve a spatial derivative through \flowmap or the learnt model, which can in principle improve training stability.
We experiment with both (using the MeanFlow and improved MeanFlow (iMF) instantiations of ESD below) and find iMF to be most performant, 
as shown in Apps.~\ref{app:exp_gaussians} and~\ref{app:exp_singlecell}.

\subsection{MeanFlow and improved MeanFlow}
\label{app:meanflow}

The MeanFlow objective is an instantiation of the Eulerian condition that has proven particularly performant; 
indeed, as of this writing, its improved variant is SOTA in one- and two-step generative modeling.
The original formulation focuses on the average velocity object between two time points $t_1$ and $t_2$:
\begin{equation}
    \label{eq:average_velocity}
    \umap(X_{t_1}) = \frac{1}{t_2 - t_1} \int_{t_1}^{t_2} \vt(X_t) \dt,
\end{equation}
but is equivalent to the flow-map based description above and in Sec.~\ref{sec:background_consistency}.

One modification we make to the original MeanFlow objective is to privilege $t_1$ instead of $t_2$, in order to flow \textit{forward} instead of \textit{backward} in time.
The corresponding regression target is derived as follows, following the same steps as~\citet{geng2025meanflow}.

First, multiplying both sides of Eq.~\ref{eq:average_velocity} by $t_2 - t_1$:
\begin{equation}
    (t_2 - t_1) \umap(X_{t_1}) = \int_{t_1}^{t_2} \vt(X_t) \dt.
\end{equation}
Next, taking the derivative of both sides with respect to $t_1$ (instead of $t_2$) and rearranging:
\begin{equation}
    \umap(X_{t_1}) = \vtone(X_{t_1}) + (t_2 - t_1) \frac{\dd{}}{\dt_1} \umap(X_{t_1}).
\end{equation}
Finally, expanding out the time derivative in terms of the partial derivatives, and using $\frac{\dd{X_{t_1}}}{\dt_1} = \vtone(X_{t_1})$, $\frac{\dd{t_1}}{\dt_1} = 1$, and $\frac{\dd{t_2}}{\dt_1} = 0$, we
 obtain the final regression target:
\begin{equation}
    \label{eq:meanflow_regression}
    \umap(X_{t_1}) = \vtone(X_{t_1}) + (t_2 - t_1) (\vtone\!(X_{t_1}\!)\, \partial_x \umap + \partial_{t_1}\! \umap).
\end{equation}
As stated above, this yields the exact same regression target as ESD (Eq.~\ref{eq:flowmap_esd_objective_sg}).

More recently,~\citet{geng2026improved} propose the ``improved MeanFlow'' (iMF) objective,
with the primary difference being the replacement of the second instance of $\vtone(x_{t_1}|z)$ 
in the MeanFlow target with the instantaneous velocity prediction from the model itself $\umapparnot_{t_1, t_1}(x_{t_1})$.\footnote{Alternatively, 
they propose the possibility of using an auxiliary head to predict this instantaneous velocity, but we elect to use a single head inputting $t_1$ for both time inputs.}
Incorporating this and rearranging terms to match their convention, we obtain the iMF objective:
\begin{equation}
    \label{eq:imf_objective}
    \mathcal{L}_\mathrm{iMF}(\theta) := \mathbb{E}_{t_1, t_2,\, x_{t_1}\sim \rho_{t_1}(\cdot|z),\, z\sim q} 
    \bigg\| \underbrace{\umappar(x_{t_1}) - (t_2 - t_1)\,  \sg\!\left[\umapparnot_{t_1, t_1}(x_{t_1})\!\, \partial_x \umappar + \partial_{t_1}\! \umappar\right]}_{\equiv \Upar} - \, \vtone(x_{t_1}|z) \bigg\|^2,
\end{equation}
where \Upar is a compound function of \umappar that directly regresses the conditional instantaneous velocity regression target.
\citet{geng2026improved} argue this substitution of $\umapparnot_{t_1, t_1}(x_{t_1})$ for $\vtone(x_{t_1}|z)$ reduces the regression loss variance and improves training stability.
Empirically, we find this modestly outperforms MeanFlow and LSD, as shown in App.~\ref{app:exp_singlecell}.
\section{Solutions for Gaussian marginals}
\label{app:otpfm_derivations_gaussian}


For the special case of $d$-dimensional isotropic Gaussian marginals, we can derive closed-form solutions for the OTP problem
under the ansatz that \rhot is an isotropic Gaussian with time-dependent mean \mrhot and variance $\srhot^2 \identity_d$.
In this case, following~\citet{lipman2023flow}, the marginal velocity takes the form:
\begin{equation}
    \vt(x) = \frac{\dsrhot}{\srhot} (x - \mrhot) + \dmrhot.
\end{equation}
This means the dynamic OT kinetic energy reduces to:
\begin{equation}
\begin{aligned}
    T(x) = \frac{1}{2} \int \|\vt(x)\|^2 \dd{\rho_t(x)} &= \frac{1}{2} \mathbb{E} \|\vt(x)\|^2 \\
     &= \frac{1}{2} \Bigg[\frac{\dsrhot^2}{\srhot^2} \underbrace{\mathbb{E} \|x - \mrhot\|^2}_{=\mathrm{tr}(\srhot^2\identity_d) = d\srhot^2} + 
     2 \cdot \frac{\dsrhot}{\srhot} \cdot \dmrhot \cdot \underbrace{\mathbb{E}(x - \mrhot)}_{=0} + \|\dmrhot\|^2\Bigg] \\
     &= \frac{1}{2} (\|\dmrhot\|^2 + d \dsrhot^2).
\end{aligned}
\end{equation}
Each distance $\dptk \equiv \md[\rhot, \mutk]$ similarly reduces to a function of $(\mrhot, \srhot)$ alone, as derived explicitly for different distance metrics below.
In general, this means a Lagrangian of the form:
\begin{equation}
    \label{eq:gaussian_lagrangian}
    \mathcal{L}(\mrhot, \srhot, \dmrhot, \dsrhot, t) = \frac{1}{2}\left(\|\dmrhot\|^2 + d\dsrhot^2\right) + \sum_{k=1}^K w_k \lkt \, \dptk,
\end{equation}
where the distance \dptk depends only on $(\mrhot, \srhot)$ and the fixed parameters $(\mmutk, \smutk)$ of the intermediate marginal.
The E-L equations for the generalized coordinates $(\mrhot, \srhot)$ are therefore:
\begin{align}
    \label{eq:gaussian_el_m}
    \ddmrhot &= \sum_{k=1}^K w_k \lkt \, \nabla_{\mrhot} \dptk, \\
    \label{eq:gaussian_el_sigma}
    \ddsrhot &= \frac{1}{d}\sum_{k=1}^K w_k \lkt \, \frac{\partial \dptk}{\partial \srhot}.
\end{align}

By plugging in the explicit forms for the distances and gradients derived below,
this yields second-order ODEs for the time evolution of \mrhot and \srhot that can be efficiently simulated numerically.
To solve the boundary value problem, matching $m_{\rho_0} = m_{\mu_0}$, $\sigma_{\rho_0} = \sigma_{\mu_0}$, $m_{\rho_1} = m_{\mu_1}$, and $\sigma_{\rho_1} = \sigma_{\mu_1}$,
we use the shooting method~\cite{stoer1980introduction} to reduce it to finding the initial conditions that solve an initial value problem yielding the desired final conditions.
Specifically, we use an implicit Runge-Kutta method~\cite{hairer1996solving} (the \texttt{Radau} method implemented in the \texttt{scipy.integrate} library~\cite{virtanen2020scipy}) 
to simulate the E-L equations as first-order ODEs in the velocities of the mean and standard deviation,
and ``Broyden's good method''~\cite{vanderrotten2003broyden} (\texttt{scipy.optimize.broyden1}) to solve for the initial velocities.
The exact code is provided in the linked \otpfm repository.

\subsection{The \wass distance}

The \wass distance between two Gaussians \rhot and \mutk has the well-known form --- also known as the Fr\'echet distance --- that reduces in the case of isotropic Gaussians to:
\begin{equation}
    \dW[\rhot, \mutk] = \|\mrhot - \mmutk\|^2 + d(\srhot - \smutk)^2.
\end{equation}
The gradients are simply:
\begin{align}
    \nabla_{\mrhot} \dptk = 2(\mrhot - \mmutk), \\
    \frac{\partial\dptk}{\partial \srhot} = 2d(\srhot - \smutk).
\end{align}

\subsection{KL divergence}
The KL divergence between two Gaussians \rhot and \mutk similarly has a known form~\cite{pardo2006statistical}, reducing to:
\begin{equation}
    \dKL[\rhot \| \mutk] = \frac{1}{2}\left[d\ln\frac{\smutk^2}{\srhot^2} + d\frac{\srhot^2}{\smutk^2} + \frac{\|\mrhot - \mmutk\|^2}{\smutk^2} - d\right],
\end{equation}
with gradients:
\begin{align}
    \nabla_{\mrhot} \dptk &= \frac{\mrhot - \mmutk}{\smutk^2}, \\
    \frac{\partial\dptk}{\partial \srhot} &= d\left(\frac{\srhot}{\smutk^2} - \frac{1}{\srhot}\right).
\end{align}

\subsection{MMD (RBF kernel)}

We first consider MMD with the RBF kernel (Eq.~\ref{eq:mmd_rbf}) with bandwidth $\gamma$.
The squared MMD between two distributions is (Eq.~\ref{eq:mmd}):
\begin{equation}
    \dmmd[\rhot, \mutk] = \mathbb{E}_{x,x' \sim \rhot}[k(x,x')] - 2\mathbb{E}_{x \sim \rhot, y \sim \mutk}[k(x,y)] + \mathbb{E}_{y,y' \sim \mutk}[k(y,y')].
\end{equation}

For the self-terms, if $x, x' \sim \mathcal{N}(m, \sigma^2 I_d)$ are independent, then $x - x' \sim \mathcal{N}(0, 2\sigma^2 I_d)$.
Using the moment generating function of a chi-squared distribution:
\begin{equation}
    \mathbb{E}_{x,x' \sim \mathcal{N}(m, \sigma^2 I)}[k(x,x')] = \mathbb{E}\left[\exp\left(-\frac{\|x - x'\|^2}{2\gamma^2}\right)\right] = \left(\frac{\gamma^2}{\gamma^2 + 2\sigma^2}\right)^{d/2}.
\end{equation}

For the cross-term, if $x \sim \mathcal{N}(m_1, \sigma_1^2 I_d)$ and $y \sim \mathcal{N}(m_2, \sigma_2^2 I_d)$ are independent, 
then $x - y \sim \mathcal{N}(m_1 - m_2, (\sigma_1^2 + \sigma_2^2) I_d)$.
Hence:
\begin{equation}
    \mathbb{E}_{x \sim \rhot, y \sim \mutk}[k(x,y)] = \left(\frac{\gamma^2}{\gamma^2 + \srhot^2 + \smutk^2}\right)^{d/2} \exp\left(-\frac{\|\mrhot - \mmutk\|^2}{2(\gamma^2 + \srhot^2 + \smutk^2)}\right).
\end{equation}

Defining the effective variances $\taurho^2 = \gamma^2 + 2\srhot^2$, $\taumu^2 = \gamma^2 + 2\smutk^2$, and $\tauplus^2 = \gamma^2 + \srhot^2 + \smutk^2$, and writing $\Delta m = \mrhot - \mmutk$:
\begin{equation}
    \dmmd[\rhot, \mutk] = \left(\frac{\gamma^2}{\taurho^2}\right)^{d/2} + \left(\frac{\gamma^2}{\taumu^2}\right)^{d/2} - 2\left(\frac{\gamma^2}{\tauplus^2}\right)^{d/2} \exp\left(-\frac{\|\Delta m\|^2}{2\tauplus^2}\right).
\end{equation}

Finally, the gradients are:
\begin{align}
    \nabla_{\mrhot} \dmmd &= \frac{2\Delta m}{\tauplus^2} \left(\frac{\gamma^2}{\tauplus^2}\right)^{d/2} \exp\left(-\frac{\|\Delta m\|^2}{2\tauplus^2}\right), \\
    \frac{\partial \dmmd}{\partial \srhot} &= -2d\srhot \left(\frac{\gamma^2}{\taurho^2}\right)^{d/2} \frac{1}{\taurho^2}
     + 2\srhot \left(\frac{\gamma^2}{\tauplus^2}\right)^{d/2} \exp\left(-\frac{\|\Delta m\|^2}{2\tauplus^2}\right) \left[\frac{d}{\tauplus^2} - \frac{\|\Delta m\|^2}{\tauplus^4}\right].
\end{align}

\subsection{MMD (polynomial kernel)}

\begin{figure}[tb]
    \centering
    \includegraphics[width=0.3\textwidth]{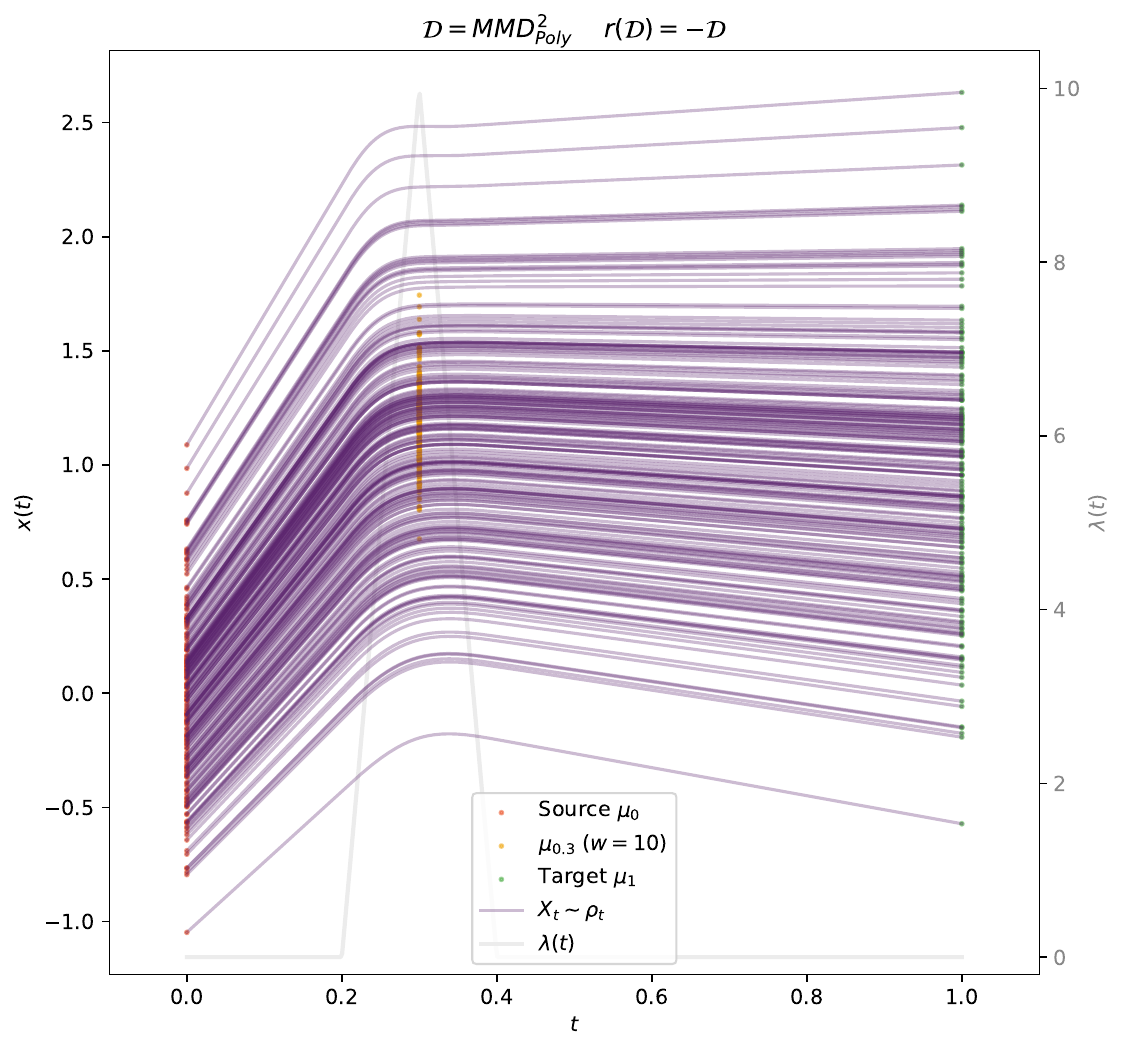}
    \caption{A characteristic example of the exact dynamic OT solution (under the Gaussian ansatz) 
    with MMD and a polynomial kernel failing to enforce the intermediate marginal at high strength.}
    \label{fig:mmd_poly}
\end{figure}

We next consider the polynomial kernel (Eq.~\ref{eq:mmd_poly}) with degree 2. 
Higher-degree kernels can be computed similarly but are more cumbersome.

As the quadratic kernel is $k(x, y) = (x^\top y + c)^2$, we first compute $\mathbb{E}[(x^\top y)^2]$.
For independent $x, x' \sim \mathcal{N}(m, \sigma^2 I_d)$:
\begin{equation}
    \mathbb{E}[(x^\top x')^2] = \|m\|^4 + d\sigma^4 + 2\sigma^2\|m\|^2.
\end{equation}
For independent $x \sim \mathcal{N}(m_1, \sigma_1^2 I_d)$ and $y \sim \mathcal{N}(m_2, \sigma_2^2 I_d)$:
\begin{equation}
    \mathbb{E}[(x^\top y)^2] = (m_1^\top m_2)^2 + d\sigma_1^2\sigma_2^2 + \sigma_1^2\|m_2\|^2 + \sigma_2^2\|m_1\|^2.
\end{equation}
Hence, combining all terms with $\mathbb{E}[(x^\top y + c)^2] = \mathbb{E}[(x^\top y)^2] + 2c\mathbb{E}[x^\top y] + c^2$:
\begin{equation}
\begin{aligned}
    \dmmd[\rhot, \mutk] &= \|\mrhot\|^4 + d\srhot^4 + 2\srhot^2\|\mrhot\|^2 + 2c\|\mrhot\|^2 + c^2 \\
    &\quad - 2\Big[(\mrhot^\top \mmutk)^2 + d\srhot^2\smutk^2 + \srhot^2\|\mmutk\|^2 + \smutk^2\|\mrhot\|^2 + 2c\mrhot^\top\mmutk + c^2\Big] \\
    &\quad + \|\mmutk\|^4 + d\smutk^4 + 2\smutk^2\|\mmutk\|^2 + 2c\|\mmutk\|^2 + c^2.
\end{aligned}
\end{equation}
Finally, the gradients are:
\begin{align}
    \nabla_{\mrhot} \dmmd 
    &= 4\Big[(\|\mrhot\|^2 + \srhot^2 - \smutk^2 + c)\mrhot - (\mrhot^\top \mmutk + c)\mmutk\Big], \\
    \frac{\partial \dmmd}{\partial \srhot} 
    &= 4\srhot\Big[d(\srhot^2 - \smutk^2) + (\|\mrhot\|^2 - \|\mmutk\|^2)\Big].
\end{align}

We find this polynomial kernel to not be performant, even for this simple case of Gaussian marginals.
A characteristic example is shown in Fig.~\ref{fig:mmd_poly}, for the same marginal configuration as in the second MMD RBF example in Fig.~\ref{fig:gaussians}.
As we see, while the force of the polynomial kernel drives the mean as expected, it is not as effective on the variance;
hence, we do not pursue it further in this work.

\section{Details of the OTP-FM algorithm}
\label{app:otpfm}

\begin{figure*}[b]
\centering
\includegraphics[width=0.19\textwidth]{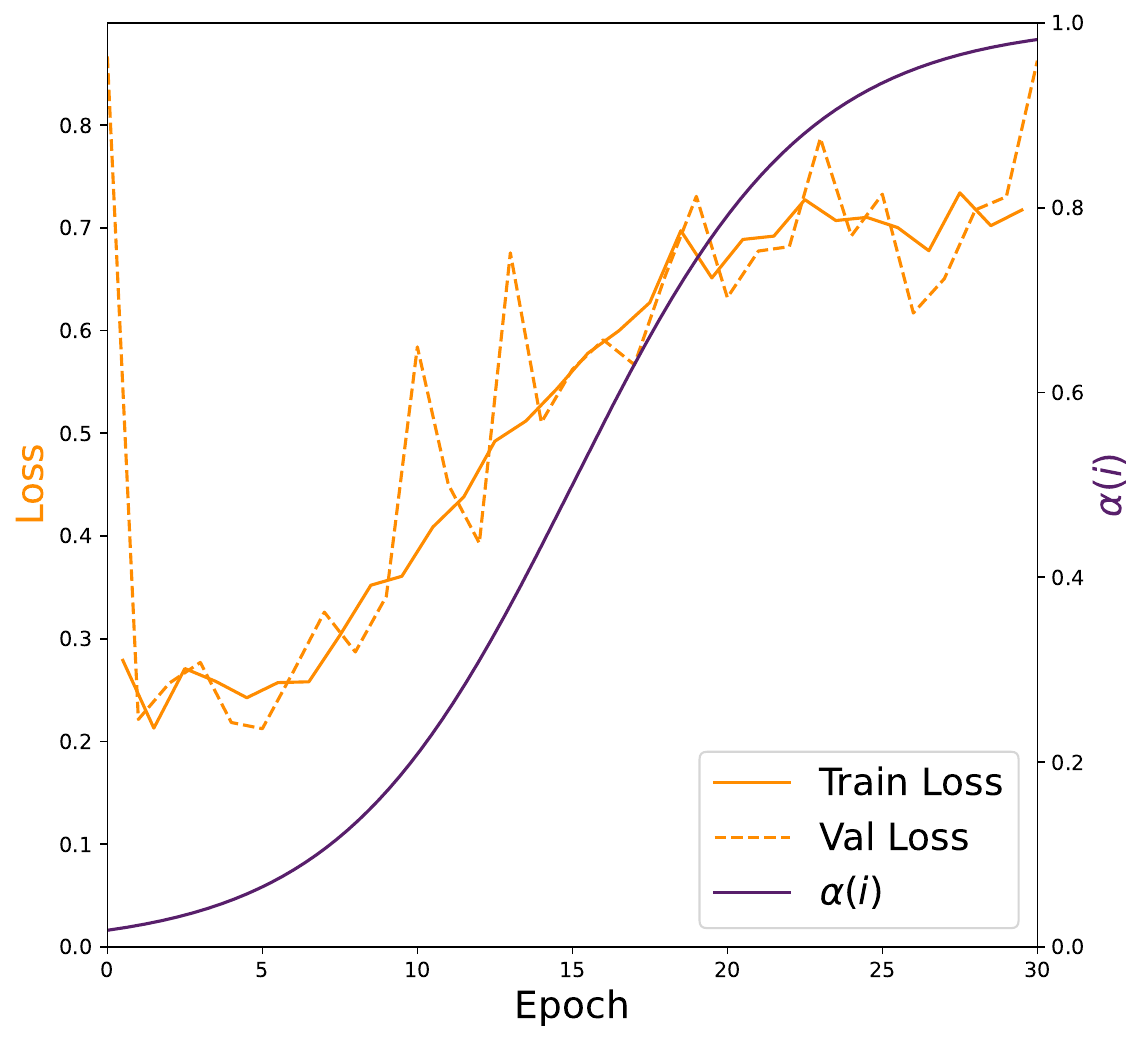}
\includegraphics[width=0.605\textwidth]{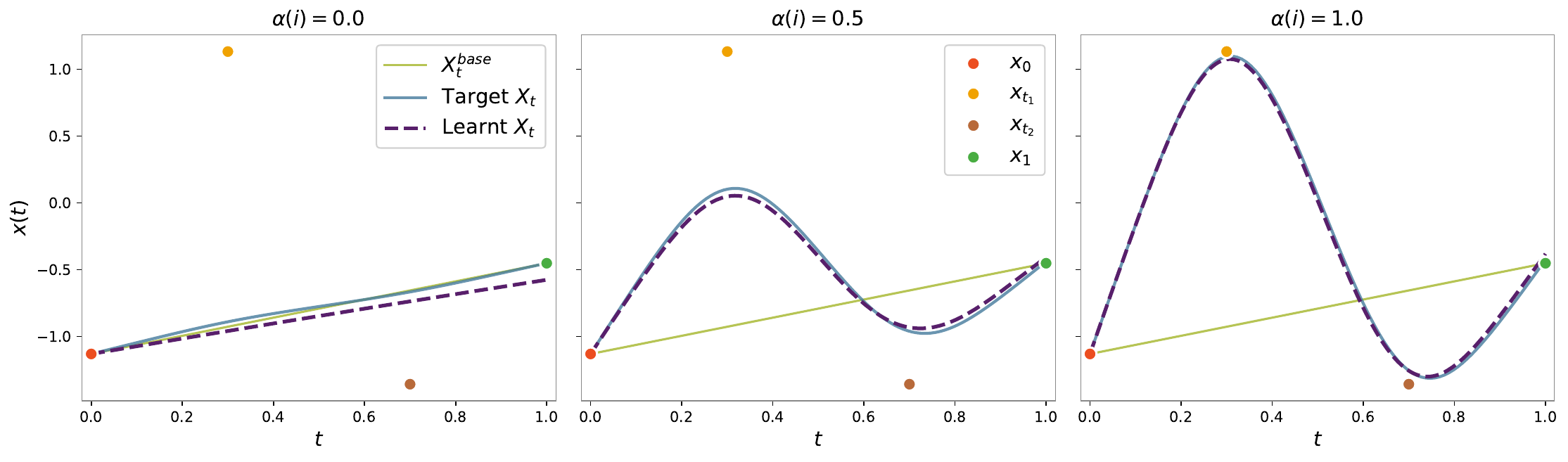}
\includegraphics[width=0.195\textwidth]{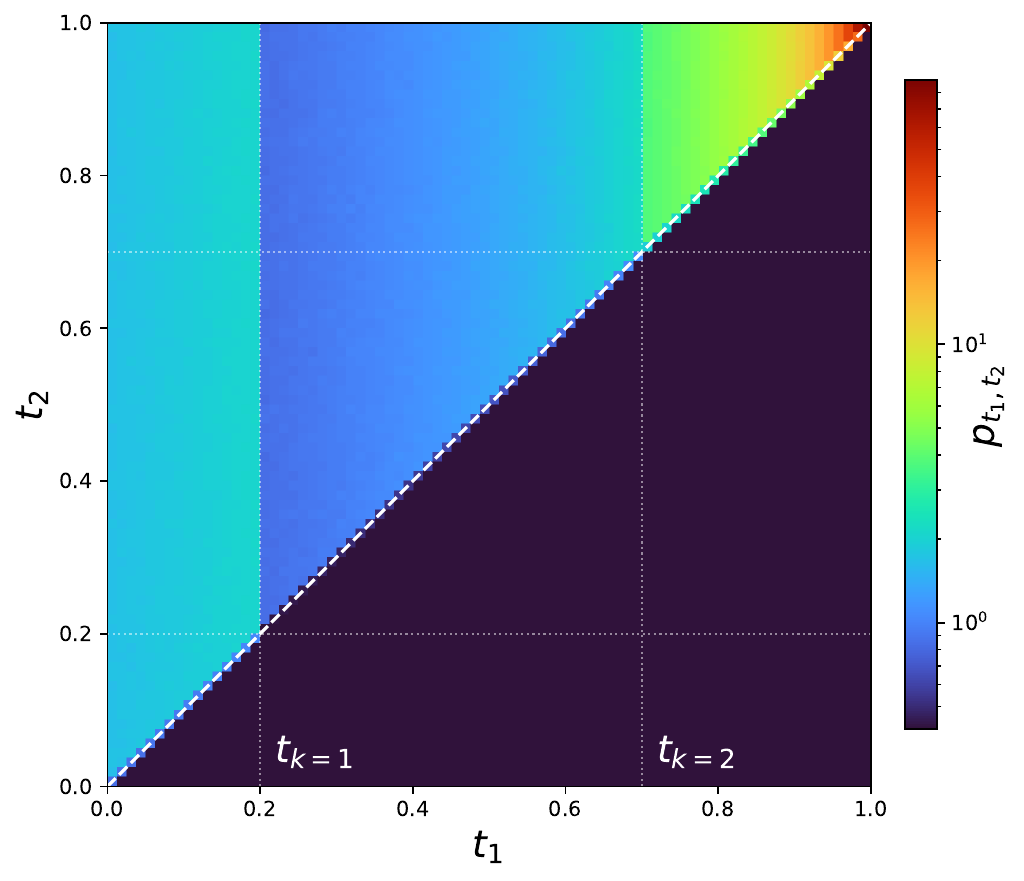}
\caption{
    \textit{Left:} Example loss curves and \alphai schedule.
    \textit{Center:} Progression of target conditional trajectories during training for the \wass distance based on example samples from four 1D marginals.
    \textit{Right:} Example time sampling distribution \ptimes.
}
\label{fig:training}
\end{figure*}

\subsection{Time sampling}
\label{app:time_sampling}

We sample the training times $t_1$ and $t_2$ according to the joint distribution:
\begin{equation}
    \label{eq:time_sampling}
    \ptimes = \frac{1}{K+1}\sum_{k=0}^{K} \frac{\identity_{[t_k, t_{k+1}]}(t_1)}{t_{k+1} - t_k} \cdot 
    \frac{\identity_{[t_1,1]}(t_2)}{1-t_1};
\end{equation}
i.e., uniformly sampling $t_1$ but with each of the $K+1$ intervals between consecutive marginals weighted equally,
and $t_2$ uniformly on the interval $[t_1, 1]$, so that $t_2 > t_1$ by construction (Fig.~\ref{fig:training}).
This follows the convention of most consistency models of training only the upper or lower triangle of $\upar(t_1, t_2)$;
however, we note there is no requirement in general to restrict $t_2 > t_1$ and, indeed, allowing $t_2 < t_1$ may be desirable in some applications for flowing ``backwards''.
We use positional embeddings~\cite{vaswani2017attention} to encode $t_1$ and $\Delta t = t_2 - t_1$ as inputs to the model.

\subsection{Fixed-point iteration convergence and homotopy continuation}
\label{app:homotopy_continuation}

As stated in Sec.~\ref{sec:otpfm_algo}, the curriculum on the correction strength in OTP-FM 
is an application of the homotopy continuation method~\cite{allgower2003continuation}.
Here, we analyze the convergence of the FP iterations and why the homotopy curriculum \alphai helps stabilize them.

\paragraph{Contraction analysis}
Stacking the $K$ intermediate positions as $X_T \equiv [X_{t_1}, \dots, X_{t_K}]^\top$, 
the FP iterations of Eq.~\ref{eq:homotopy} can be written as $X_T^{(m+1)} = F_\alpha(X_T^{(m)})$ with
\begin{equation}
    \label{eq:fp_map_appendix}
    F_\alpha(X_T)_i = \xbaseti + \alpha \sum_{k=1}^K A_{ik}\, \ggkXtkb,
\end{equation}
where $A_{ik}$ is as in Eq.~\ref{eq:fp_w2_main}.
If each force $\ggk(\cdot, t_k, \batch)$ is Lipschitz-continuous with constant $L_k$, then the Jacobian of $F_\alpha$ is uniformly bounded:
\begin{equation}
    \label{eq:fp_lipschitz}
    \|J F_\alpha\| \le \alpha \max_i \sum_{k=1}^K |A_{ik}|\, L_k \equiv L_F.
\end{equation}
By the Banach fixed-point theorem, whenever $L_F < 1$, $F_\alpha$ is a contraction and the iterations $X_T^{(m)}$ converge linearly to a unique fixed point at rate at most $L_F$.

\paragraph{Lipschitz constants}
We can compute $L_k$ directly from the Jacobians of the gradient estimators in Table~\ref{tab:potential_derivatives}:
\begin{itemize}
    \item \wass: $\ggkXtkb = X_{t_k} - x_{t_k}$ with $x_{t_k}$ fixed (Sec.~\ref{sec:otpfm_forces}), so $L_k^{\wass} = 1$ and $F_\alpha$ is in fact linear in $X_T$;
    we solve it directly by matrix inversion (App.~\ref{app:fp_w2}) and FP iterations are unnecessary.
    \item \dmmd with an RBF kernel $k_{\sigma_k}(x, y) = \exp(-\|x - y\|^2 / 2\sigma_k^2)$: $L_k^{\mathrm{MMD}} \le 4/\sigma_k^2$.
    \item \dKL with a KDE score estimator: $L_k^{\mathrm{KL}} \lesssim \sup_x \|J \nabla\ln\hat\rho_{t_k}(x)\| + \sup_x \|J \nabla\ln\hat\mu_{t_k}(x)\|$, 
    controlled by the KDE bandwidth and diverging in low-density regions.
    The Gaussian-score variant (Sec.~\ref{sec:otpfm_forces}) instead yields $L_k^{\mathrm{KL,Gauss}} \le \|\Sigma_{\rhotk}^{-1}\|_2 + \|\Sigma_{\mutk}^{-1}\|_2$, finite away from degenerate covariances.
\end{itemize}

The \dmmd bound follows from direct differentiation of the corresponding expression in Table~\ref{tab:potential_derivatives}.
For \dKL, $J\ggkxt = J\nabla\ln\hat\rho_{t_k}(x) - J\nabla\ln\hat\mu_{t_k}(x)$ and the triangle inequality yields the KDE bound;
for the Gaussian estimator, $\nabla\ln\mathcal N(x; m, \Sigma) = -(x - m)^\top \Sigma^{-1}$ has constant Jacobian $-\Sigma^{-1}$, so $\|J\ggkxt\|_2 \le \|\Sigma_{\rhotk}^{-1}\|_2 + \|\Sigma_{\mutk}^{-1}\|_2$.

Intuitively, narrower kernels and stronger potentials (larger $w_k$, hence larger $|A_{ik}|$) yield larger $L_F$ and slower or non-contractive iterations;
KL with KDE scores is particularly sensitive due to the divergence of $L_k^{\mathrm{KL}}$ in sparse regions.
This matches our empirical findings and motivates both damped Anderson acceleration~\cite{anderson1965iterative} of the iterations and the homotopy curriculum described next, 
though, as we discuss in Sec.~\ref{sec:conclusion}, we recommend the \wass potential for efficient, straightforward training, avoiding these complexities altogether.

\paragraph{Homotopy continuation}
To help stabilize the FP iterations, we introduce a homotopy curriculum \alphai that gradually increases the correction strength $\alpha$ from 0 to 1 over training iterations.
At $\alpha = 0$, $F_0(X_T)_i = \xbaseti$ has $L_F = 0$ and the trivially attractive fixed point $X_T = \xbase_T$.
By continuity of $J F_\alpha$ in $\alpha$ and the implicit function theorem, there exists a locally unique, smooth solution branch $\alpha \mapsto X_T^{\mathrm{FP}}(\alpha)$ that remains attractive on any sub-interval where $L_F < 1$.
A sufficiently slow continuation in $\alpha$ from $0$ to $1$ thus allows the training dynamics to track this branch, even when $F_1$ alone may not be globally contractive.
The schedule \alphai can be fixed as a hyperparameter, 
or dynamically adjusted during training --- for example, based on the residual $\|F_\alpha(X_T) - X_T\|$ --- to ensure a sufficiently slow continuation.
We find a fixed sigmoid schedule (Fig.~\ref{fig:training}) sufficient and practical, and ablate this choice in Sec.~\ref{sec:ablations}.
Surprisingly, even for the \wass distance, for which we have a closed-form solution, we find the homotopy curriculum improves the model.

\subsection{Direct solution of the fixed-point problem for the \wass distance}
\label{app:fp_w2}

As described briefly in Sec.~\ref{sec:otpfm_forces}, for the \wass distance, the fixed-point problem reduces to a linear system of equations that can be solved directly by matrix inversion.
We see this by writing out the fixed-point equations for the \wass distance, with $\ggkXtkb = \Xtk - \psi(\Xtk)$, where $\psi$ is the static OT transport map between \rhotk and \mutk:
\begin{equation}
    \label{eq:fp_w2_xts}
    X_{t_i}(x_0, x_1, \batch) = \xbaseti(x_0, x_1) + \sum_{k=1}^K \left(\Xtk - \psi(\Xtk)\right) \underbrace{\left[w_k \left(\Int[2]{\lambda_k}(t) - \Int[2]{\lambda_k} (1)t\right)\right]}_{\equiv A_{i, k}}, \quad \forall i \in \{1, ..., K\}.
\end{equation}
Rewriting this in matrix form, with $A \in \mathbb R^{K\times K}$, $X_T \equiv [X_{t_1}\, ...\, X_{t_K}]^T \in \mathbb R^{K\times D}$ and $\Psi_T(X_T) \equiv [\psi(X_{t_1})\, ...\, \psi(X_{t_K})]^T \in \mathbb R^{K\times D}$:
\begin{equation}
    \label{eq:fp_w2_xts_matrix}
    \begin{aligned}
    &X_T = \xbase_T + A(X_T - \Psi_T(X_T)) \\
    \Rightarrow \quad &(\identity - A)X_T = \xbase_T - A\Psi_T(X_T) \\
    \Rightarrow \quad &X_T = (\identity - A)^{-1} \left(\xbase_T - A\Psi_T(X_T)\right)
    \end{aligned}
\end{equation}
Thus, the fixed-point problem reduces to inverting and multiplying by a $K \times K$ matrix, where the inversion can be precomputed.
Though in general each application of this equation may change the map $\psi$, thereby still leaving us with a complex fixed-point problem that needs to be solved iteratively,
when conditioned on single samples from each marginal (as we do in OTP-FM), the map is fixed and a single application of Eq.~\ref{eq:fp_w2_xts_matrix} yields the fixed-point.
\section{Experimental details and studies on synthetic data}
\label{app:exp_gaussians}

\subsection{Architecture}
We use a three-layer MLP for the MeanFlow velocity model with 256 nodes per hidden layer and the SiLU activation function after each hidden layer.
As stated in Sec.~\ref{sec:otpfm_algo}, we embed the times $t_1$ and $\Delta t = t_2 - t_1$ using positional embeddings with 64D each,
and the position input with a single embedding layer of 64D as well.

\subsection{Training}
We use the Adam optimizer with a learning rate of $3 \cdot 10^{-3}$ and a batch size of 512 samples per marginal.
We use a sigmoid curriculum for the correction strength $\alpha(i)$ with a mean at half the total number of training iterations $N$ and a slope of $12/N$:
\begin{equation}
    \alpha(i) = \frac{1}{1 + \exp\left[-\frac{12}{N} \left(i - \frac{N}{2}\right)\right]},
\end{equation}
as shown in Fig.~\ref{fig:training}.
The training epochs vary between 10 and 50 for convergence depending on the potential strengths and distance metrics:
generally, stronger potentials and distances with harder FP problems (MMD and KLD) require more epochs and/or a slower curriculum.

\subsection{MeanFlow, LSD and few-step inference}
We train with a 0.75 probability of sampling $t_2 = t_1$ (i.e., on the $u_{t_1, t_2}$ diagonal) and 0.25 probability of sampling $t_2 > t_1$ according to the joint distribution $p_{t_1, t_2}$ (Eq.~\ref{eq:time_sampling}).
We find one or two steps per marginal sufficient for reasonably accurate inference of the intermediate \Xtk for the MMD and KLD distances, as illustrated in Fig.~\ref{fig:gaussians_mf}.
We maintain an EMA of the model weights \uparema for evaluating the \Xtk during training as well as for inference, with a decay factor of 0.99.
We also compare MeanFlow with the LSD training objective (App.~\ref{app:flowmap_matching}) with the same training settings, and find largely similar performance, leaving a more rigorous comparison to future work.
We use 50 steps per marginal for inference in all experiments.

\begin{figure}[tb]
    \centering
    \includegraphics[width=0.196\textwidth]{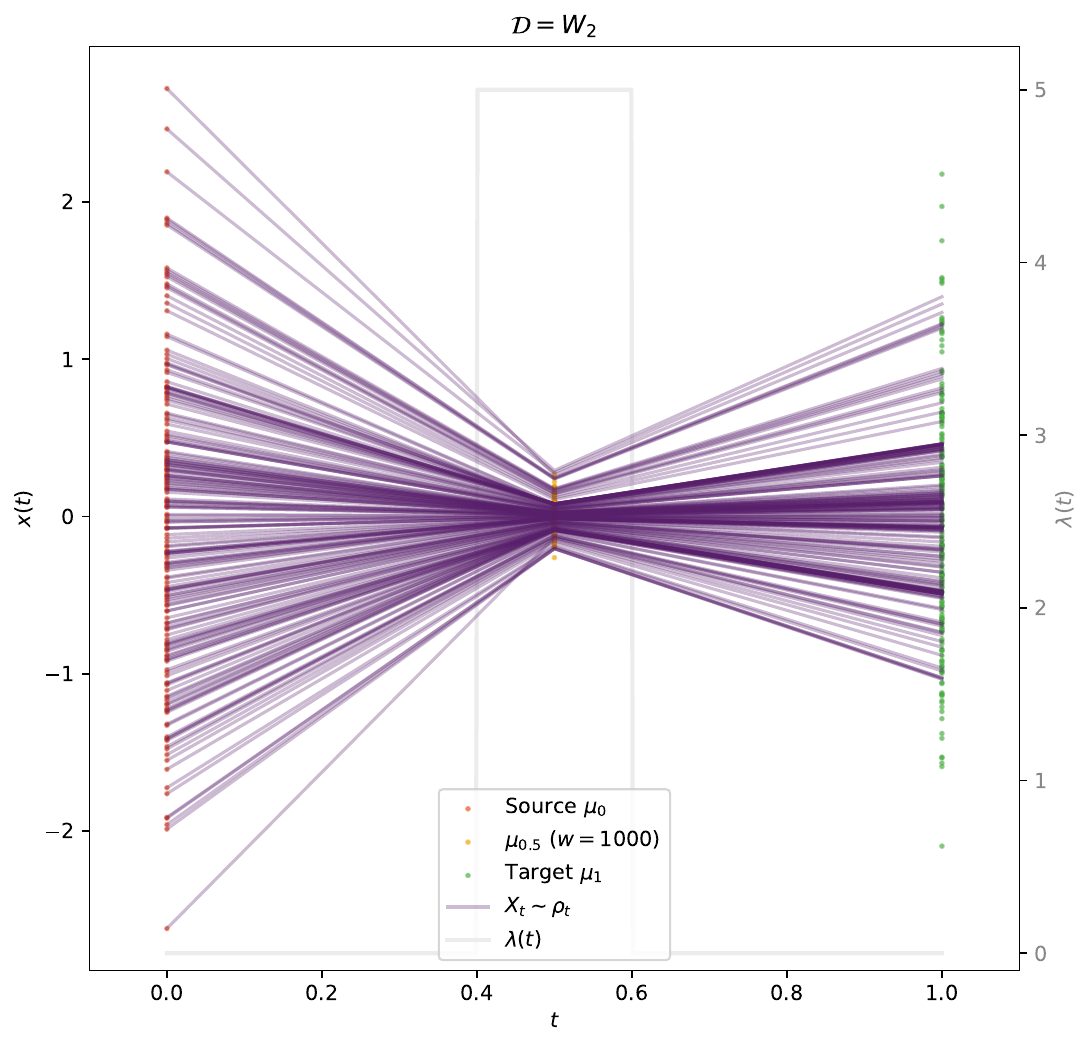}
    \includegraphics[width=0.196\textwidth]{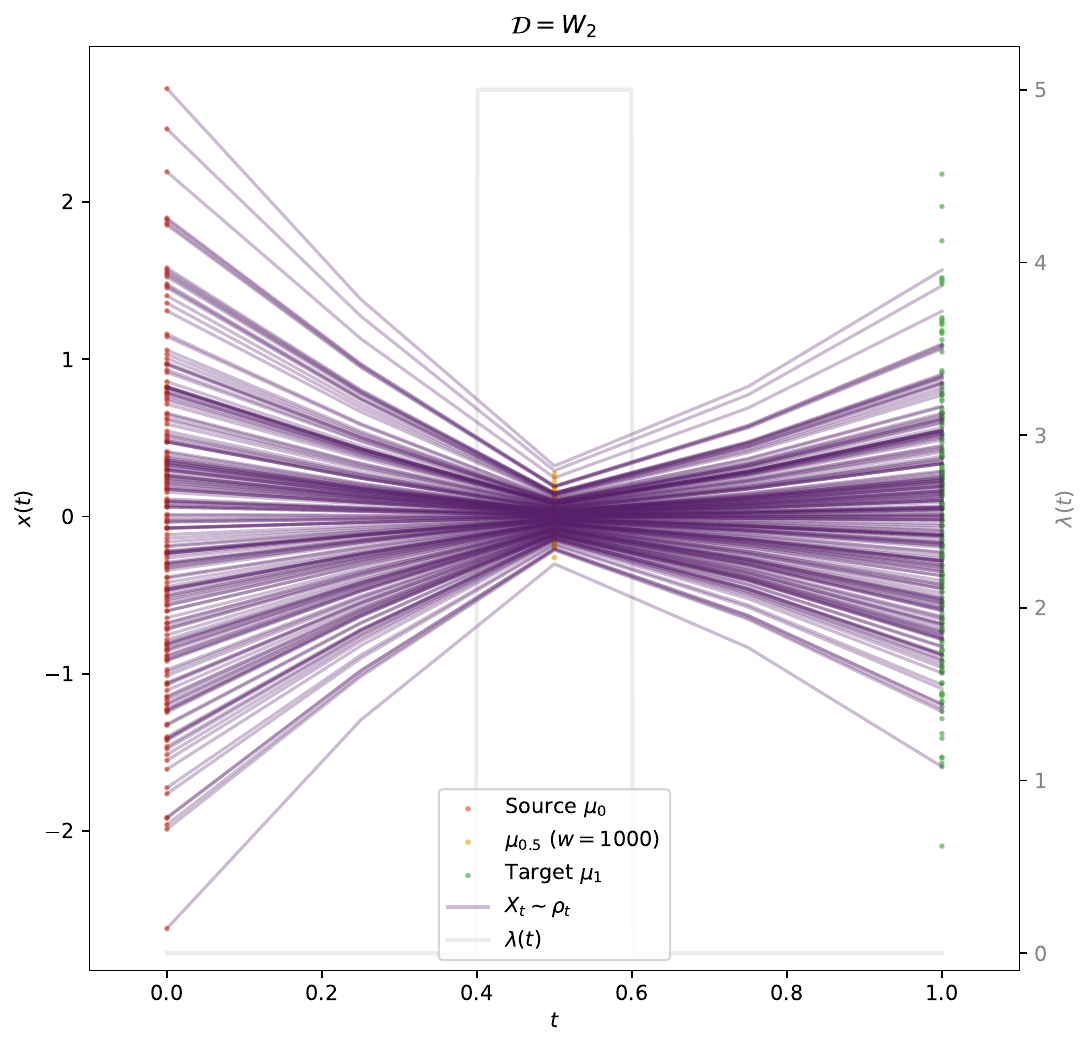}
    \includegraphics[width=0.196\textwidth]{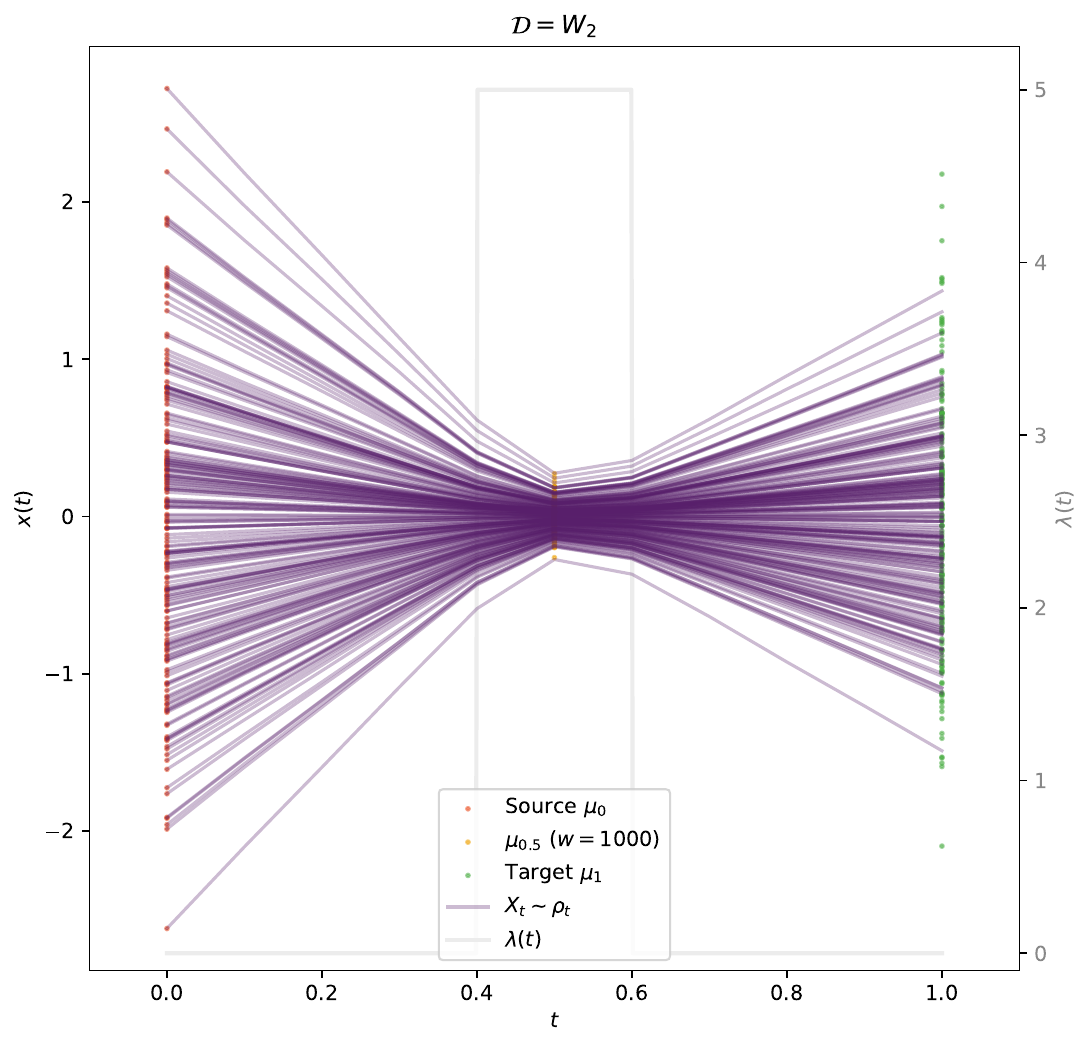}
    \includegraphics[width=0.196\textwidth]{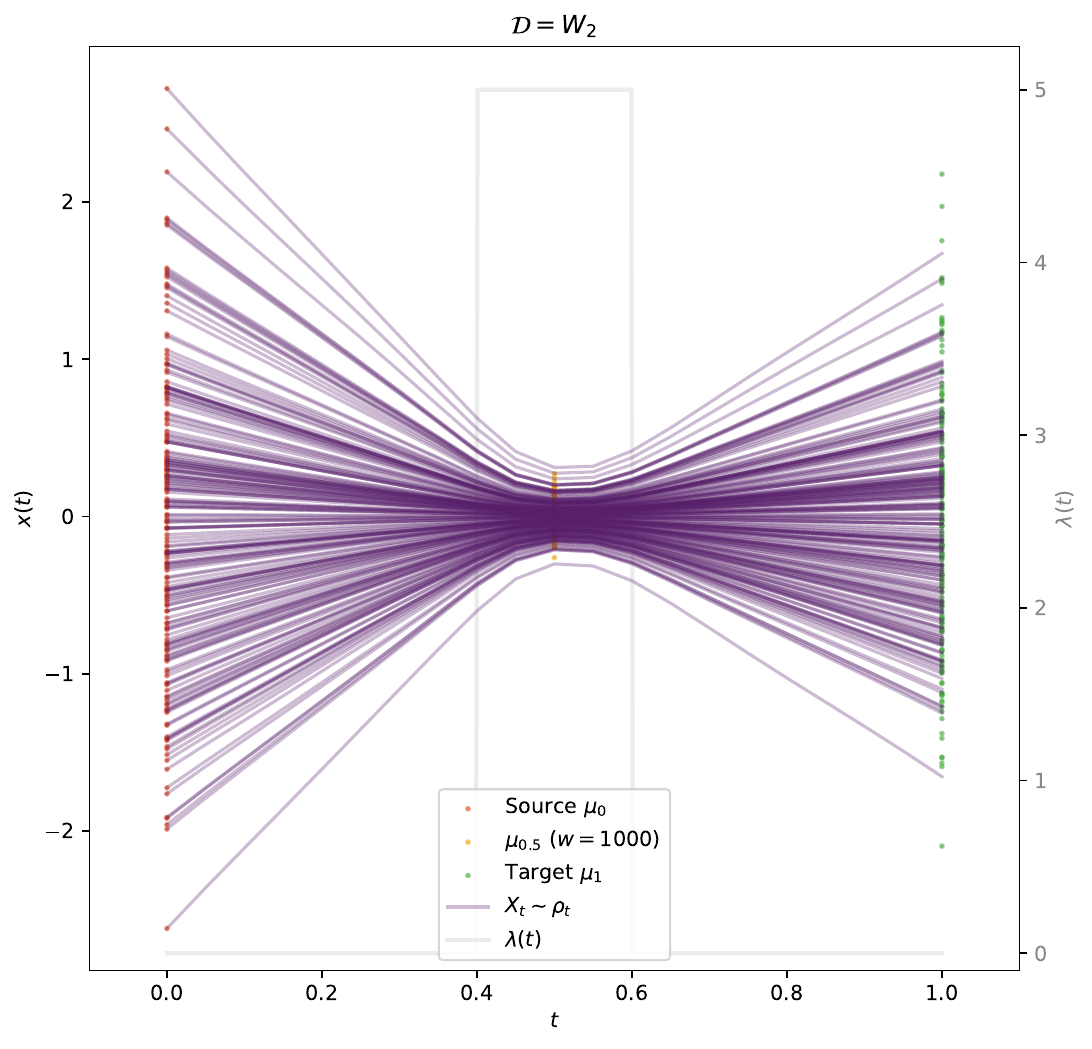}
    \includegraphics[width=0.196\textwidth]{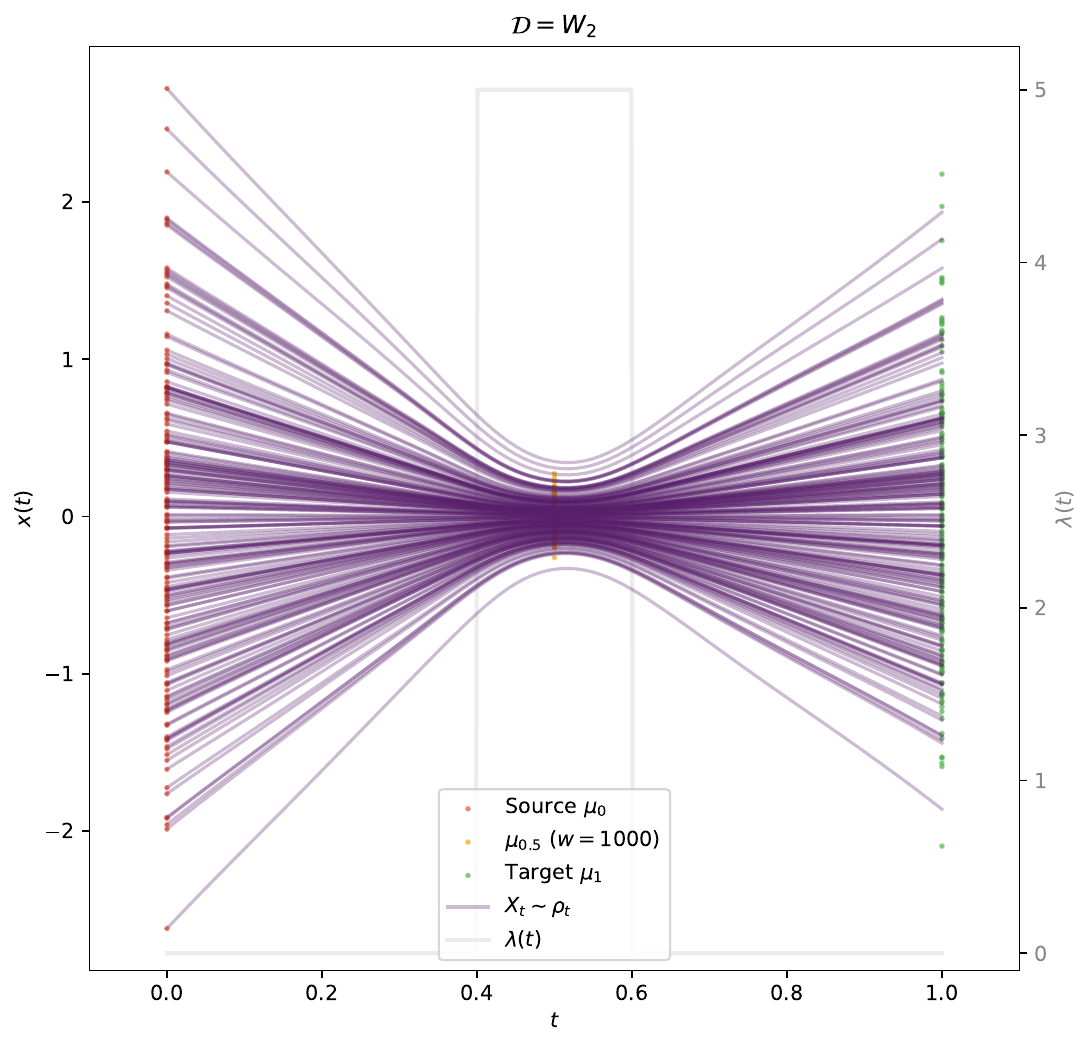}
    \includegraphics[width=0.196\textwidth]{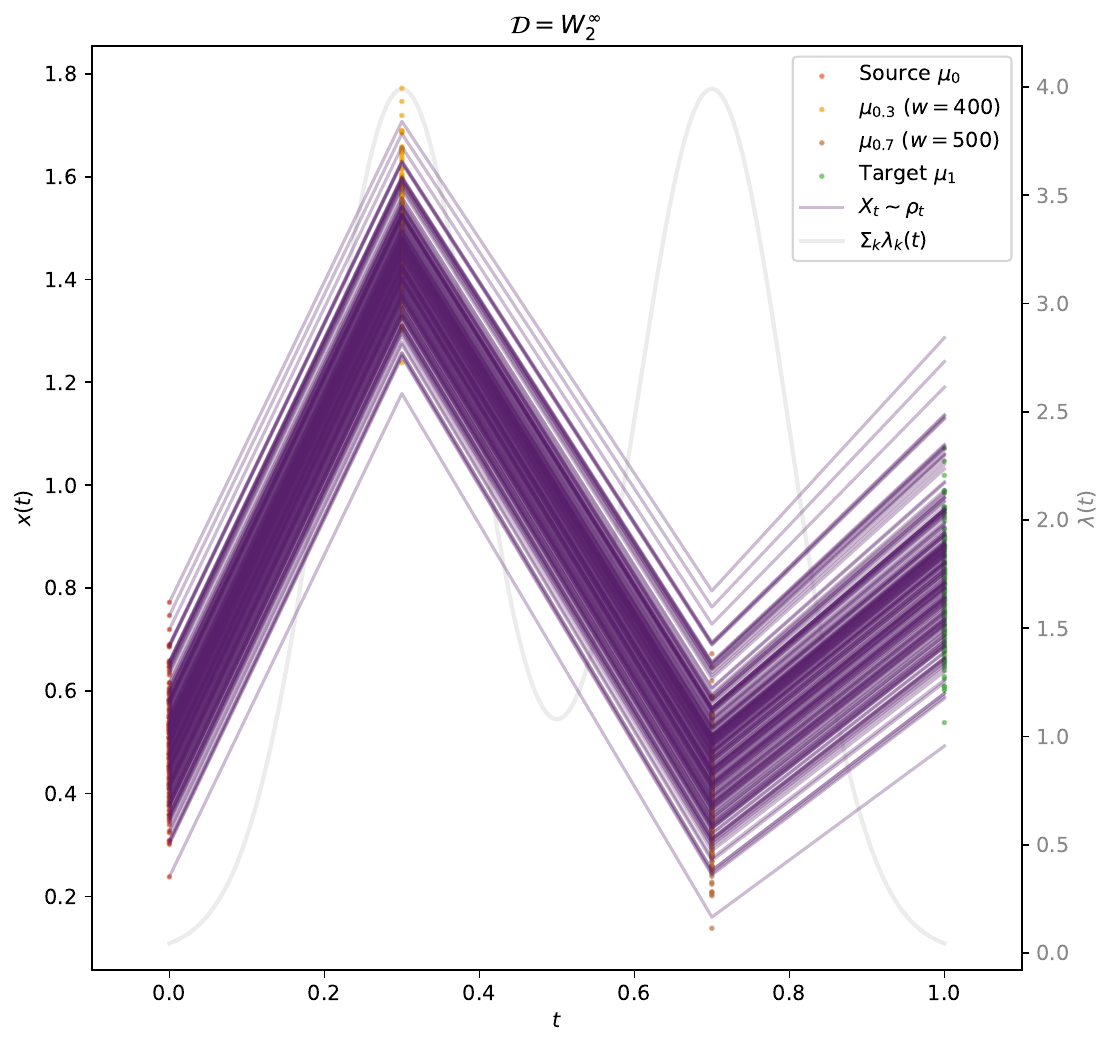}
    \includegraphics[width=0.196\textwidth]{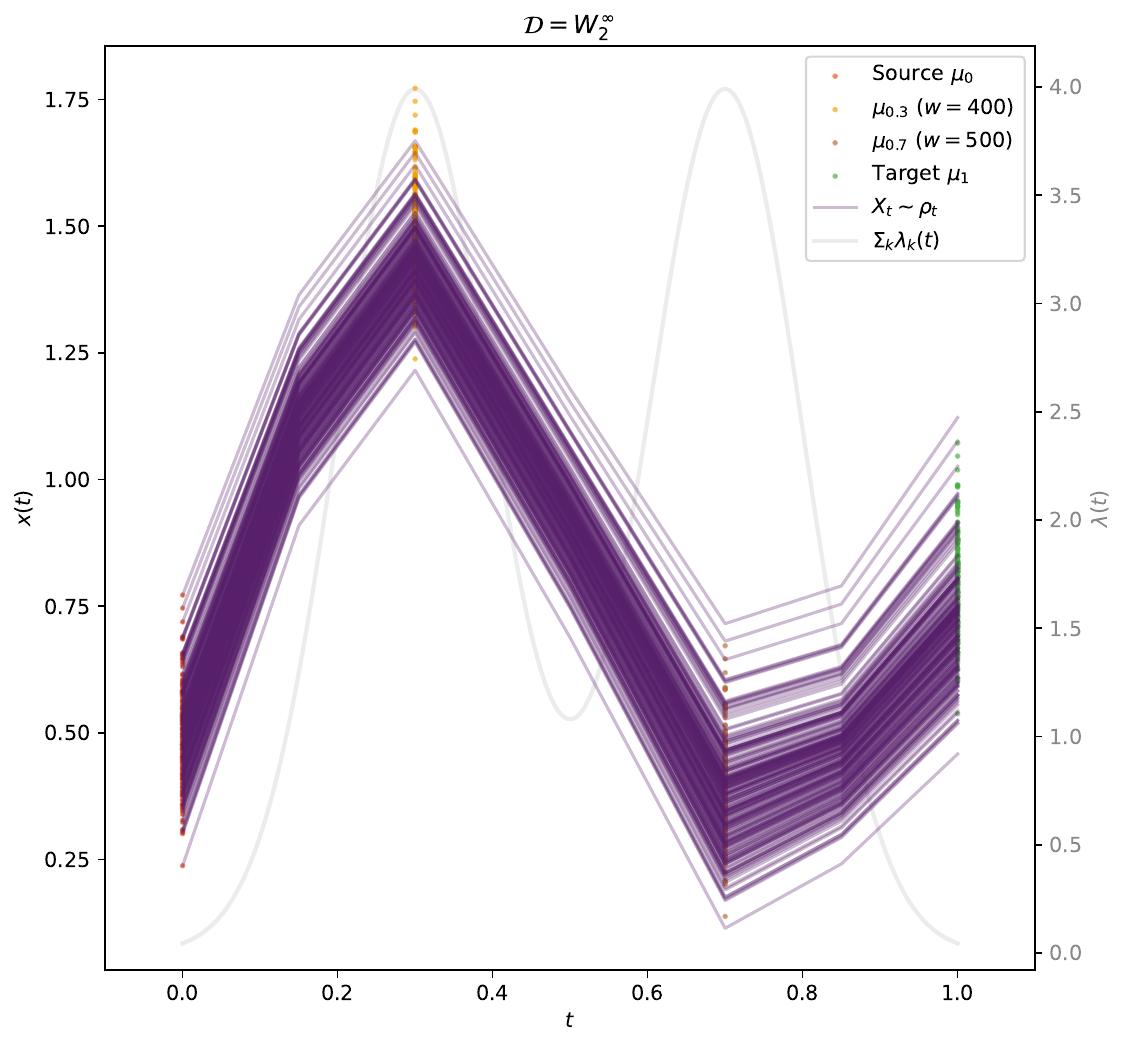}
    \includegraphics[width=0.196\textwidth]{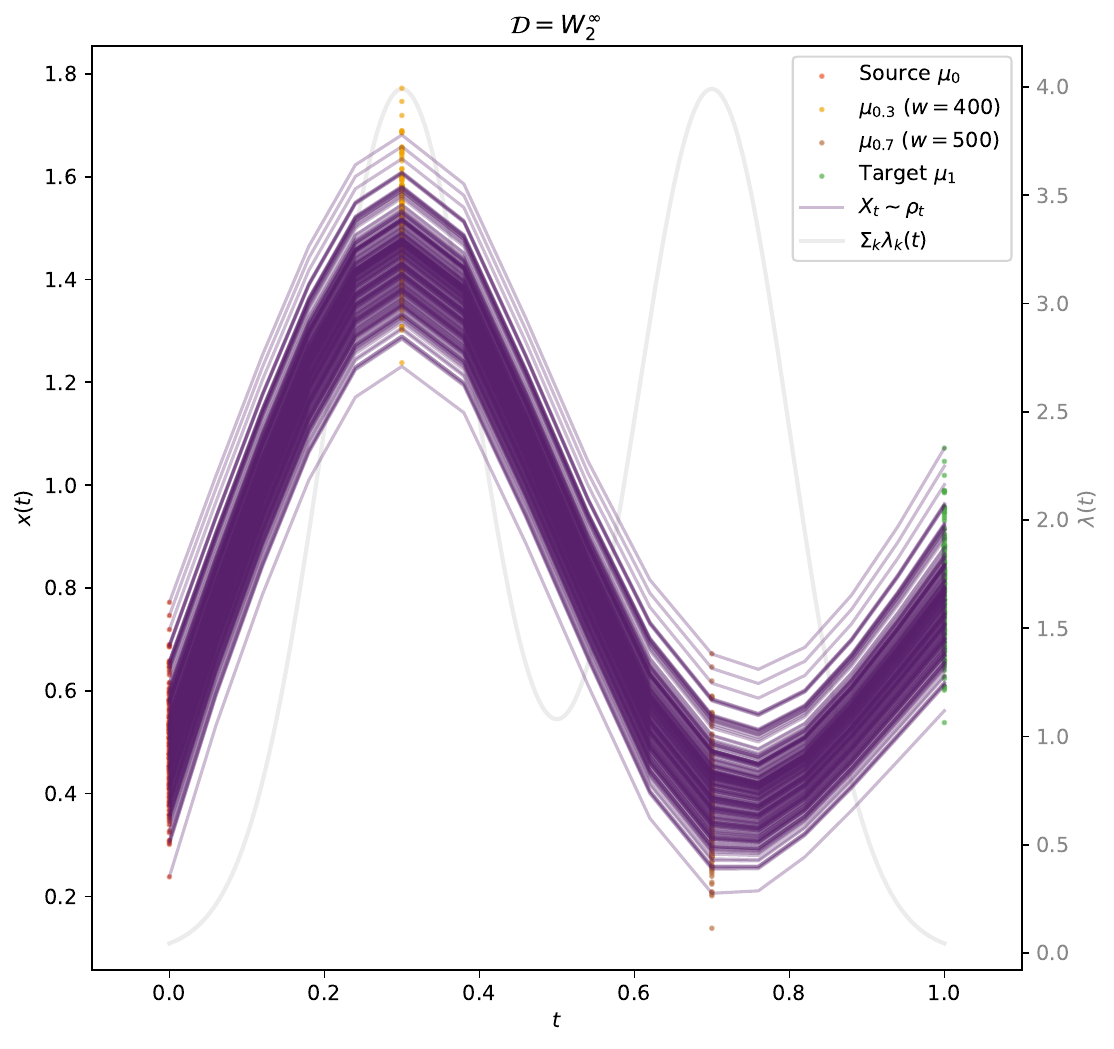}
    \includegraphics[width=0.196\textwidth]{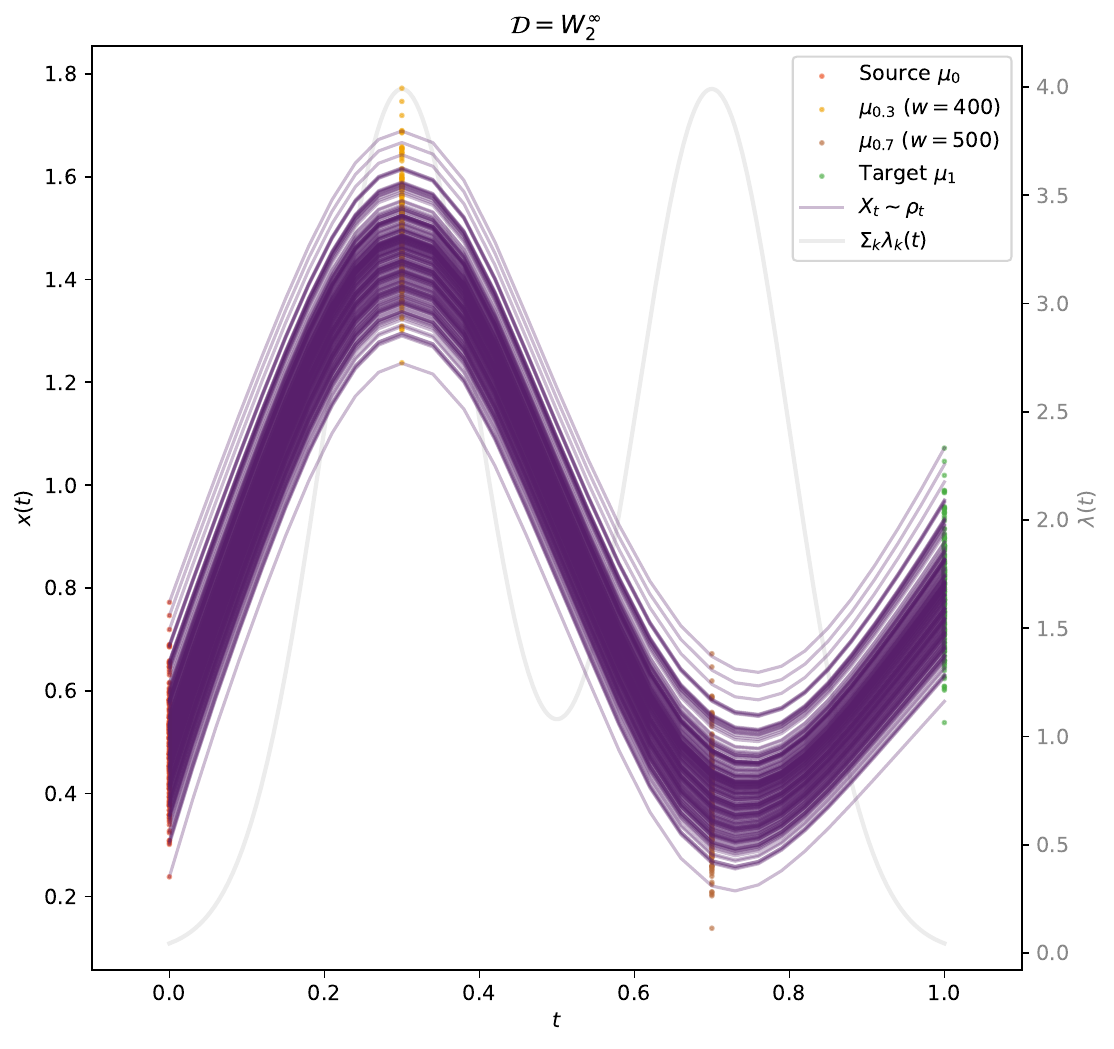}
    \includegraphics[width=0.196\textwidth]{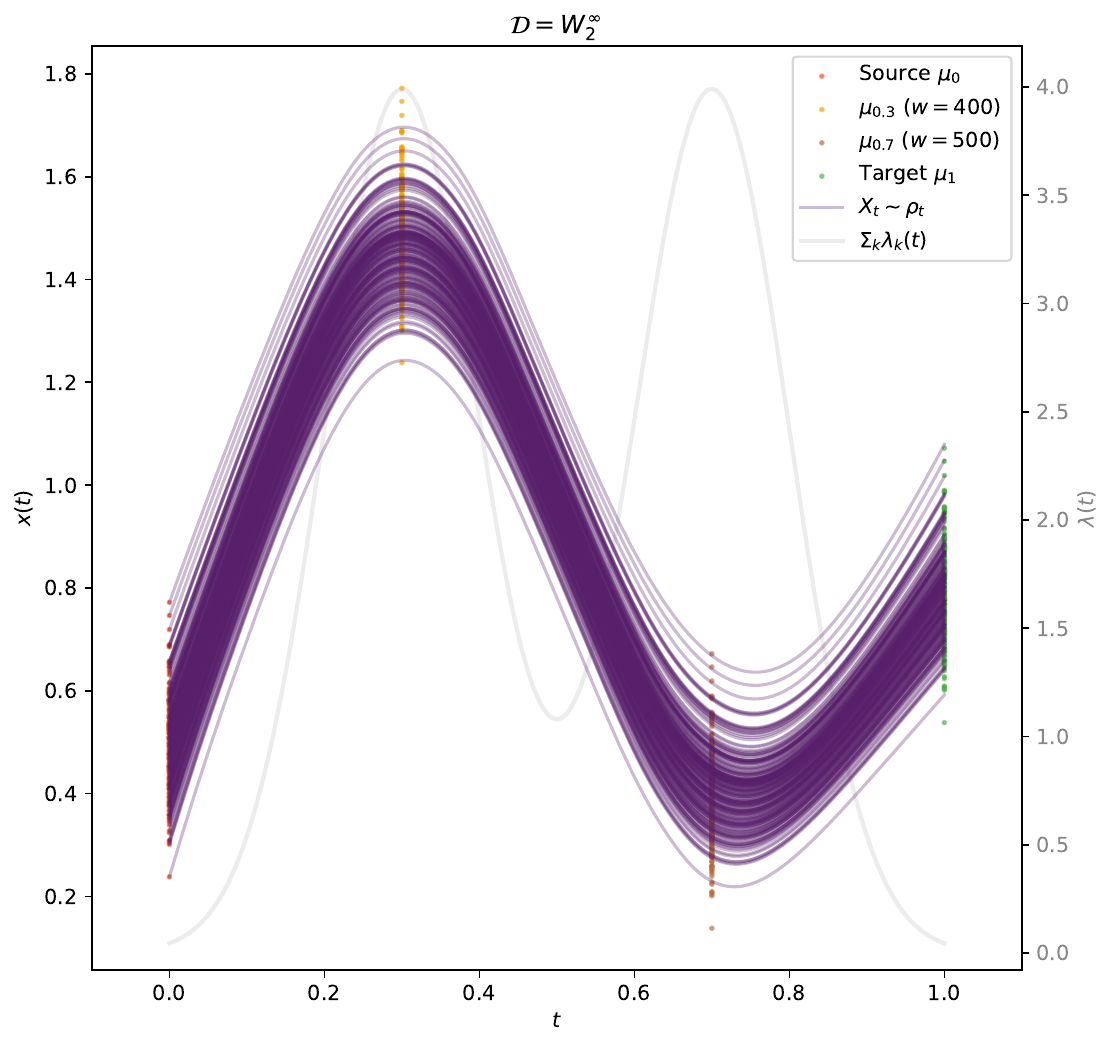}
    \caption{Example OTP-FM trajectories inferred from 1, 2, 5, 10, and 50 steps per marginal (left to right), demonstrating the effectiveness of the MeanFlow consistency model.}
    \label{fig:gaussians_mf}
\end{figure}

\begin{figure}[tb]
    \centering
    \includegraphics[width=0.196\textwidth]{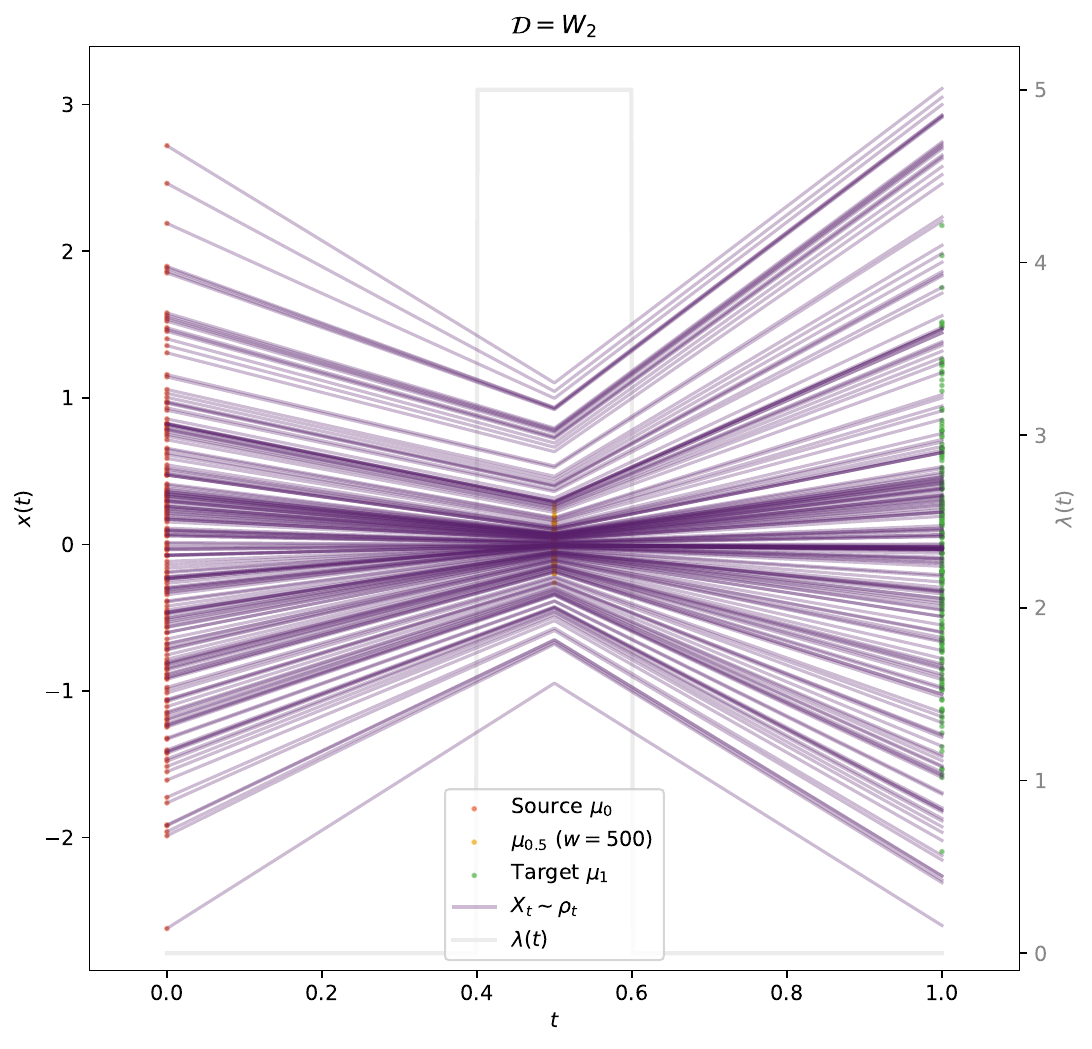}
    \includegraphics[width=0.196\textwidth]{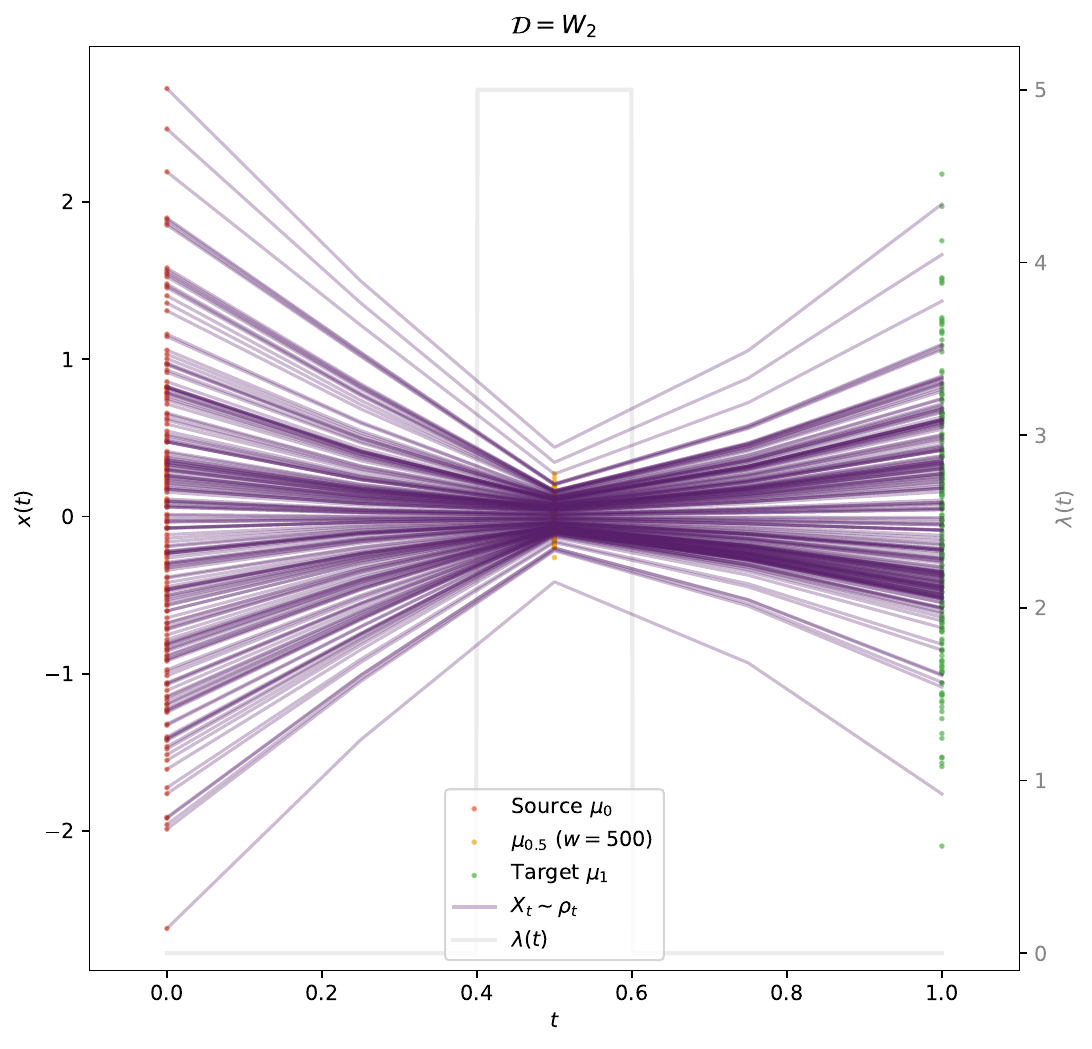}
    \includegraphics[width=0.196\textwidth]{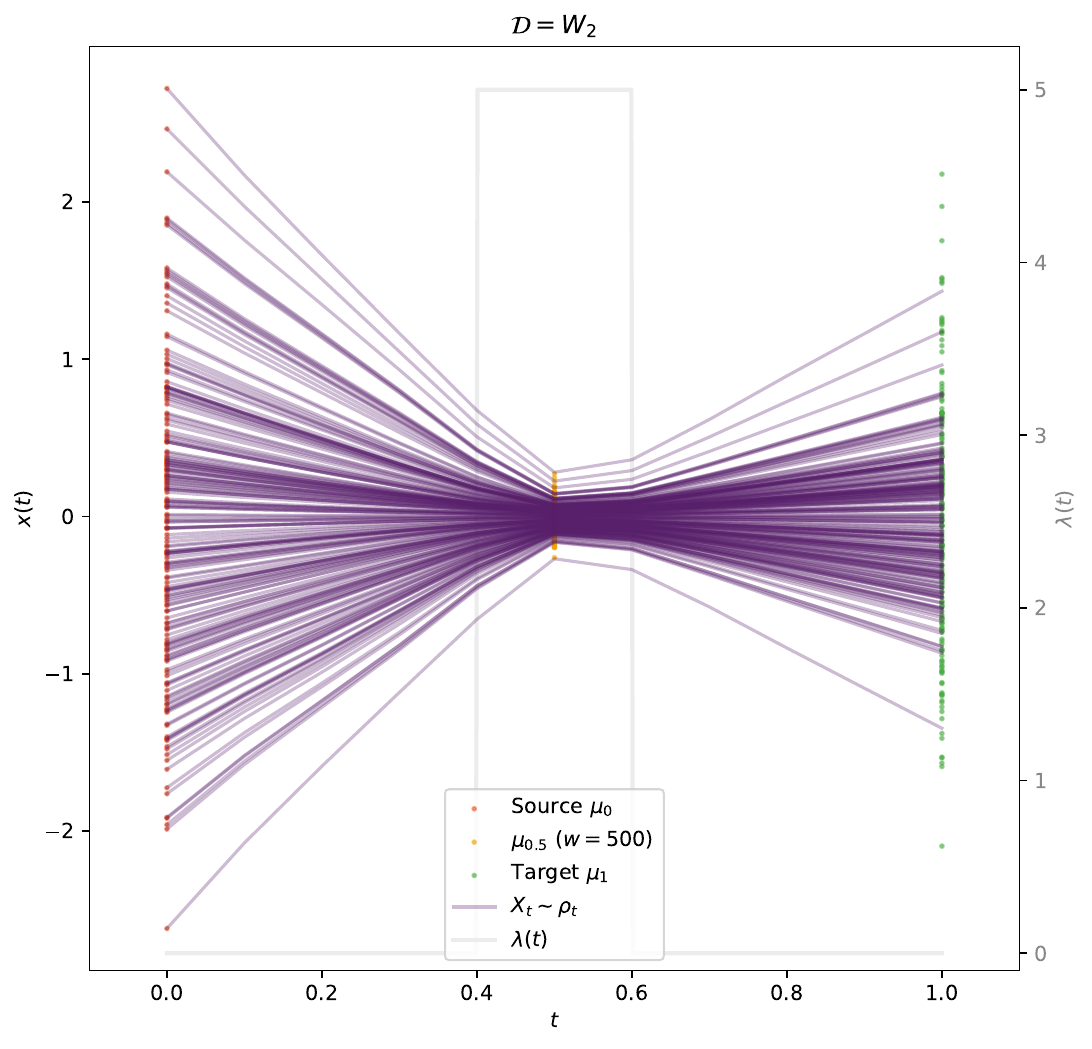}
    \includegraphics[width=0.196\textwidth]{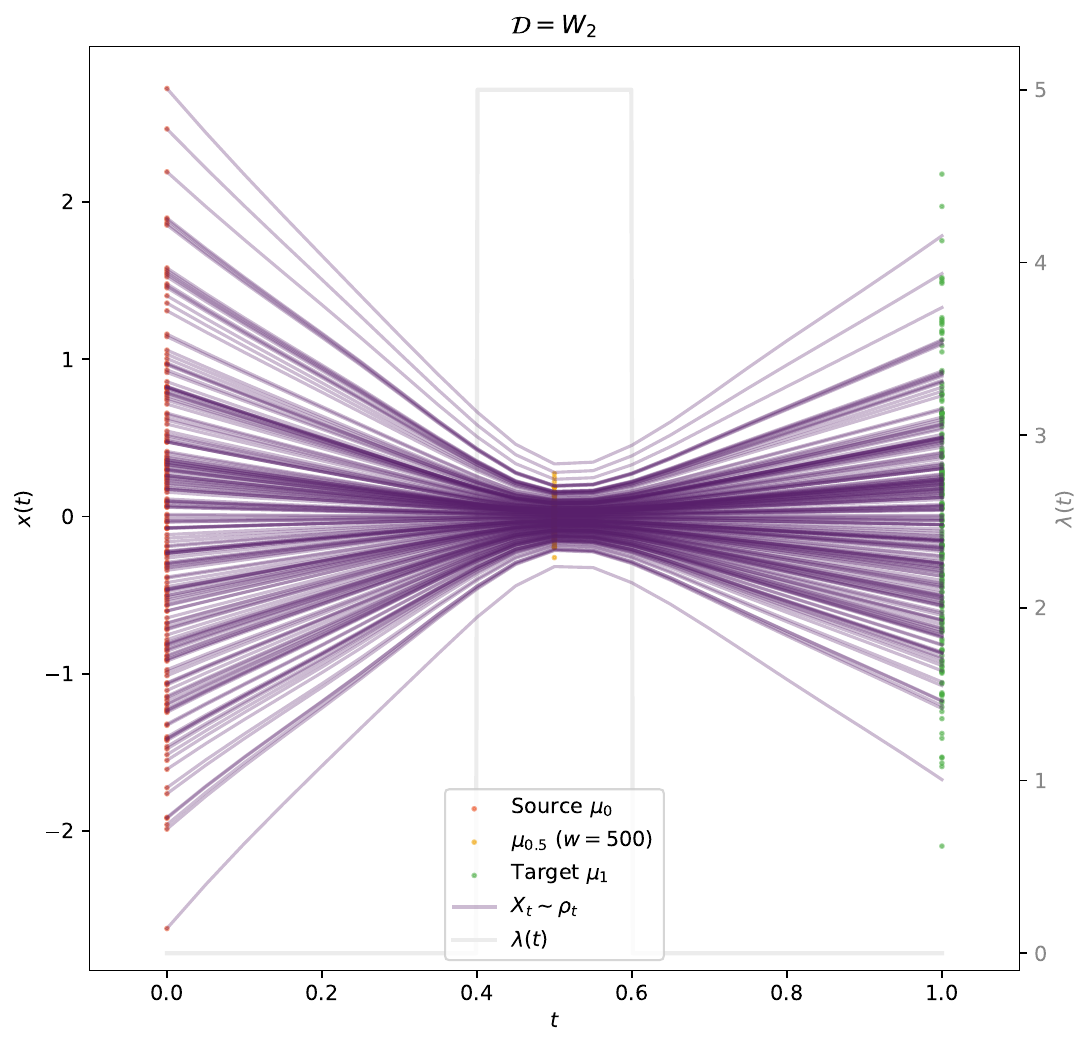}
    \includegraphics[width=0.196\textwidth]{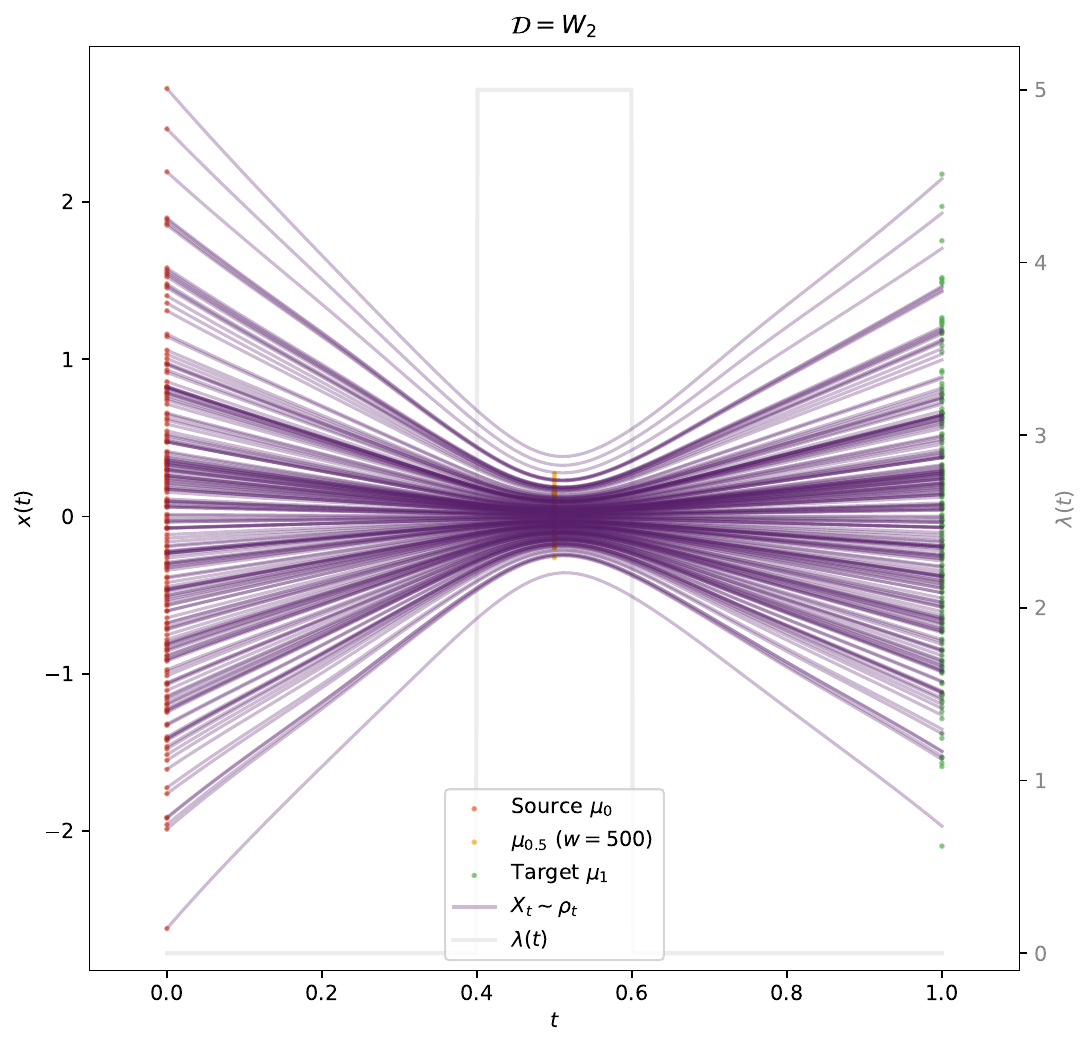}
    \includegraphics[width=0.196\textwidth]{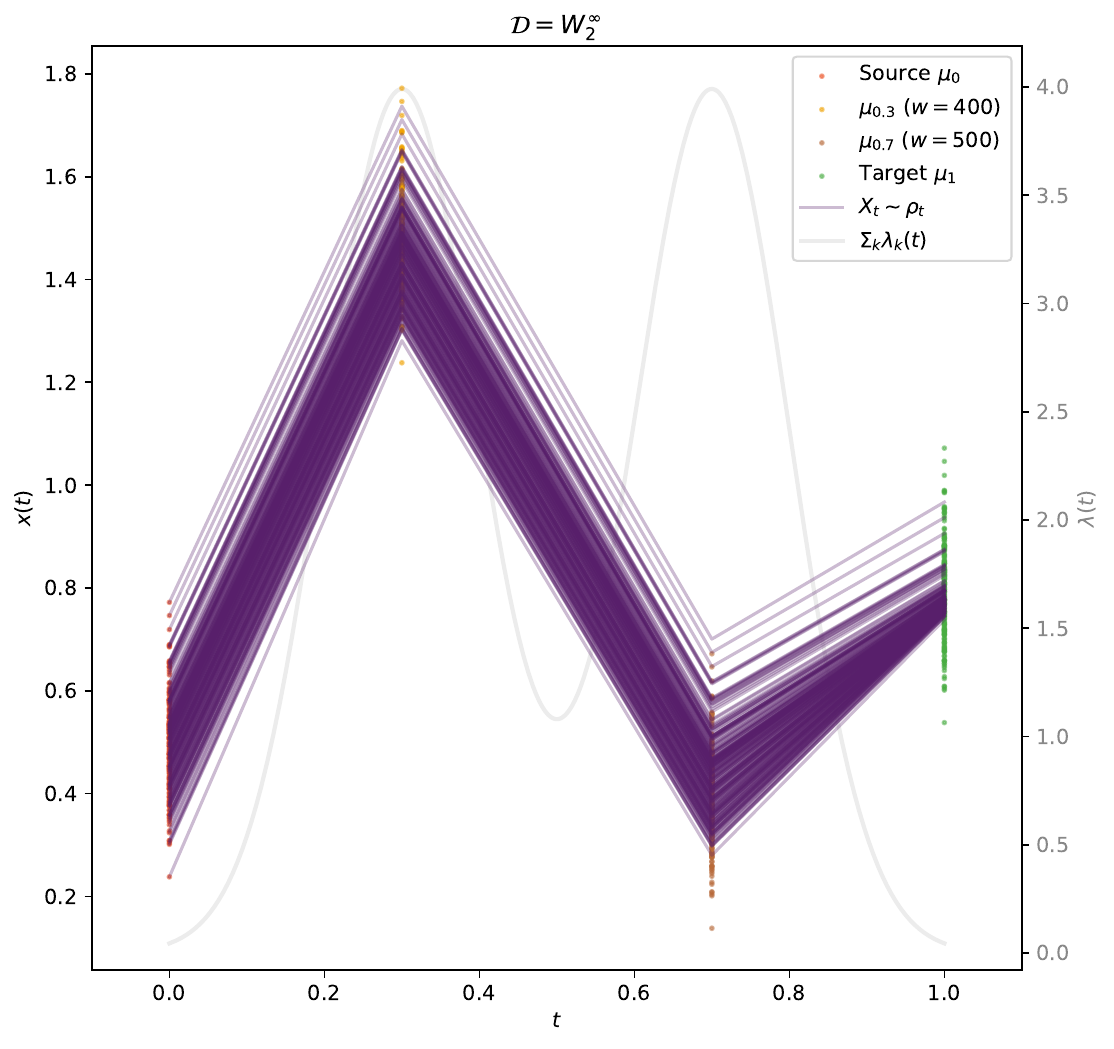}
    \includegraphics[width=0.196\textwidth]{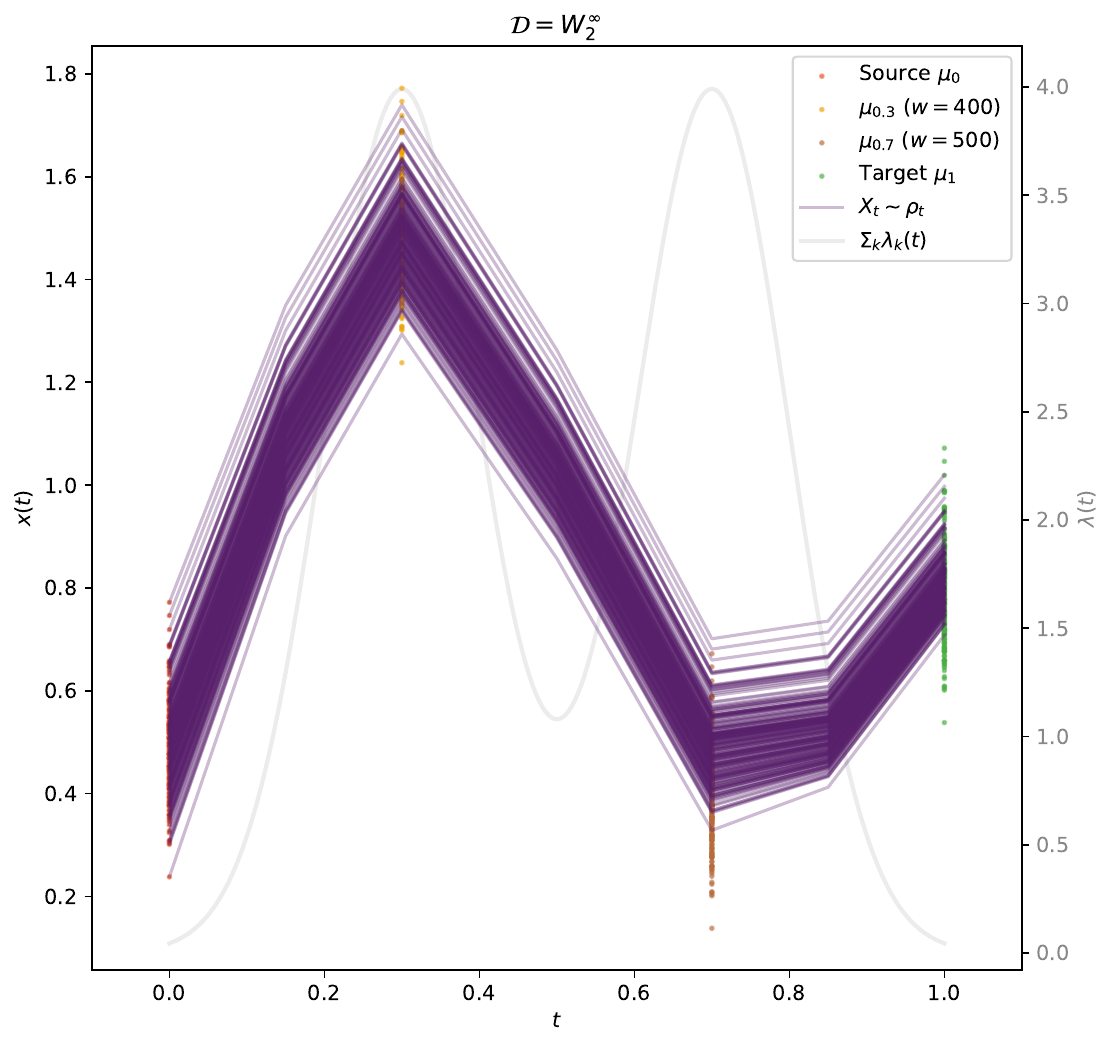}
    \includegraphics[width=0.196\textwidth]{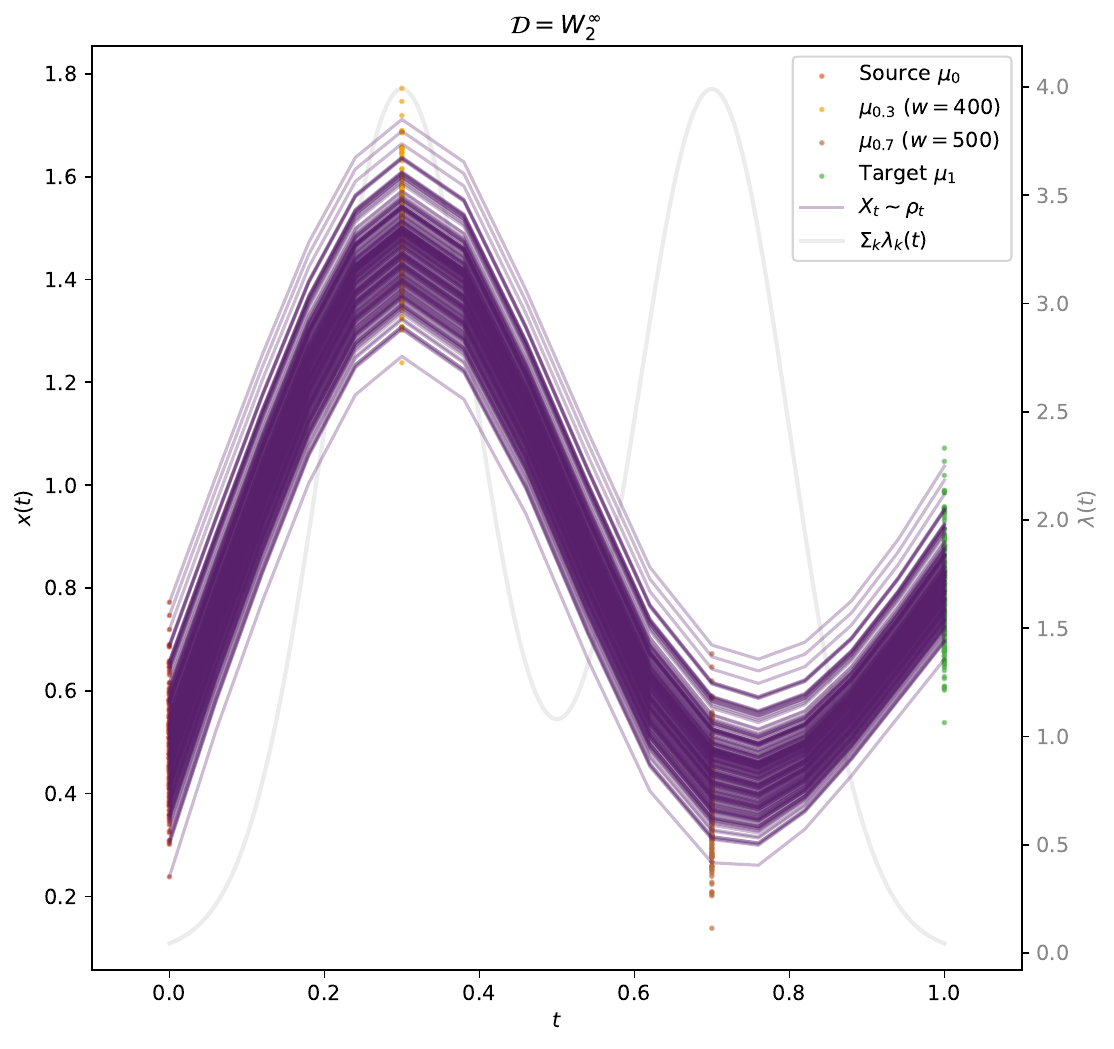}
    \includegraphics[width=0.196\textwidth]{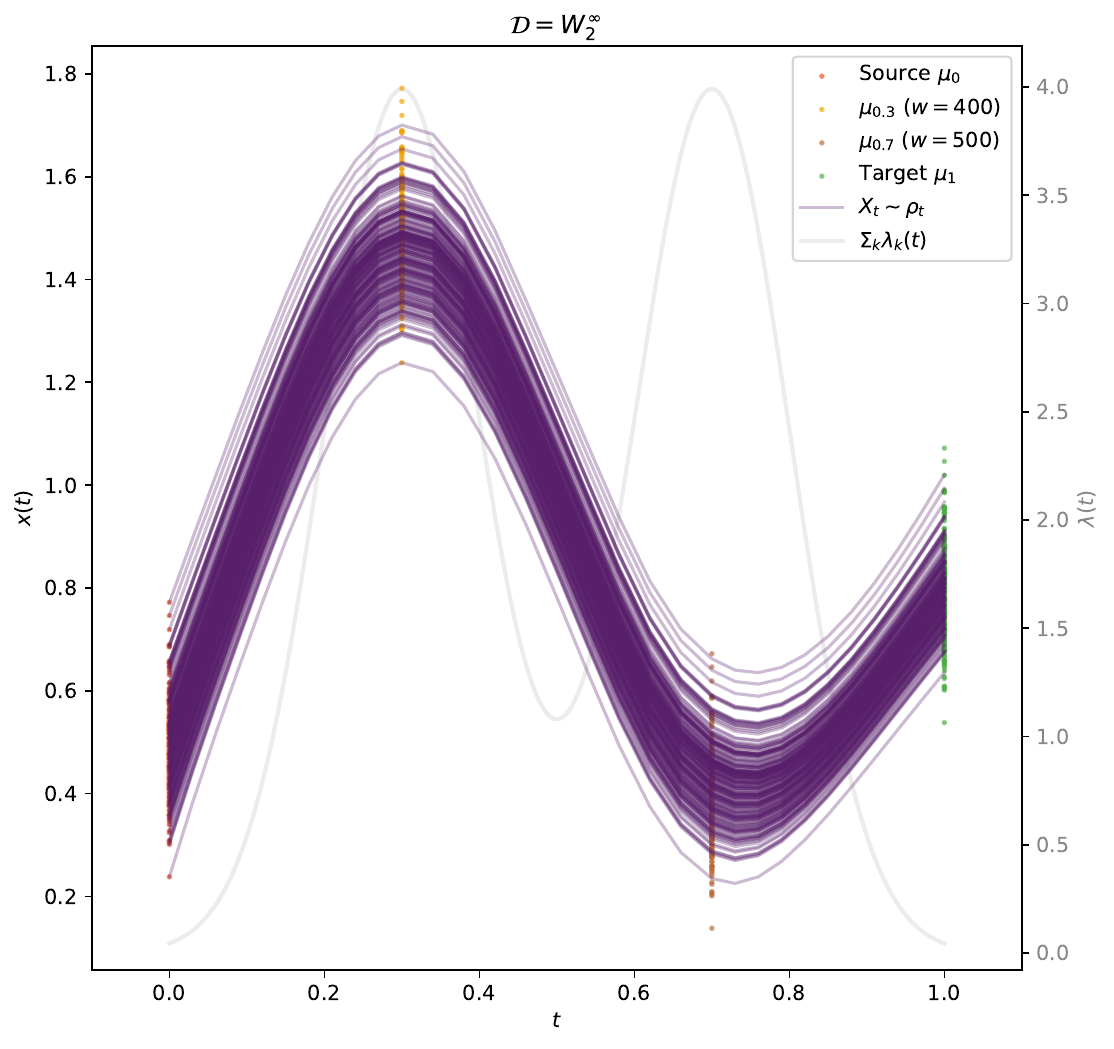}
    \includegraphics[width=0.196\textwidth]{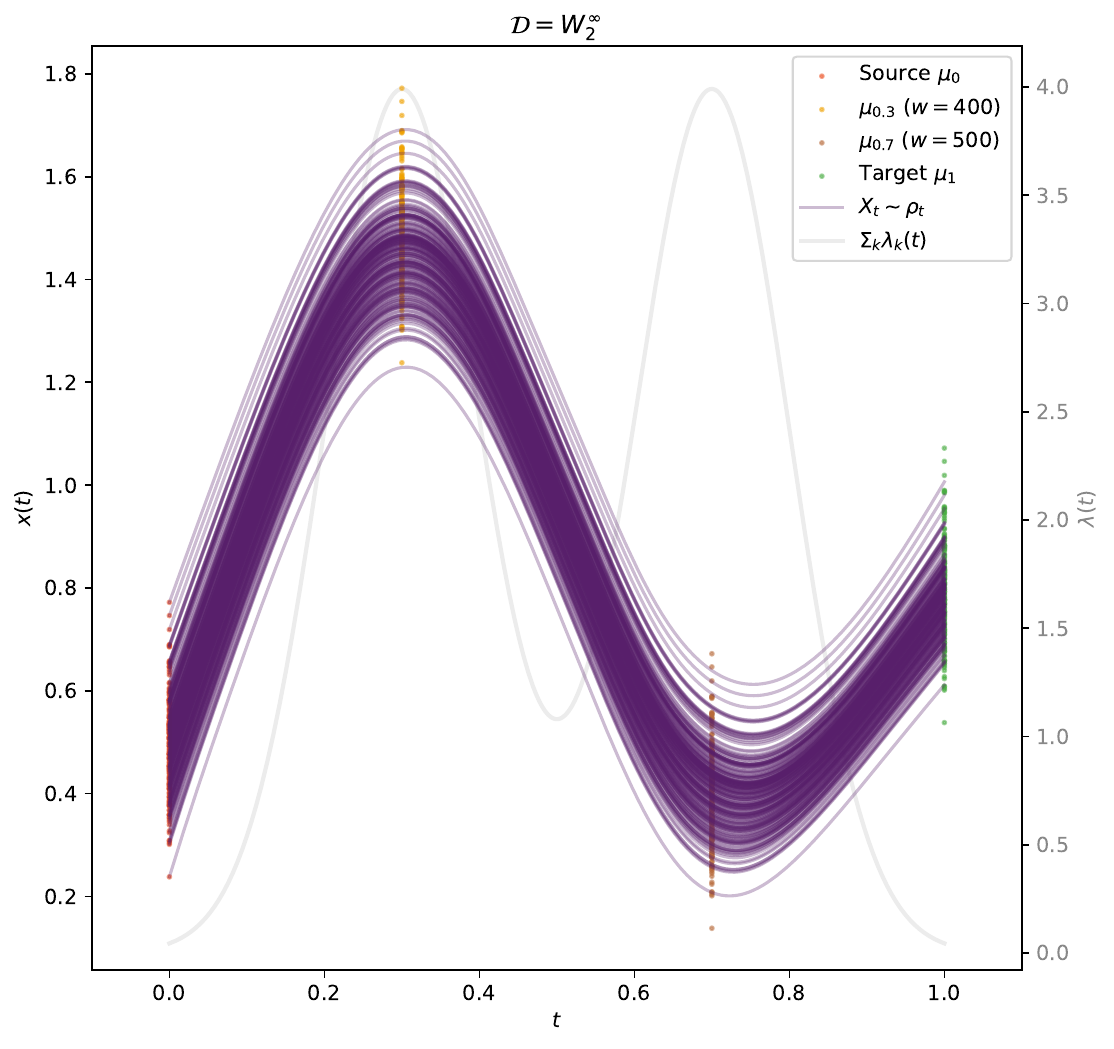}
    \caption{Example OTP-FM trajectories trained with the LSD consistency objective and inferred from 1, 2, 5, 10, and 50 steps per marginal (left to right), demonstrating comparable performance to the MeanFlow model.}
    \label{fig:gaussians_lsd}
\end{figure}

\subsection{Weaknesses of kernel-based distances}
As previously discussed in App.~\ref{app:otpfm_derivations_gaussian}, 
MMD with the polynomial kernel is not effective even for the exact solution for 1D Gaussian marginals (Fig.~\ref{fig:mmd_poly}).
The RBF kernel, on the other hand, is performant for the exact solutions, 
but is limited with respect to the strength of the potential it is able to enforce for OTP-FM.
This is illustrated in Fig.~\ref{fig:gaussians_kernels}, where we see that for high strength potentials,
the dynamic OT MMD solution effectively enforces \rhotk to match the intermediate marginal, whereas the OTP-FM MMD solution does not.
The KLD solution using a KDE for the score estimate faces similar issues.
On the other hand, the \wass, \wassinf, and KLD with a Gaussian score estimator are able to learn this.
This suggests that kernel-based estimates of the MMD and KLD \gradg provide too noisy and unstable a training target for the OTP-FM velocity model.

\begin{figure}[tb]
    \centering
    \includegraphics[width=0.32\textwidth]{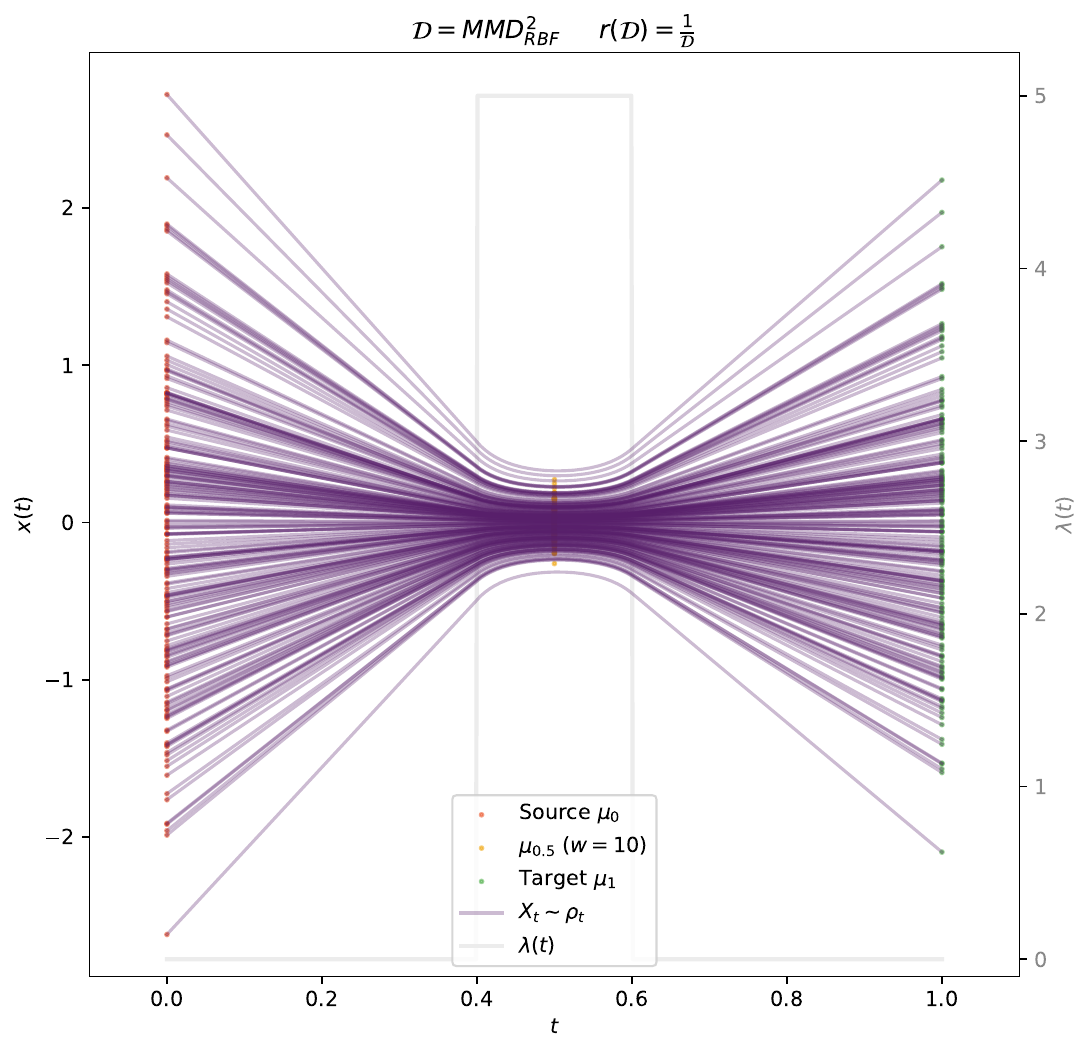}
    \includegraphics[width=0.32\textwidth]{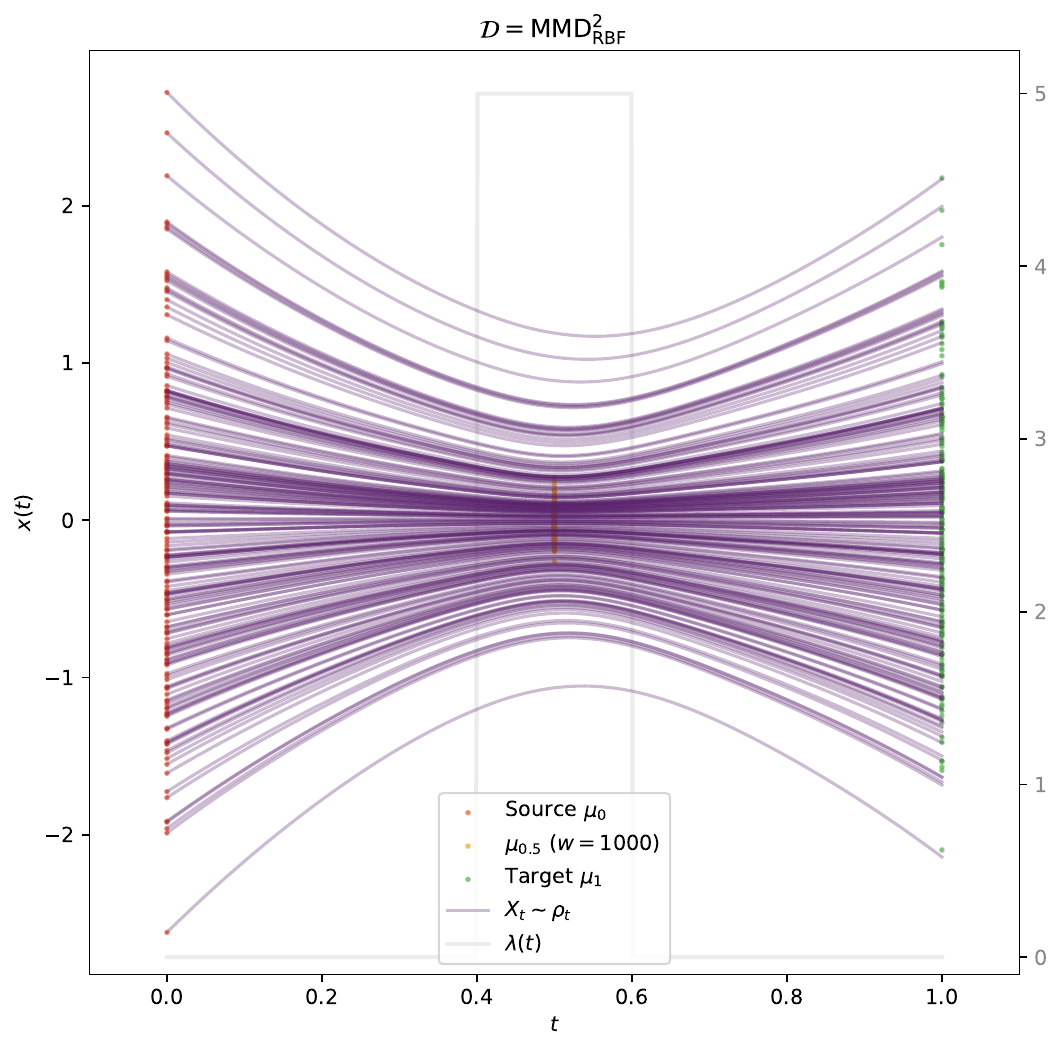}
    \includegraphics[width=0.32\textwidth]{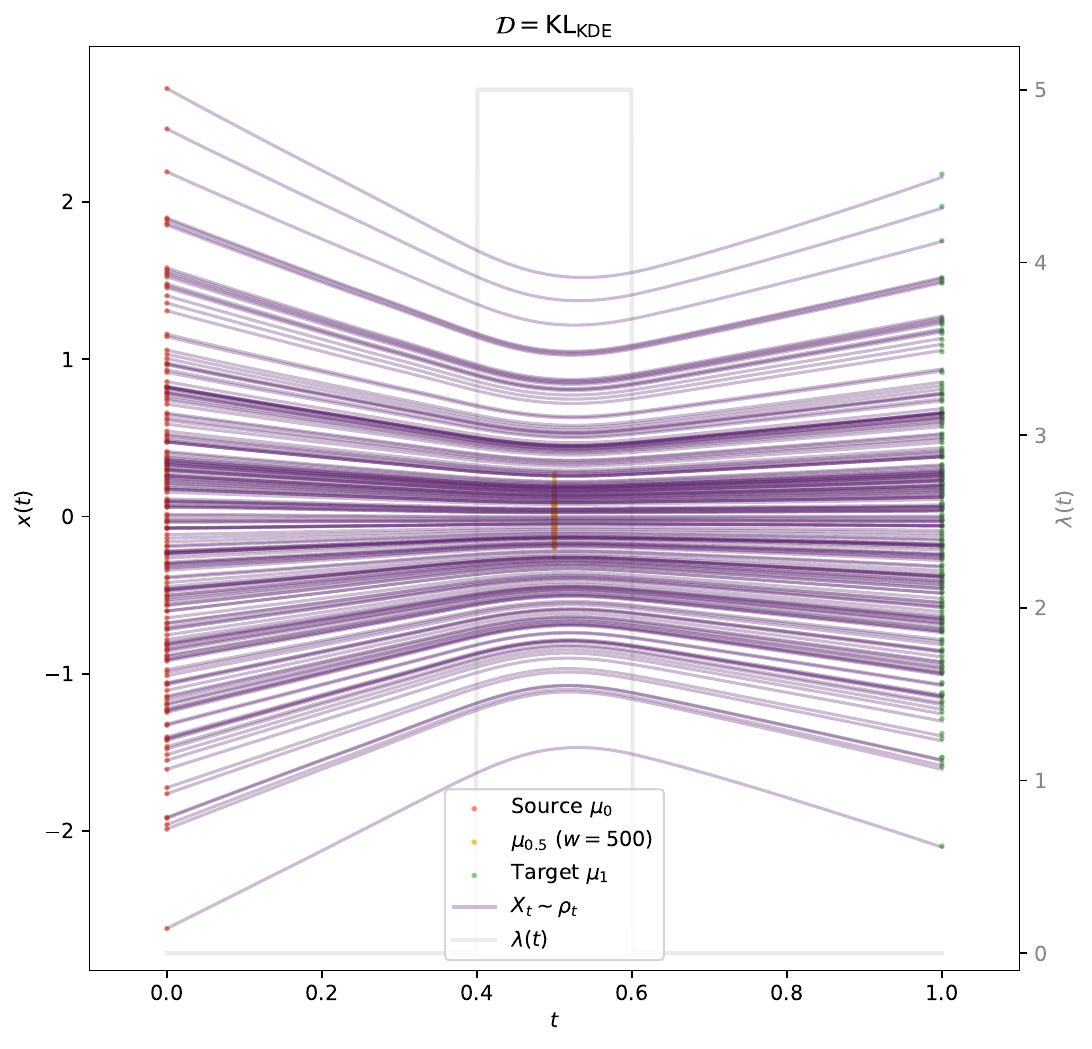}
    \includegraphics[width=0.32\textwidth]{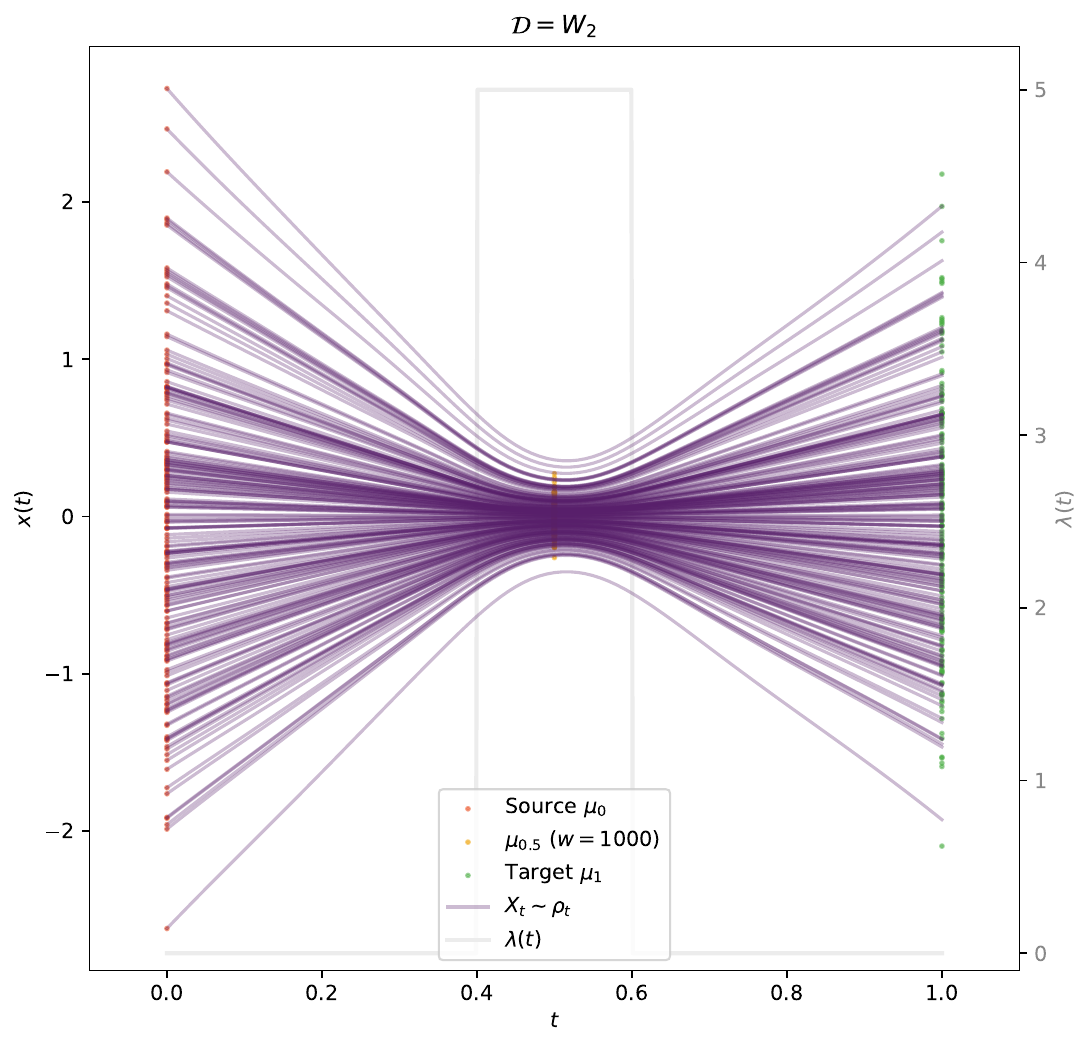}
    \includegraphics[width=0.32\textwidth]{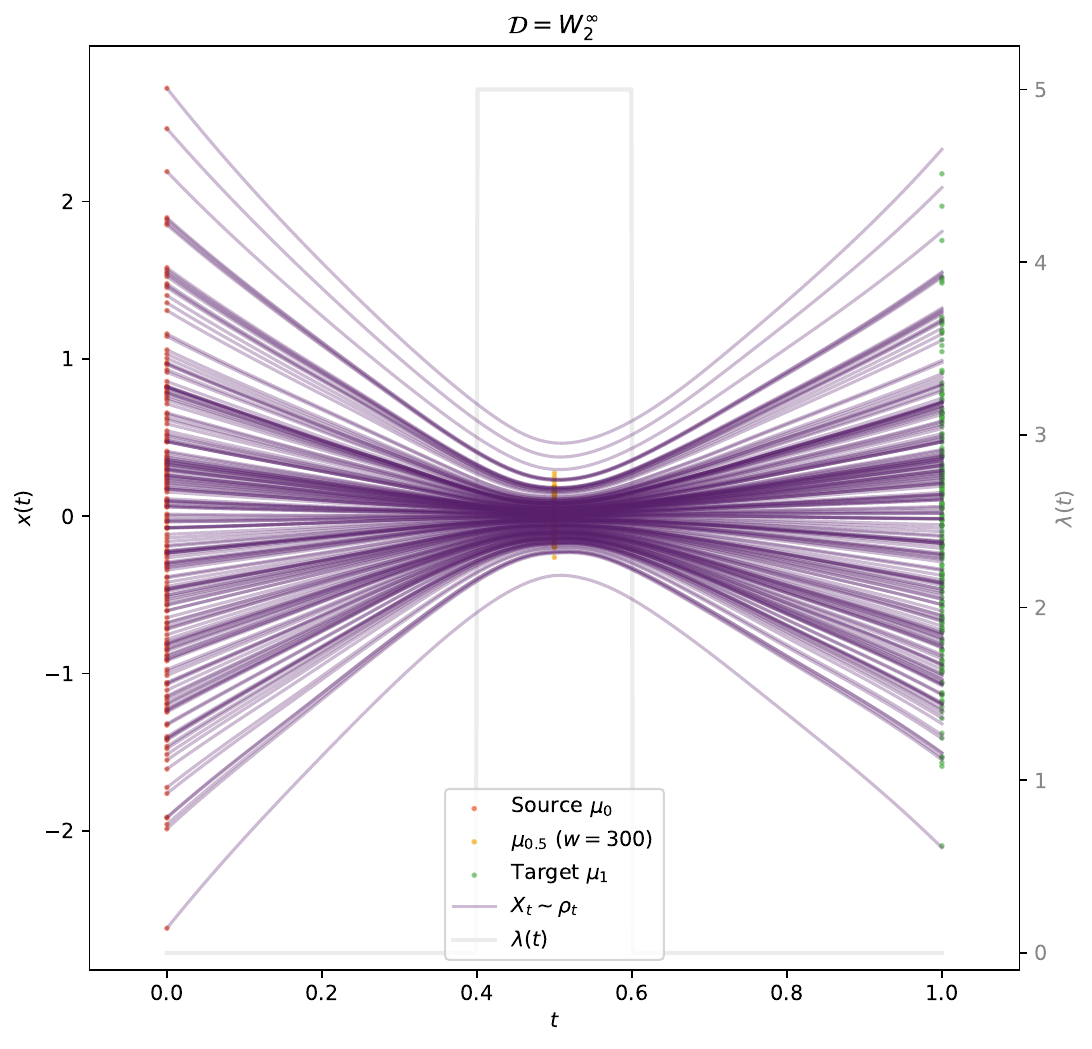}
    \includegraphics[width=0.32\textwidth]{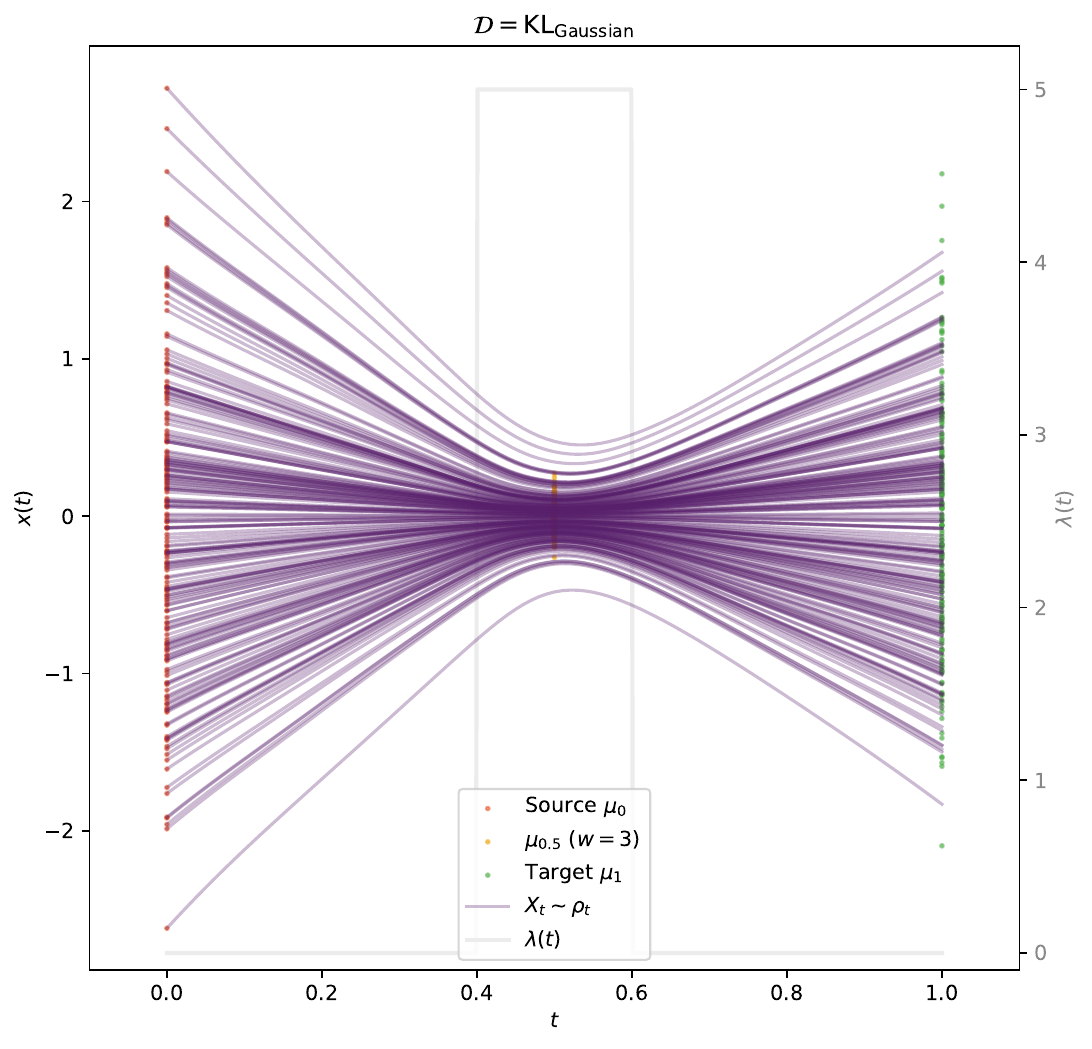}
    \caption{The exact dynamic OT solution with MMD and the RBF kernel (top left) demonstrates that the intermediate marginal is effectively enforced by the potential.
    However, kernel-based estimates of the MMD (top center) and KLD (top right) \gradg do not allow OTP-FM to learn the intermediate marginal well (even for very high strength potentials).
    Finally, the \wass (bottom left), \wassinf (bottom center), and KLD with a Gaussian score estimator (bottom right) do learn the intermediate marginal.
    }
    \label{fig:gaussians_kernels}
\end{figure}

\subsection{Comparing \lkt shapes and widths}
We experiment with three forms of the time-dependence of the potentials \lkt, with half-width $\tau$:
\begin{align}
    &\text{Triangle:} \quad \lkt(\tau, t) = \frac{1}{\tau}\max\left(0, 1 - \frac{|t - t_k|}{\tau}\right), \\
    &\text{Rectangle:} \quad \lkt(\tau, t) = \frac{1}{2\tau}\theta\left(\tau - |t - t_k|\right), \\
    &\text{Gaussian:} \quad \lkt(\tau, t) = \frac{1}{\tau\sqrt{2\pi}}\exp\left(-\frac{(t - t_k)^2}{2\tau^2}\right)
\end{align}
where $\theta(x)$ is the Heaviside step function.
Different configurations of shapes and widths are shown in the main text (Fig.~\ref{fig:gaussians}) and compared directly for the same marginals and potential strengths in Fig.~\ref{fig:gaussians_lkt}.
The width of the potential has an intuitive effect on the smoothness of the trajectories, with narrower potentials having sharper kinks (approximating a delta function in the limiting case of $\tau \to 0$).
We do not observe a significant empirical effect from the shape of the {\lkt}.

\begin{figure}[tb]
    \centering
    \includegraphics[width=0.196\textwidth]{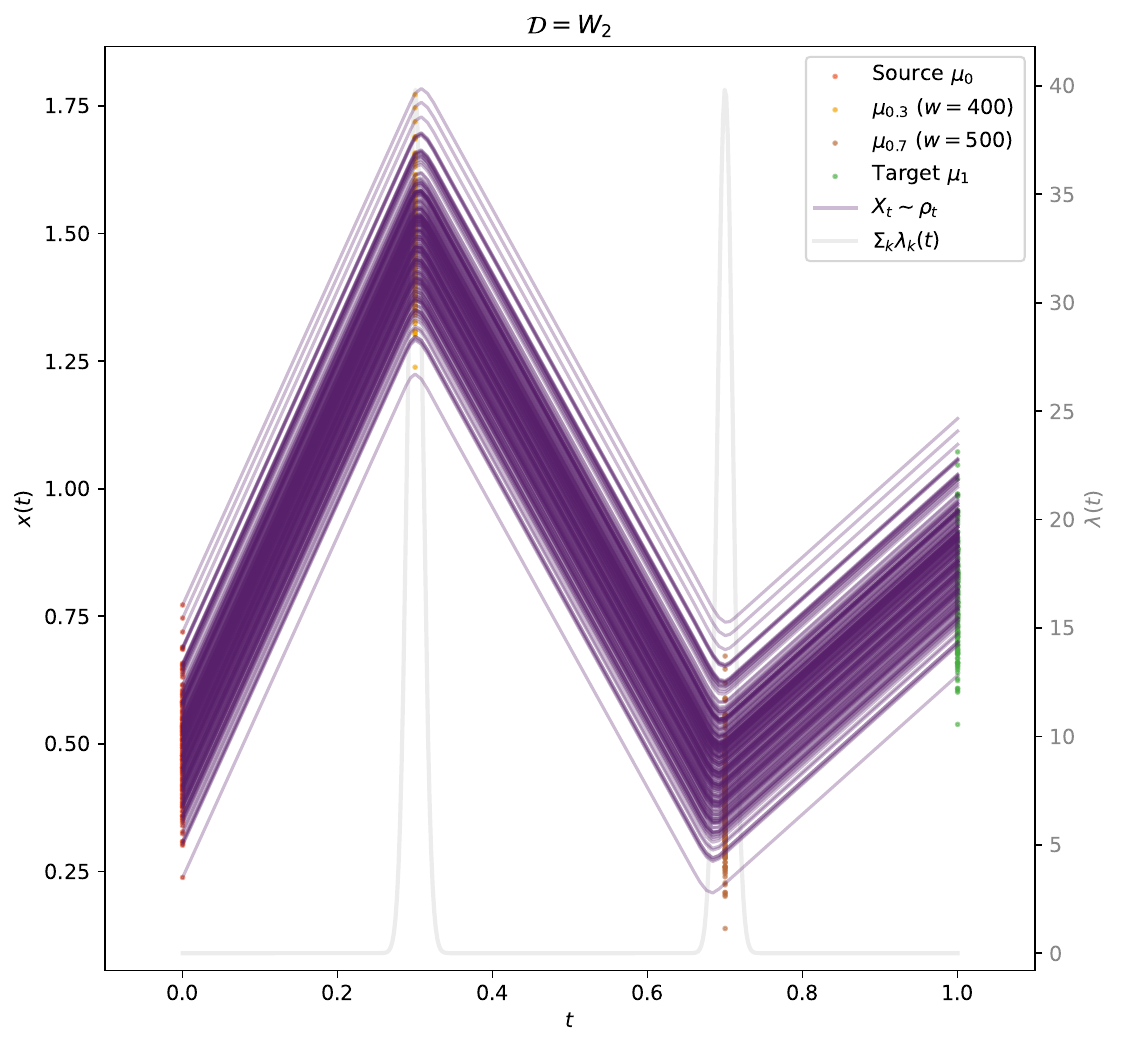}
    \includegraphics[width=0.196\textwidth]{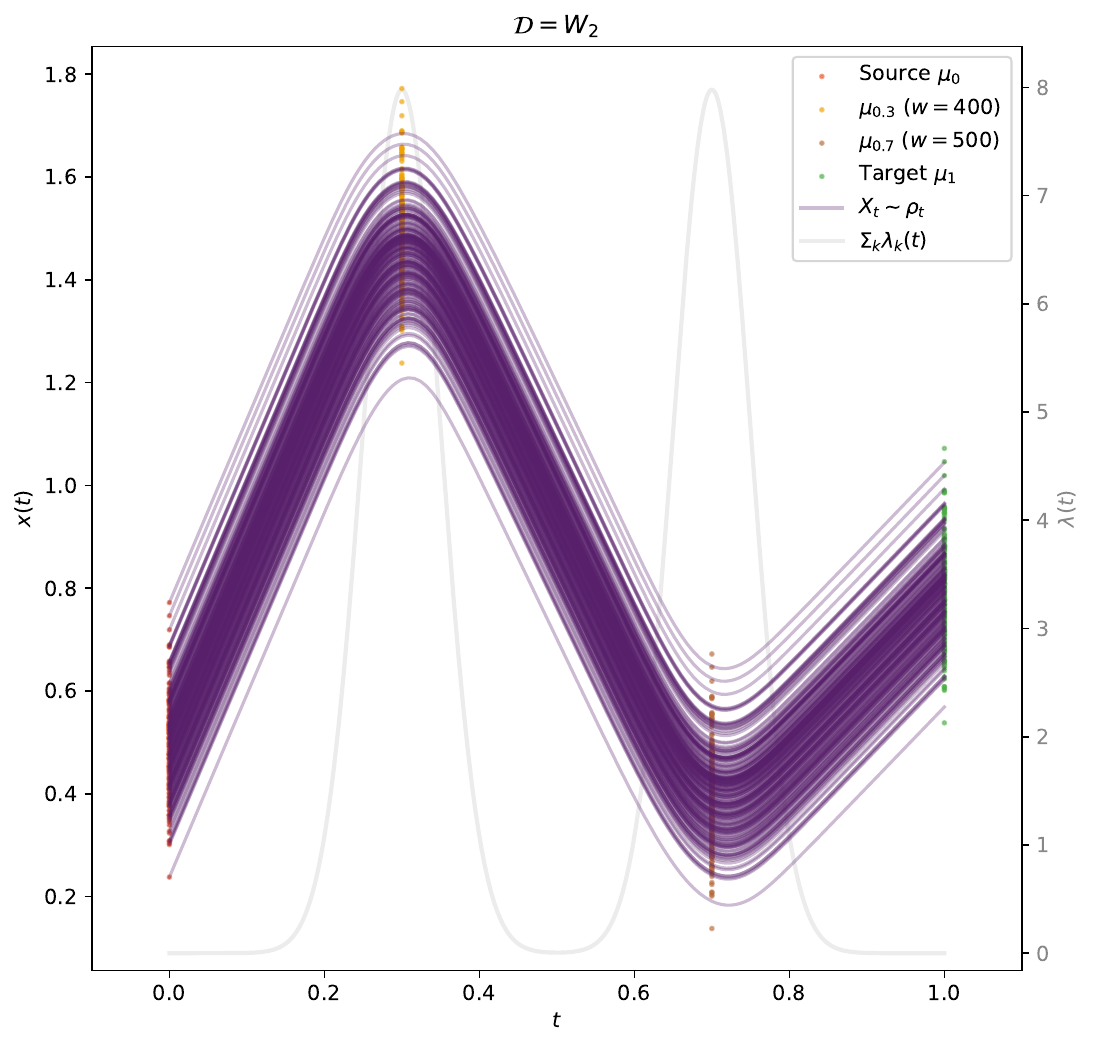}
    \includegraphics[width=0.196\textwidth]{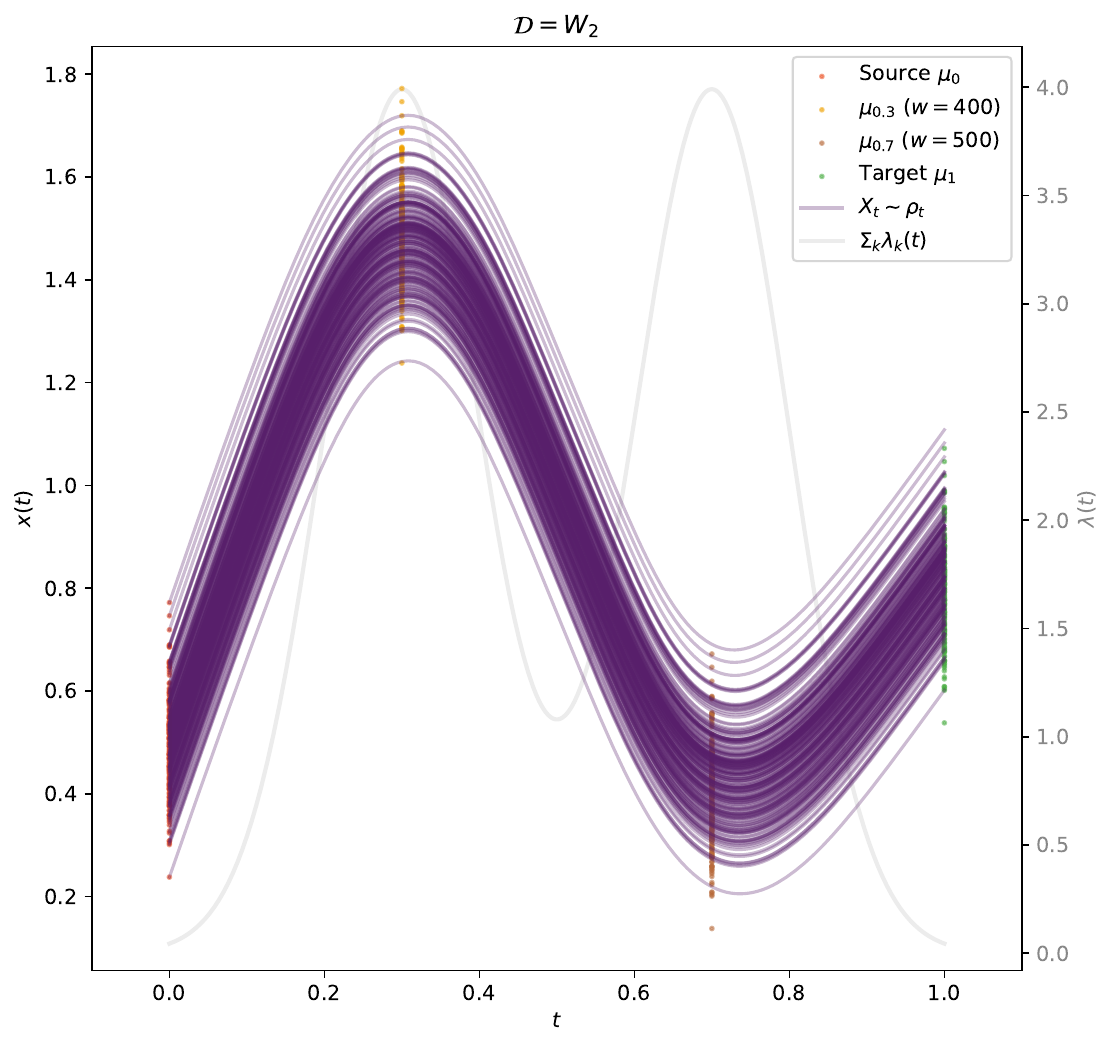}
    \includegraphics[width=0.196\textwidth]{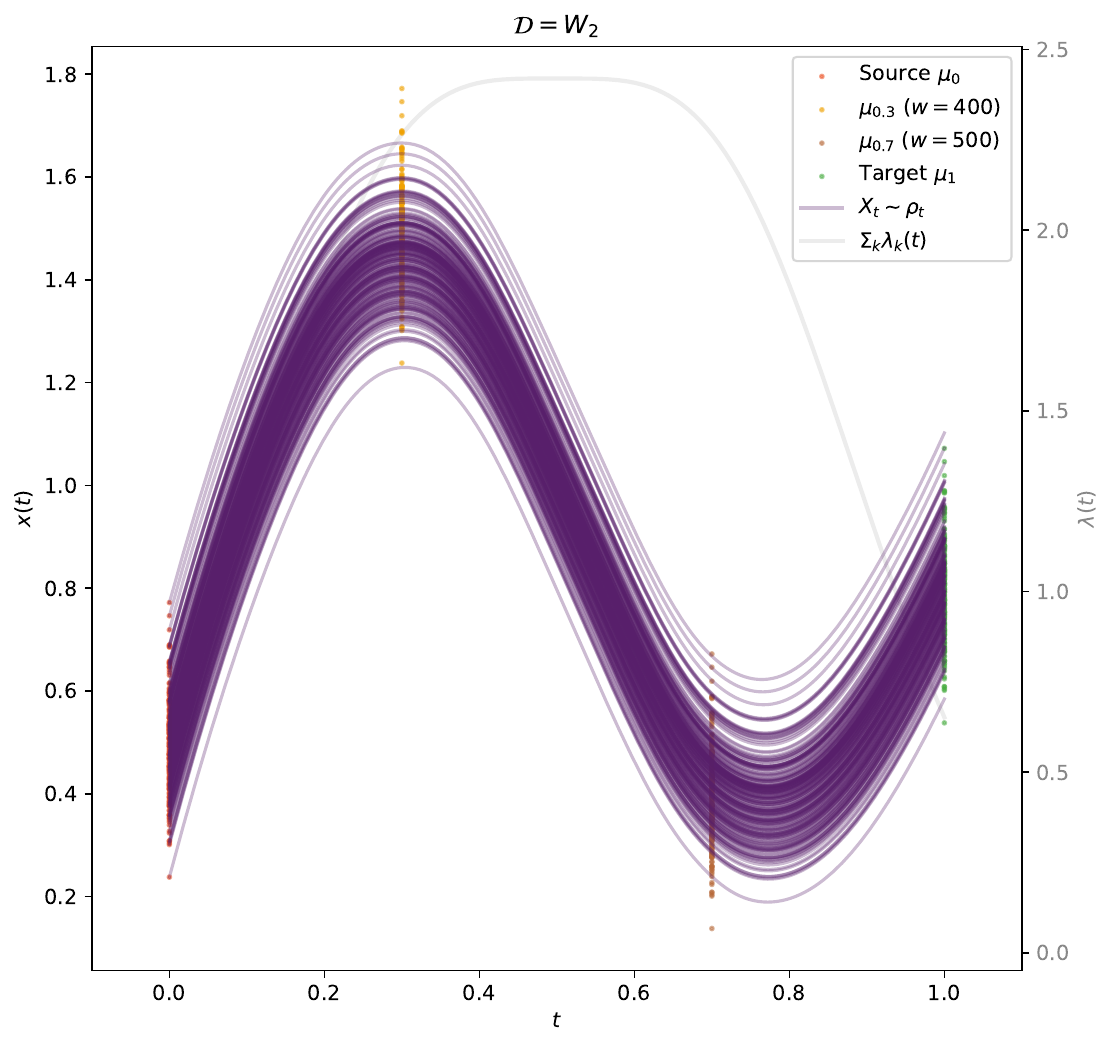}
    \includegraphics[width=0.196\textwidth]{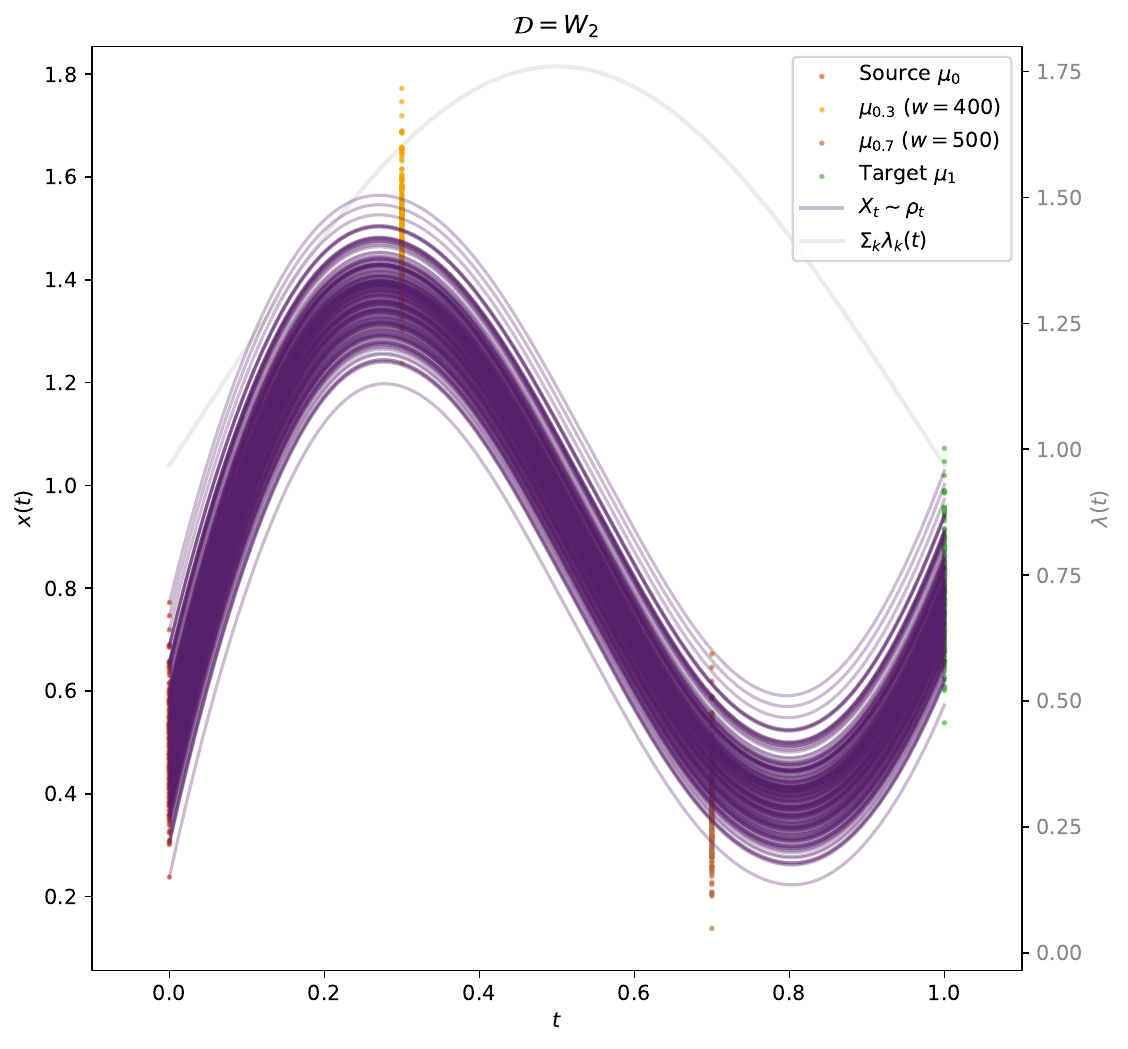}
    \includegraphics[width=0.196\textwidth]{figures/app_gaussians/lkt/01.pdf}
    \includegraphics[width=0.196\textwidth]{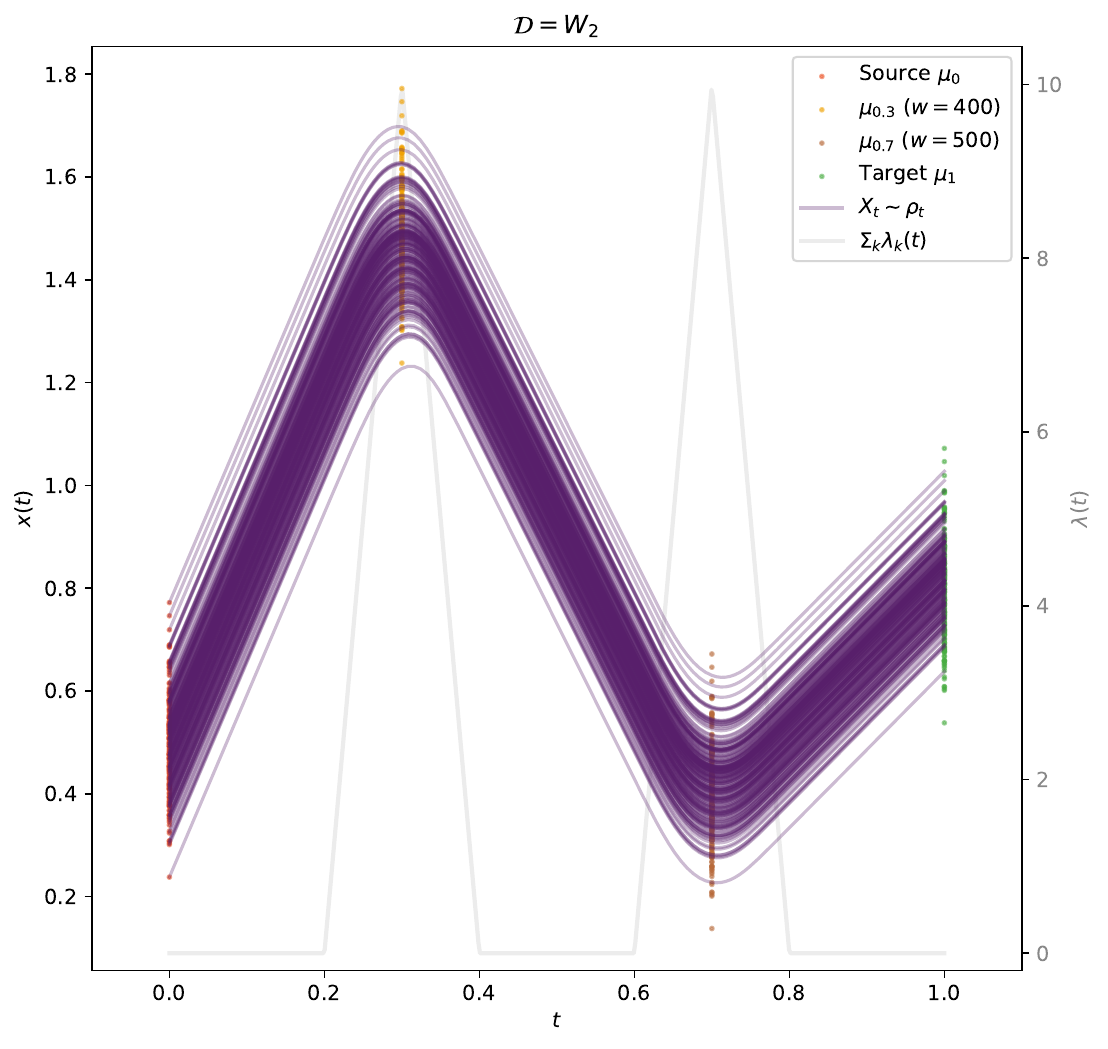}
    \includegraphics[width=0.196\textwidth]{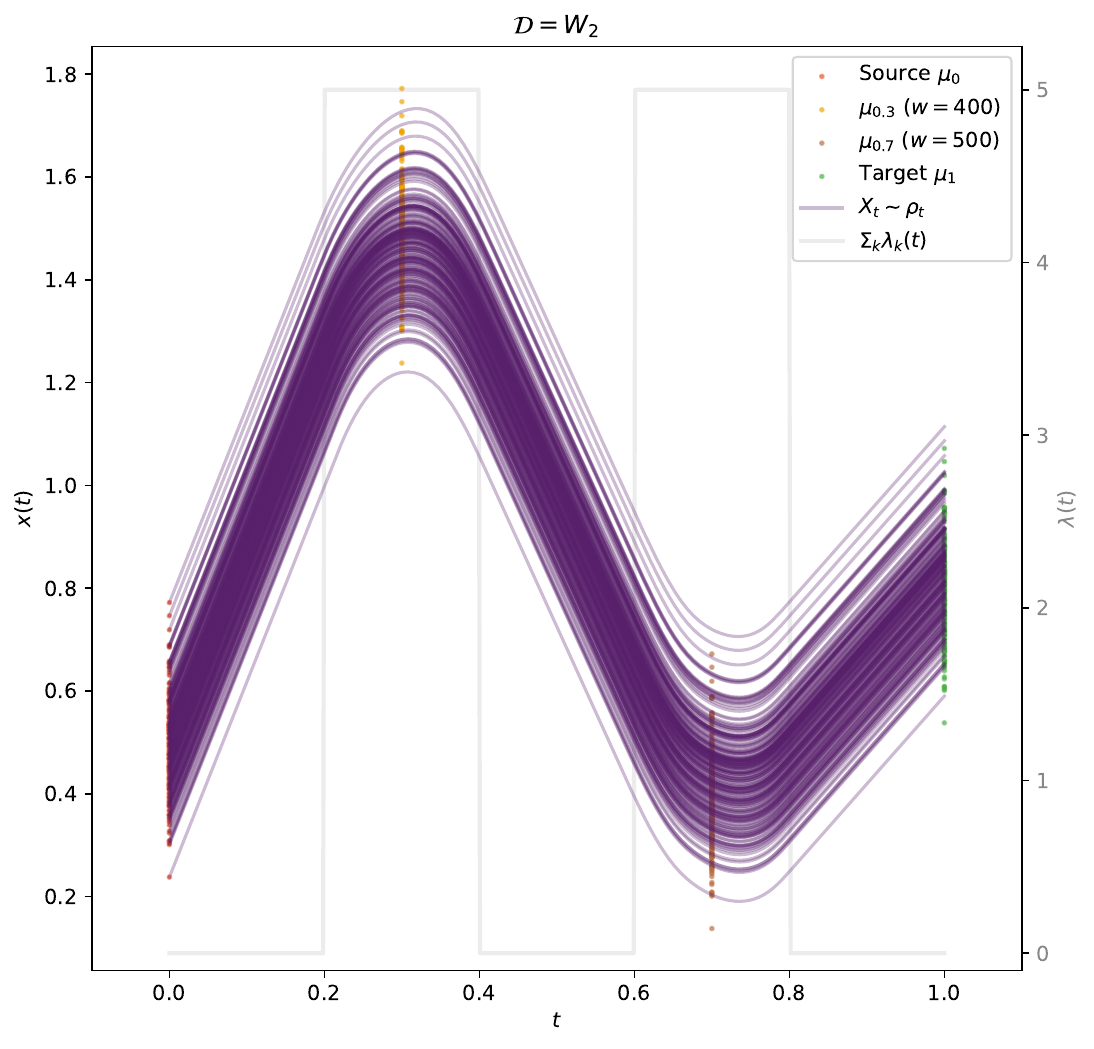}
    \caption{\textit{Top:} Varying the half-width $\tau$ of the {\lkt}s, with Gaussian shape, from 0.01, 0.05, 0.1, 0.2, to 0.4 (left to right).
    \textit{Bottom:} Varying the shape of the {\lkt}s, with $\tau = 0.1$, between Gaussian, triangular, and rectangular (left to right).
    }
    \label{fig:gaussians_lkt}
\end{figure}

\subsection{Comparing loss functions}
Finally, as described in Sec.~\ref{sec:otpfm_algo}, we compare three different variations of the regression loss function: 1) simple squared L2 loss; 
2) adaptive weights as described in the main text with $p = 1$ (Sec.~\ref{sec:otpfm_algo}); 
and 3) a learnt weighting scheme as proposed in~\citet{karras2024analyzing}:
\begin{equation}
    \mathcal{L} = \mathbb{E} \left[ e^{-w(t)} \left\| \upar(t_1, t_2, x) - \sg(\utgt(t_1, t_2, x)) \right\|^2 + w(t) \right],
\end{equation}
where $w(t)$ is a learnt log-variance predicted by the model.
These are motivated by the observation that the flow matching loss is in fact a sum over multiple losses at each time step, each of which may be easier or harder to optimize.
While the naive squared L2 loss simply weights each time step equally (for uniform time sampling), the two latter schemes attempt to weight the time steps based on the difficulty,
either by simply dividing the squared L2 loss by the magnitude of the loss, or by encouraging the network to learn (and minimize) the log-variance of its own predictions for each time step.
We show an example comparison in Fig.~\ref{fig:gaussians_losses} and find the adaptive weights with $p = 1$ to be both most performant and most stable to train in this experiment.

\begin{figure}[tb]
    \centering
    \includegraphics[width=0.32\textwidth]{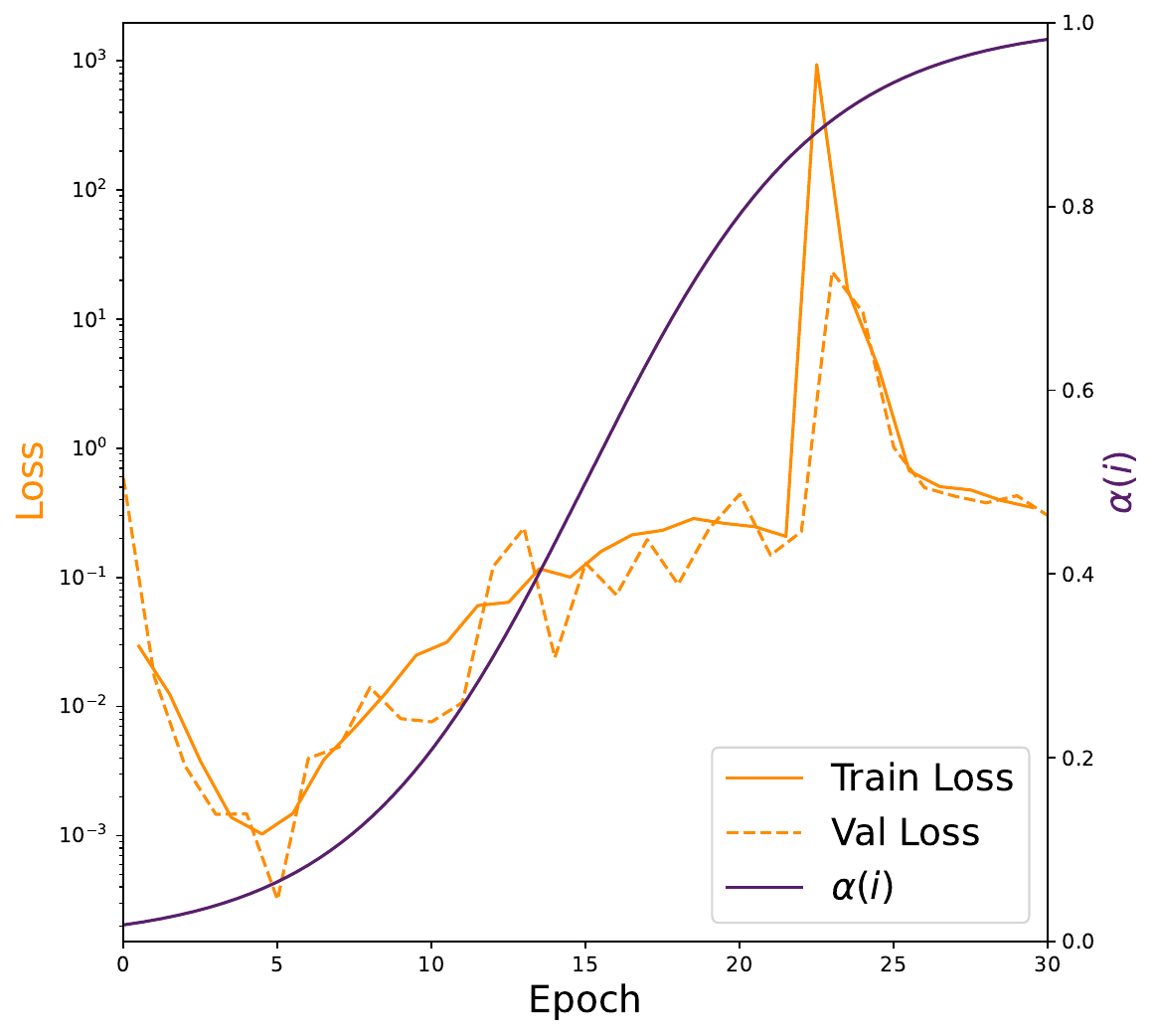}
    \includegraphics[width=0.32\textwidth]{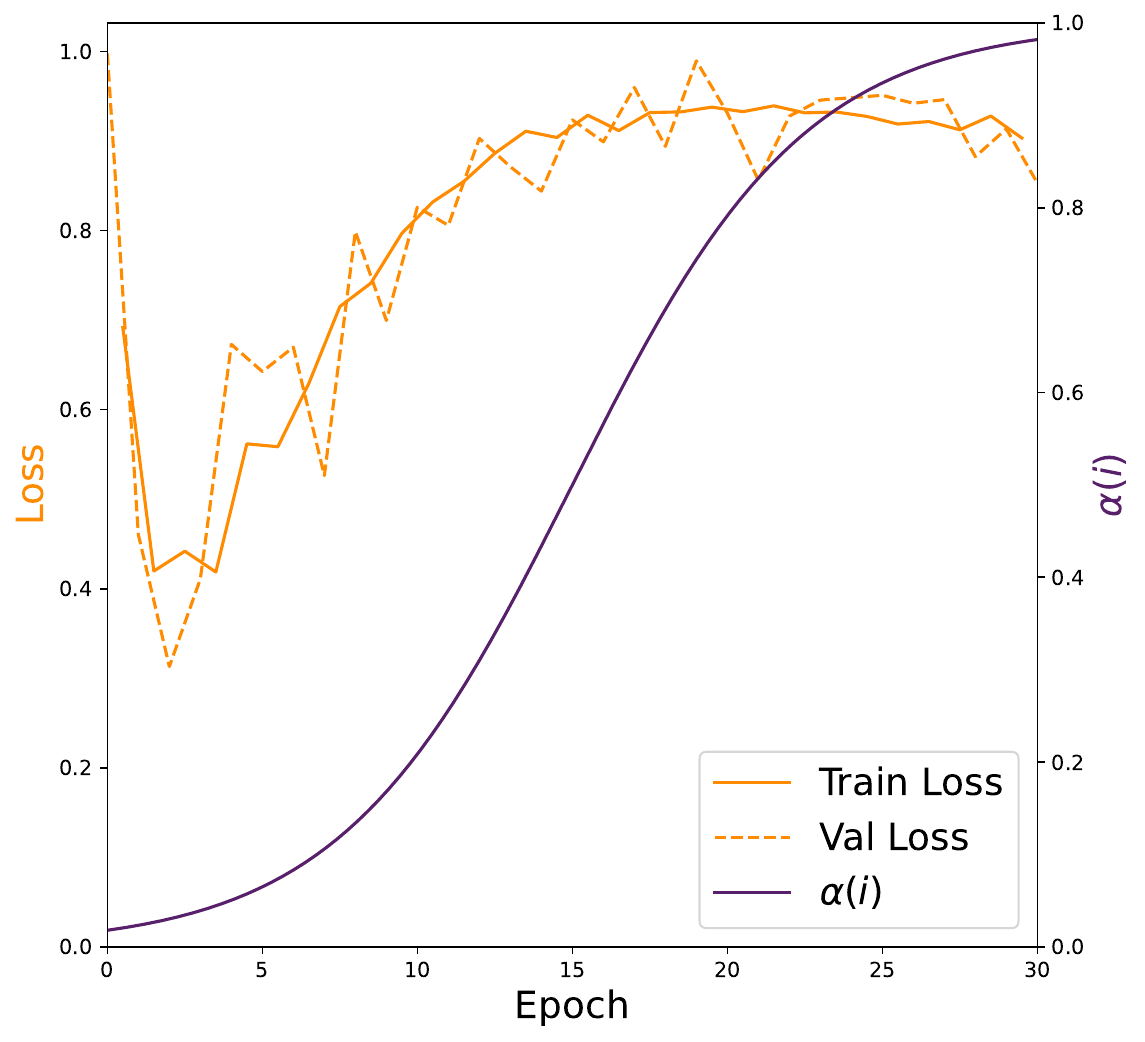}
    \includegraphics[width=0.32\textwidth]{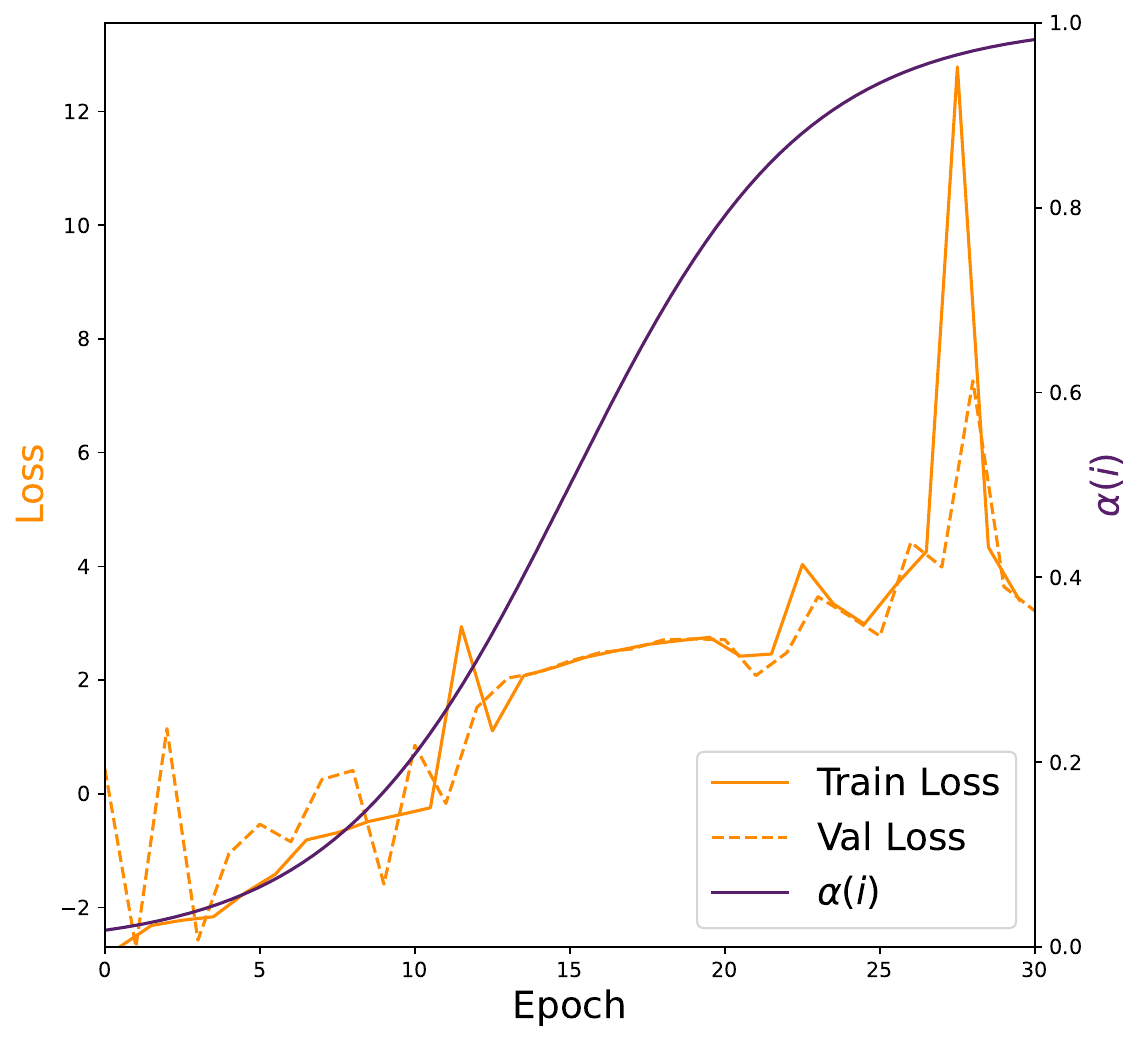}
    \includegraphics[width=0.32\textwidth]{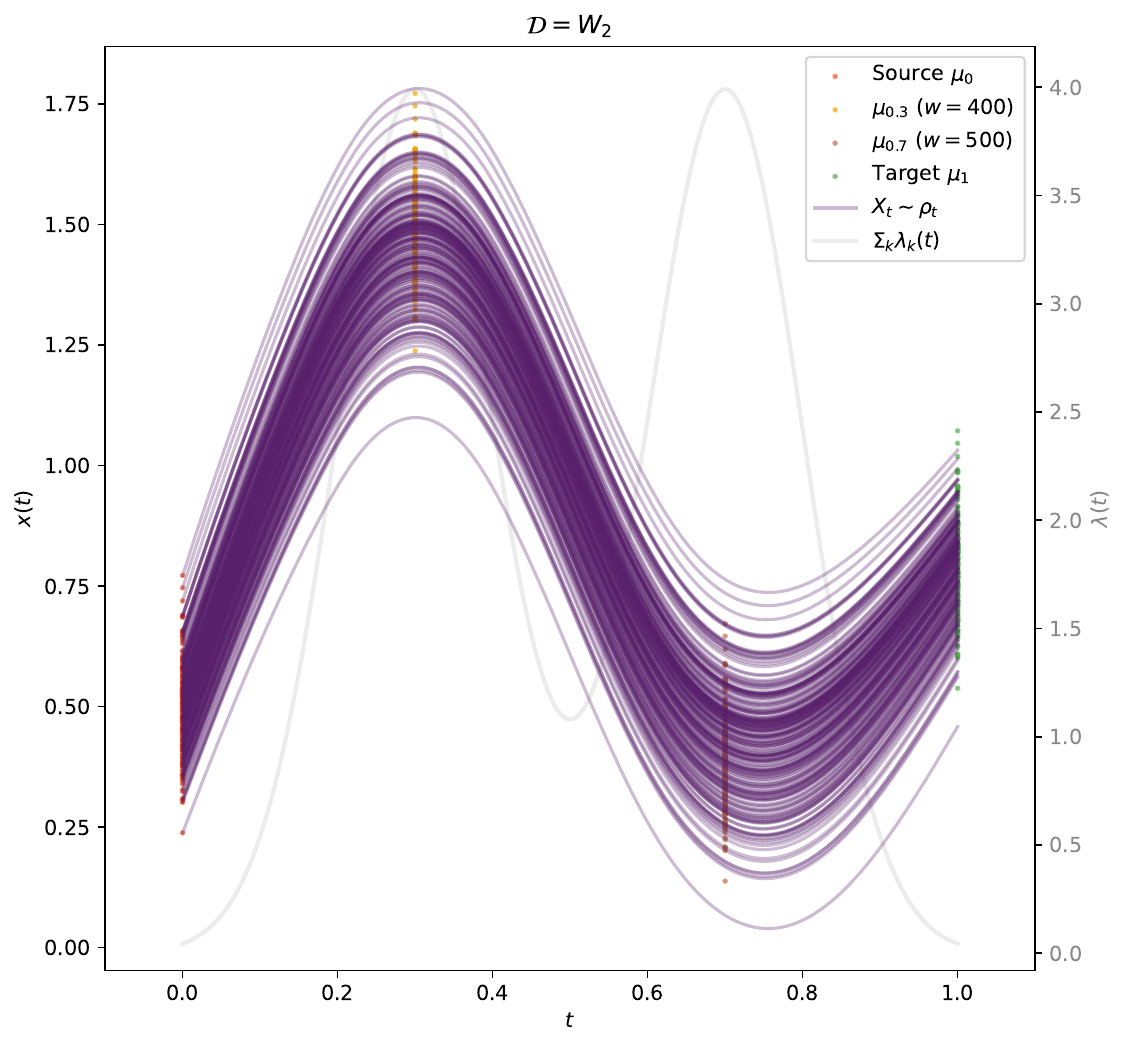}
    \includegraphics[width=0.32\textwidth]{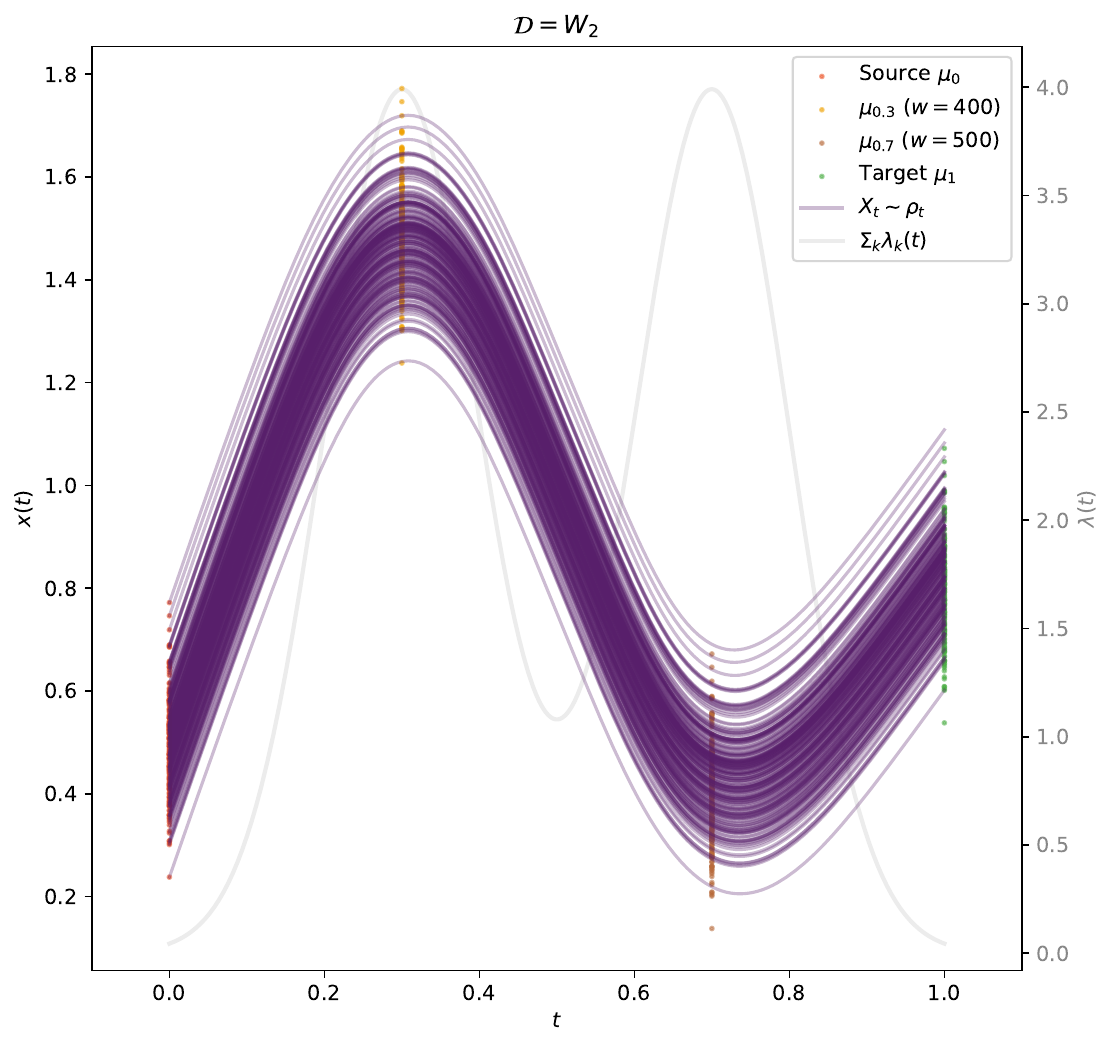}
    \includegraphics[width=0.32\textwidth]{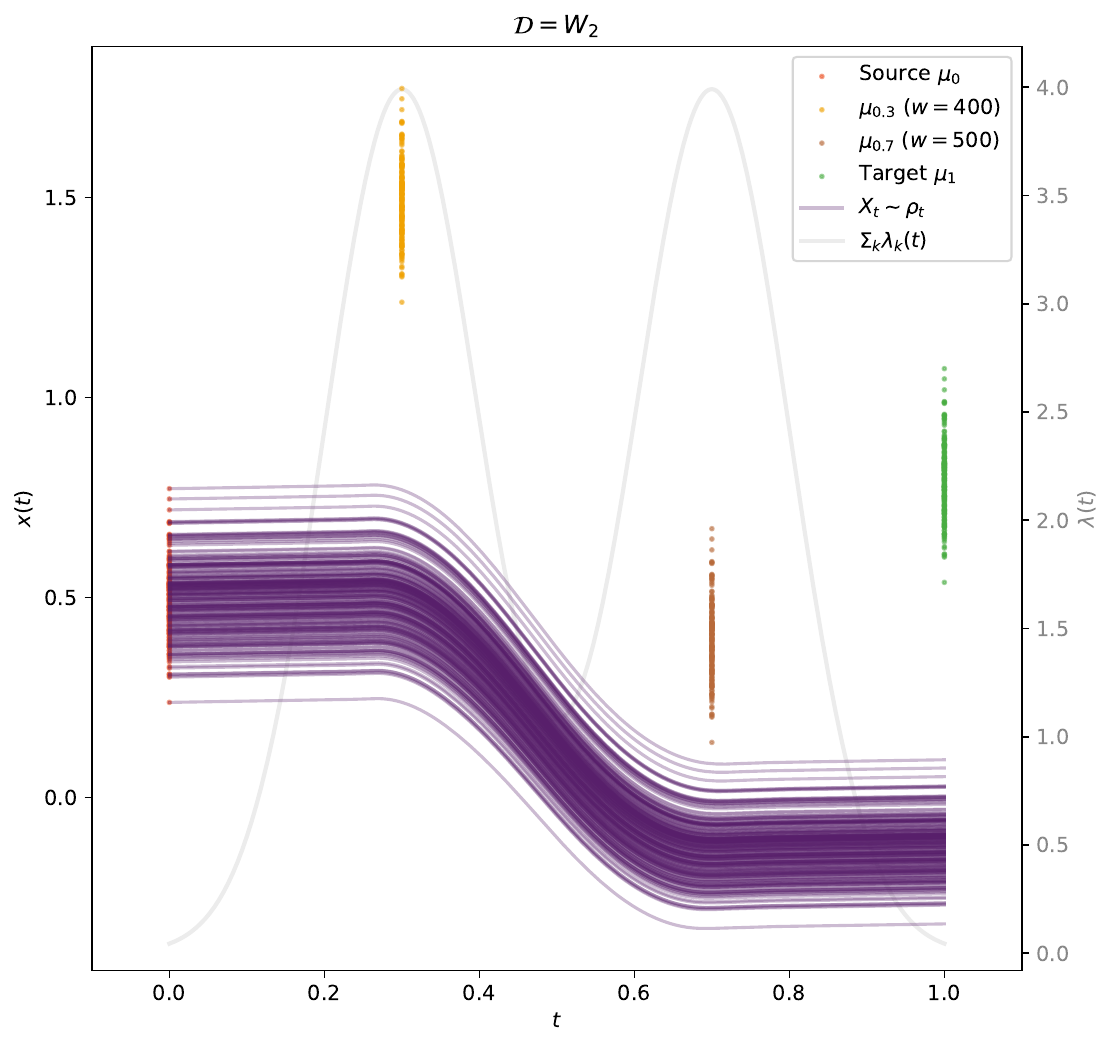}
    \caption{\textit{Top:} Example loss curves for the three different loss functions: squared L2 (top left, log scale), adaptive weights with $p = 1$ (top center), and learnt log-variance (top right).
    \textit{Bottom:} The corresponding OTP-FM trajectories for each loss function (for the same marginals and potential strengths).
    }
    \label{fig:gaussians_losses}
\end{figure}

\section{Experimental details and ablation studies on single-cell RNA sequencing datasets}
\label{app:exp_singlecell}

\subsection{Embryoid body scRNA-seq dataset}
\label{app:exp_eb}

We use the preprocessed embryoid body (EB) data provided by~\citet{moon2019visualizing}, which includes the 100-dimensional PCA representation per cell.
The dataset comprises a total of 16,819 cells collected over 27 days across the five discrete time intervals
$t_0 \in [0, 3]$, $t_1 \in [6, 9]$, $t_2 \in [12, 15]$, $t_3 \in [18, 21]$, and $t_4 \in [24, 27]$ days,
with 2,381, 4,163, 3,278, 3,665, and 3,332 cells per interval, respectively.
We consider four experimental protocols on this dataset:
\begin{itemize}
    \item \textbf{EB 5D L1O}: training in the 5-dimensional PCA space with one of the intermediate marginals $t_1$, $t_2$, or $t_3$ held out per fold, averaging \wassone in the normalized space over the three held-out folds, following~\citet{neklyudov2024computational};
    \item \textbf{EB 100D L0O}: training in the 100-dimensional PCA space on all five marginals (no held-out marginal), averaging MMD over the four training times after $t_0$, following~\citet{persiianov2026learning};
    \item \textbf{EB 100D L1O}: same as L1O but in the 100-dimensional PCA space, averaging MMD only at the held-out times in the unnormalized PCA space, following~\citet{persiianov2026learning};
    \item \textbf{EB 100D L2O}: training in the 100-dimensional PCA space with both $t_1$ and $t_3$ held out (training on $\{t_0, t_2, t_4\}$), averaging MMD over the four non-source times $t_1$--$t_4$, following~\citet{theodoropoulos2025momentum}.
\end{itemize}
Each PC feature is normalized to zero mean and unit variance across the full dataset for training, and evaluation metrics are computed in the original (un-normalized) PCA space, except for EB 5D L1O which evaluates in normalized space following the convention of~\citet{neklyudov2024computational}.
For the EB protocols we evaluate on a random subsample of 2,000 cells per time interval.

\subsection{CITE-seq scRNA-seq dataset}
\label{app:exp_cite}

The CITE-seq dataset~\citep{burkhardt2022multimodal} consists of single-cell RNA sequencing measurements of CD34+ hematopoietic stem and progenitor cells (HSPCs) from four human donors,
collected on four days, $t_0$---$t_3$, over a 10-day period.
We use the pre-computed 50-PC representation provided by~\citet{wang2026joint}, consisting of 31,240 cells distributed across the four time intervals as 7,476, 6,999, 9,511, and 7,254 cells per day, respectively.
For training, the PC features are centered and divided by the maximum per-feature standard deviation, following~\citet{neklyudov2024computational};
this normalization preserves the relative scale across PCs.
We consider two experimental protocols on this dataset, each with two leave-one-out folds (holding out day 3 and day 4 separately, with the source day 2 and target day 7 always present):
\begin{itemize}
    \item \textbf{CITE 5D L1O}: using the first 5 PCs and averaging \wassone in the normalized PCA space over the two held-out folds, following~\citet{kapusniak2024metric};
    \item \textbf{CITE 50D L1O}: using all 50 PCs and averaging \wassone in the original (un-normalized) PCA space over the two held-out folds, following~\citet{neklyudov2024computational}.
\end{itemize}
For both protocols we evaluate on the full set of cells at each time point for an apples-to-apples comparison with the baselines~\citep[and others]{neklyudov2024computational, kapusniak2024metric}.

\subsection{Architecture and training}
\label{app:exp_singlecell_architecture}

For all single-cell experiments we use an MLP velocity network with residual connections every two layers, SiLU activations, LayerNorm, and dropout 0.2,
trained with the Adam optimizer at a base learning rate of $3 \cdot 10^{-3}$ and a batch size of 256 samples per marginal.
All training runs are performed on a single NVIDIA L40S GPU and final model checkpoints are chosen based on the minimum MMD or \wassone scores.
We use 64-dimensional positional embeddings for both $t_1$ and $\Delta t = t_2 - t_1$, and a 64-dimensional linear embedding for the position input,
following the architecture described in App.~\ref{app:exp_gaussians}.
We use eight to ten hidden layers of 768 nodes each (4.9M---6.5M parameters),
training each of the \wass, \wassinf, MMD, and KLD potentials with the iMF consistency loss (App.~\ref{app:flowmap_matching}), adaptive loss weighting with $p = 1$, 
tuning the potential strengths and \lkt's for each dataset (exact configurations are provided in our linked codebase).
We use a sigmoid curriculum for \alphai with slope 24 and midpoint at iteration $0$ (i.e., the schedule starts at $\alpha = 0.5$ and quickly saturates to $1$).
Intermediate potential times \{$t_k$\} are auto-computed as evenly spaced fractions of the training interval.
We train for between 200 and 300 epochs depending on the distance metric, with training times shown in Figs.~\ref{fig:pareto_main} and~\ref{fig:pareto_appendix}.
We use MMD with the RBF kernel of bandwidth 10 and KLD with KDE score estimates of bandwidth 3, both with a maximum of 5 Picard iterations per training iteration.
We tested the KLD with a Gaussian score estimator as well but this did not converge.

\subsection{Ablation studies}

Table~\ref{tab:ablation_appendix} continues the ablation study from Sec.~\ref{sec:ablations}, varying the consistency model and loss function on the EB 100D L2O experiment.
The baseline model uses the same parameters specified in Sec.~\ref{sec:ablations}, trained with the iMF formulation and adaptive loss function.
Figure~\ref{fig:app_singlecell} shows the ground truth and simulated trajectories on this experiment for the baseline model and different potential parameters.

\begin{table}[tb]
    \caption{Ablation of the consistency model choice and loss function on EB 100D L2O ($\overline{\text{MMD}}$ on held-out times, lower is better).
    Baseline in \textbf{bold}; others single-seed.}
    \label{tab:ablation_appendix}
    \centering
    \begin{minipage}[t]{0.3\columnwidth}\centering
        \begin{tabular}{lc}
\toprule
\textbf{Consistency} & \textbf{$\overline{\text{MMD}}\!\downarrow$} \\
\midrule
\textbf{IMF} & \textbf{0.0675} \\
MeanFlow & 0.0690 \\
LSD & 0.0696 \\
\bottomrule
\end{tabular}

    \end{minipage}\hspace{0em}%
    \begin{minipage}[t]{0.3\columnwidth}\centering
        \begin{tabular}{lc}
\toprule
\textbf{Loss function} & \textbf{$\overline{\text{MMD}}\!\downarrow$} \\
\midrule
\textbf{Adaptive} & \textbf{0.0675} \\
MSE & 0.0698 \\
Weighted & 0.0746 \\
\bottomrule
\end{tabular}

    \end{minipage}
\end{table}

\begin{figure*}[tb]
    \centering
    \includegraphics[width=0.8\textwidth]{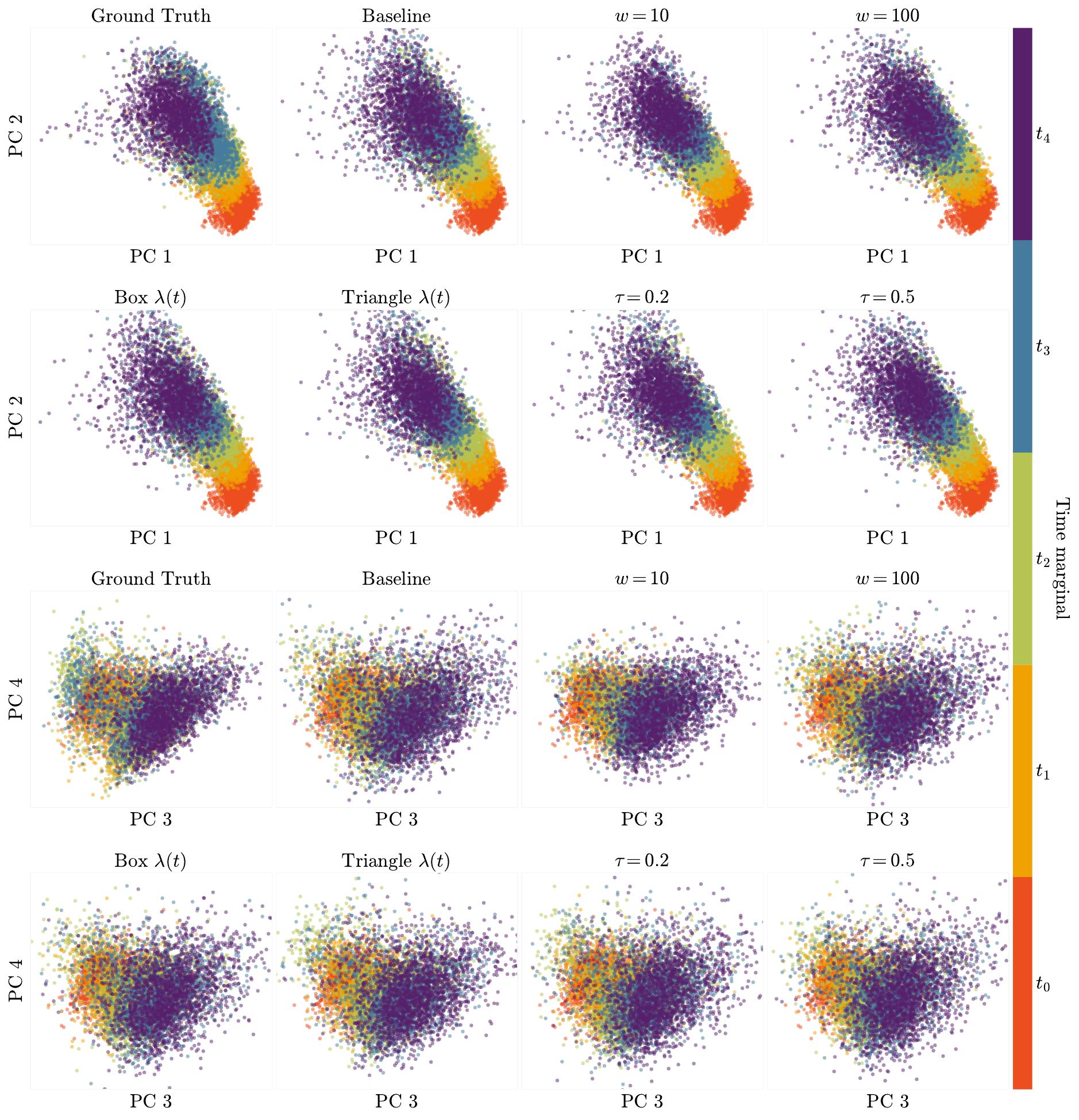}
    \caption{Ground truth and simulated trajectories for the EB 100D dataset in the leave-two-out setting ($t_1$ and $t_3$ marginals held out) for different OTP-FM potential parameters.
    The ``baseline'' model uses $\md = \wass$ with a Gaussian $\lambda(t)$, with half-width $\tau = 0.33$ and strength $w = 1000$.
    All other models use the same parameters except for the difference specified.
    }
    \label{fig:app_singlecell}
\end{figure*}

\subsection{Additional results}

Figure~\ref{fig:pareto_appendix} shows additional timing vs. performance plots for the EB 5D, EB 100D L1O, and CITE 50D experiments,
and Fig.~\ref{fig:singlecell_full} shows trajectory visualizations for the EB 100D L2O experiment for OTP-FM with different potentials and all baseline models for which we have access to trajectories, 
either by our own training (as specified in App.~\ref{app:exp_singlecell_benchmark}) or provided by the authors.\footnote{We thank Panagiotis Theodoropoulos for providing the trajectories for 3MSBM.}

\begin{figure*}[tb]
    \centering
    \includegraphics[width=0.32\textwidth]{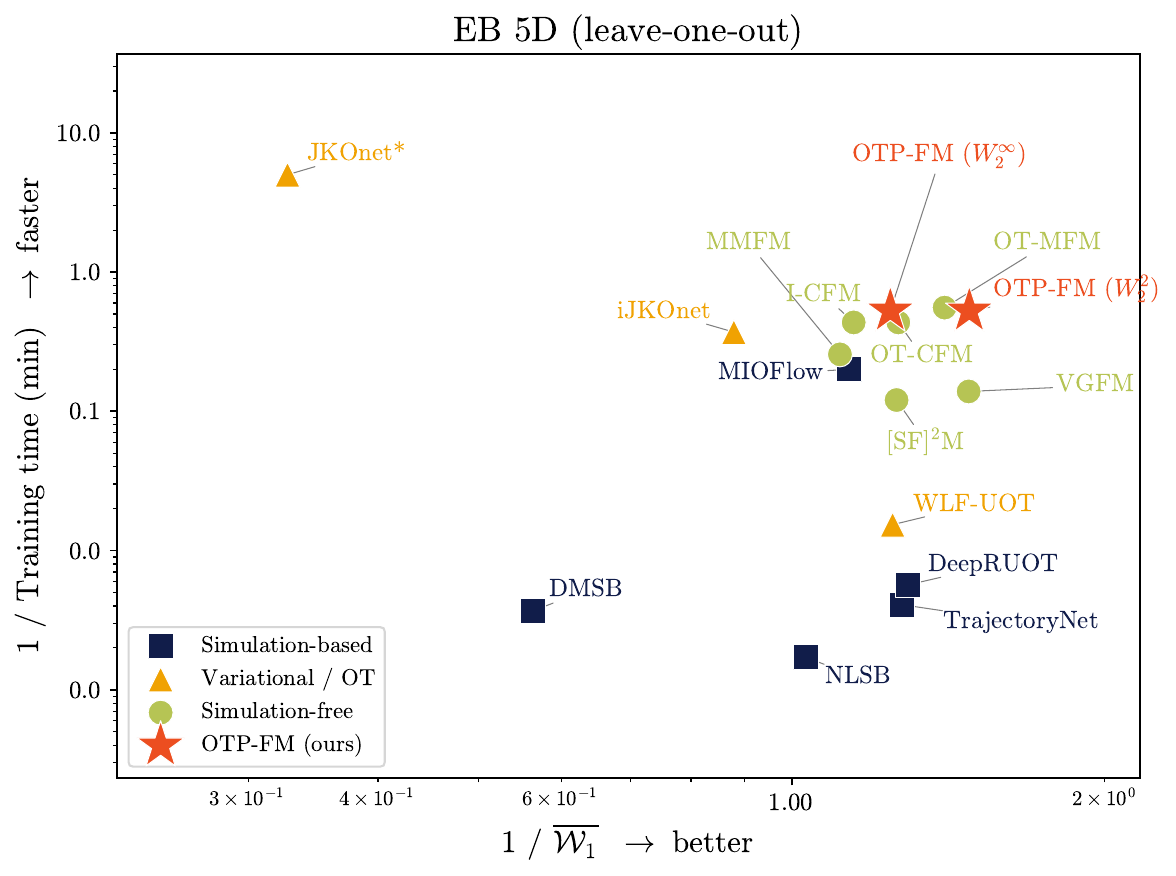}\hfill
    \includegraphics[width=0.32\textwidth]{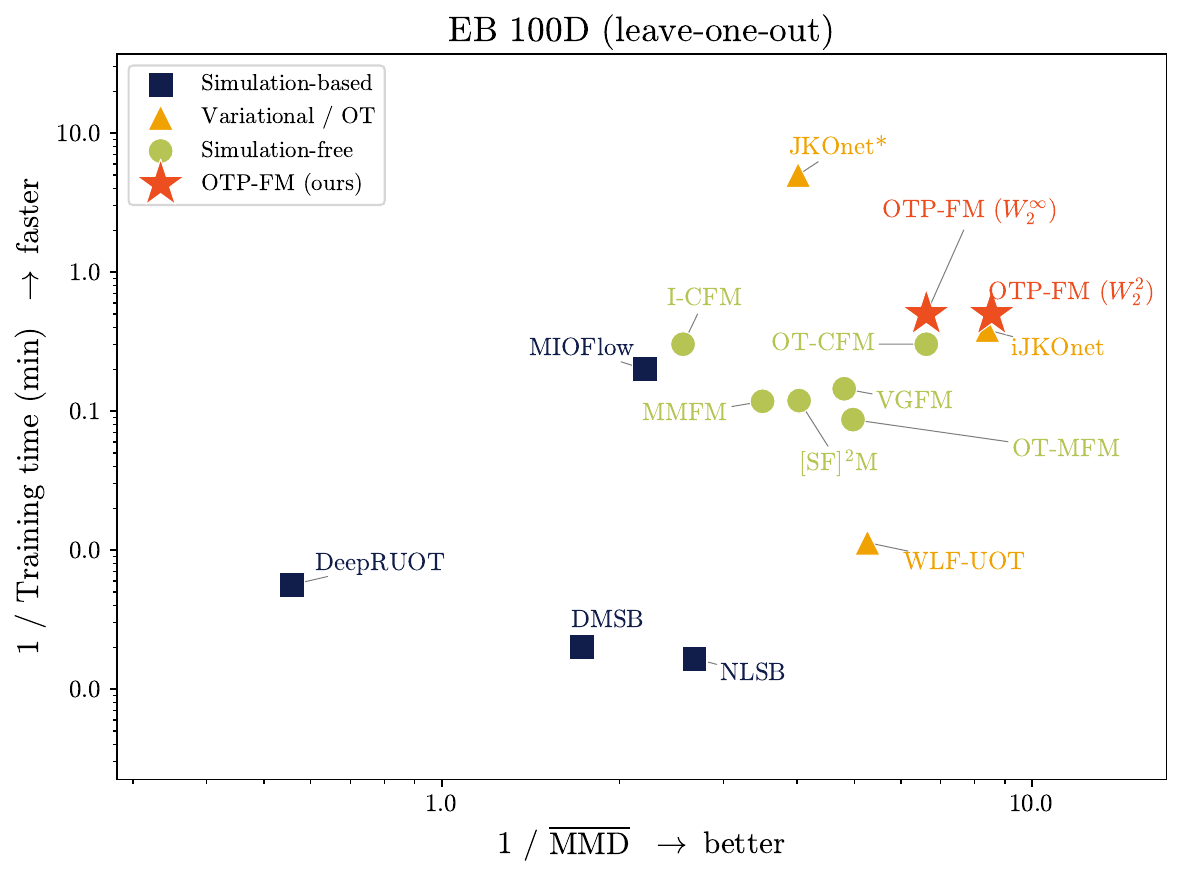}\hfill
    \includegraphics[width=0.32\textwidth]{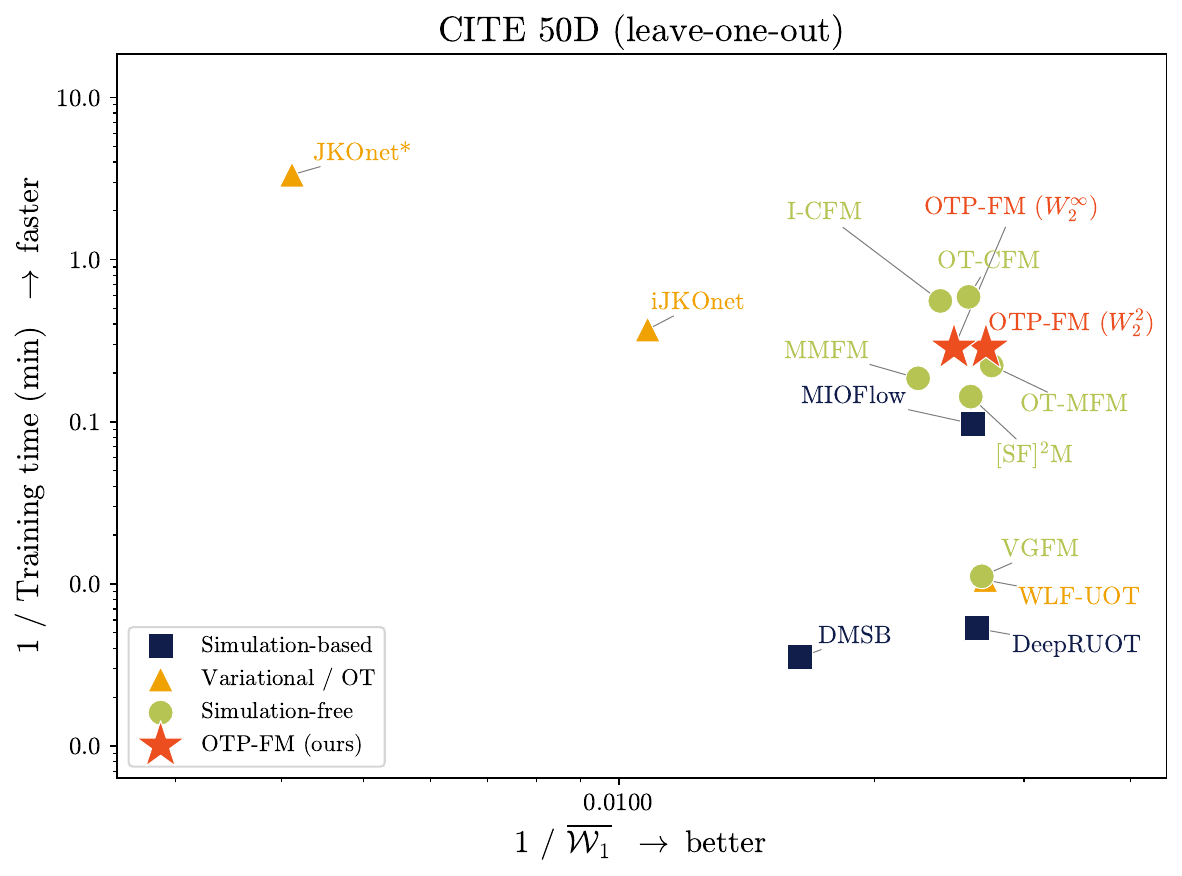}
    \caption{Training time vs. performance of different methods for the EB 5D L1O (left), EB 100D L1O (center), and CITE 50D L1O (right) experiments. Top right is better.}
    \label{fig:pareto_appendix}
\end{figure*}

\begin{figure*}[tb]
    \centering
    \includegraphics[width=\textwidth]{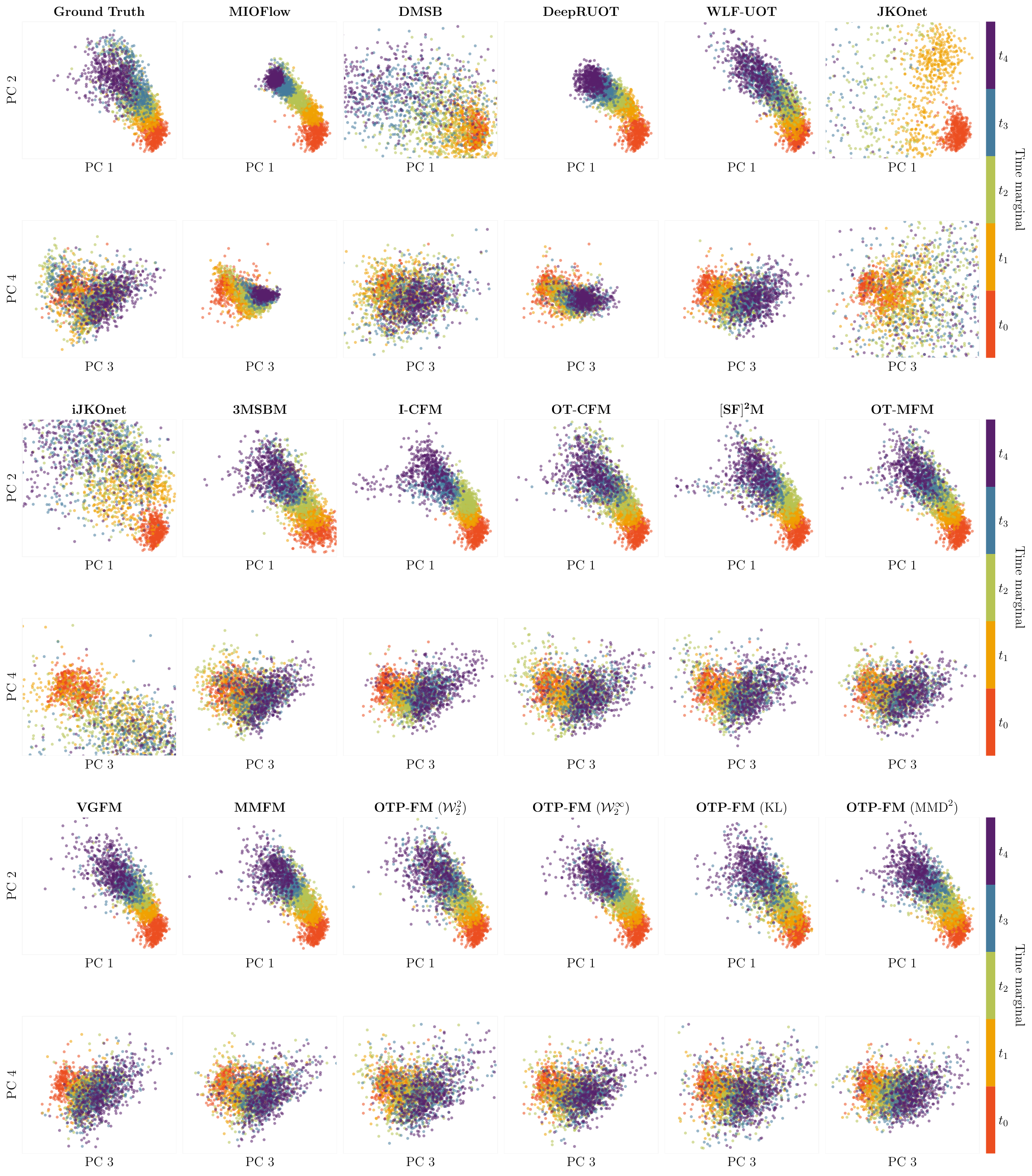}
    \caption{Ground truth and simulated trajectories for the EB 100D dataset in the leave-two-out setting ($t_1$ and $t_3$ marginals held out) 
    in all four PCs across all five time intervals, 
    comparing OTP-FM with different potentials with baseline models.}
    \label{fig:singlecell_full}
\end{figure*}




\subsection{Benchmarking methods}
\label{app:exp_singlecell_benchmark}

Below we describe each baseline method, the codebase and configuration used in our evaluation and timing studies,
and the provenance of the results in Table~\ref{tab:singlecell_combined}, in the order that they appear.
All timing measurements (Figs.~\ref{fig:pareto_main} and~\ref{fig:pareto_appendix}) were performed on a single cloud VM with 
one NVIDIA L40S GPU and one AMD EPYC 7R13 processor (8 cores / 16 threads, 32 MiB L3 cache) and 124 GiB RAM. 
Timings report the wall-clock training time per fold of one held-out marginal configuration (Sec.~\ref{sec:exp_singlecell}), measured on an otherwise idle machine with no concurrent jobs.
We use the author-provided configurations whenever available for the respective dataset, 
otherwise adapting the most similar configuration available in their repository.
For methods whose per-fold training time exceeds ten minutes, 
we measure the time per training iteration (or per epoch) from a ten-minute run and extrapolate by the total iteration count specified in the original configuration; 
for all other methods the reported value is of the full training run.
We exclude evaluation time and precomputation of OT couplings from the timing measurements; for the datasets considered in this work, 
we measured the OT precomputation time to be between 5--15s and, hence, negligible in comparison to the training time.

\paragraph{TrajectoryNet}
TrajectoryNet~\citep{tong2020trajectorynet} learns a continuous normalizing flow with dynamic OT regularization for trajectory inference.
We use the provided code\footnote{\url{https://github.com/KrishnaswamyLab/TrajectoryNet}} with the authors' EB 5D configuration;
the EB 5D L1O value is taken directly from~\citet{tong2020trajectorynet}.
We were not able to run the EB 100D and CITE experiments ourselves because of the prohibitively slow training time.
For the same reason, timing values for EB 100D, CITE 5D, and CITE 50D are extrapolated from limited-iteration runs using configurations adapted from the EB 5D config.

\paragraph{NLSB}
Neural Lagrangian Schr\"odinger Bridge (NLSB)~\citep{koshizuka2023neural} models population dynamics via a Lagrangian-regularized neural SDE.
We use the provided code\footnote{\url{https://github.com/take-koshizuka/NLSB}} with the authors' EB 5D configuration;
the EB 5D L1O value is taken from~\citet{koshizuka2023neural}, the EB 100D L0O and L1O values from~\citet{chen2023deep}, and we ran the CITE 5D L1O setting ourselves with a configuration adapted from EB 5D.
We were not able to complete the EB 100D L2O and CITE 50D runs because of the prohibitively slow training time.
Their timings are extrapolated from ten-minute runs using configurations adapted from the EB 5D config.

\paragraph{MIOFlow}
MIOFlow~\citep{huguet2022manifold} combines a Geodesic Autoencoder (GAGA) with a neural ODE for trajectory inference;
crucially it trains the neural ODE by matching predicted and observed marginals under the \wass distance instead of the maximum-likelihood loss used by continuous normalizing flows (CNFs).
This sidesteps the instantaneous change-of-variables computation that CNFs require at every integration step, leading to much faster training than conventional simulation-based approaches.
We use the provided code\footnote{\url{https://github.com/KrishnaswamyLab/MIOFlow}}.
The repository does not provide scRNA-specific configurations matching our benchmark, so we adapt the 5D demo configuration for all four datasets.
The CITE 50D L1O values are taken from~\citet{huguet2022manifold}, the EB 100D L1O and L0O values from~\citet{chen2023deep}, and we ran the EB 5D L1O, EB 100D L2O, and CITE 5D L1O settings ourselves, 
bypassing the GAGA encoding and decoding.

\paragraph{DMSB}
DMSB~\citep{chen2023deep} is a neural SDE method that lifts multi-marginal Schr\"odinger bridges to phase space to smooth the dynamics.
We use the provided code\footnote{\url{https://github.com/TianrongChen/DMSB}} with the authors' EB 100D and CITE 50D configurations;
the EB 5D L1O value is reported from~\citet{chen2023deep}, the CITE 5D L1O value from~\citet{kapusniak2024metric}, and we ran the EB 100D and CITE 50D L1O settings ourselves.
We note that \citet{chen2023deep} provides EB 100D L1O + L0O in the z-score \emph{normalized} space while we report unnormalized MMD values; hence, we retrain DMSB to obtain unnormalized values for this setting as well.
The EB 5D and CITE 5D timings use configurations adapted from the EB 100D and CITE 50D configs, respectively.

\paragraph{DeepRUOT}
DeepRUOT~\citep{zhang2025learning} is a neural SDE method that jointly learns velocity, growth, and score networks under a regularized unbalanced OT objective.
We use the provided code\footnote{\url{https://github.com/zhenyiizhang/DeepRUOT}} and adapt the original four-phase EB 5D training schedule for all datasets.
The EB 5D L1O, CITE 5D L1O, and CITE 50D L1O values are taken from~\citet{zhang2025learning}, and we ran the EB 100D settings ourselves.
Timing values are obtained by extrapolating the per-phase iteration timing measured within 10 minutes per dataset.

\paragraph{WLF-UOT}
WLF-UOT~\citep{neklyudov2024computational} represents an alternative paradigm for trajectory inference, solving the variational problem directly.
We use the provided code\footnote{\url{https://github.com/necludov/wl-mechanics}} with the authors' EB 5D and CITE 50D unbalanced-OT configurations;
the EB 5D L1O value is taken from~\citet{neklyudov2024computational}, the CITE 5D and 50D L1O values from~\citet{kapusniak2024metric}, and we ran the EB 100D settings ourselves with configurations adapted from those provided.
For consistency with the other baselines, for evaluation we generate all samples starting from $t = 0$.
We note that~\citet{neklyudov2024computational} also report values incorporating potentials based on the \emph{held-out} marginals, which we do not include for fairness.

\paragraph{JKOnet*}
JKOnet~\citep{bunne2022proximal} interprets diffusion as energy-minimizing trajectories in Wasserstein space, following the work of Jordan, Kinderlehrer, and Otto (JKO)
and solves this minimization problem.
We experiment with an improved version, JKOnet*~\citep{terpin2024learning}.
We use the provided code\footnote{\url{https://github.com/antonioterpin/jkonet-star}} with the authors' EB 5D configuration; 
the other datasets are run with configurations adapted from the EB 5D config.
For EB 5D L1O, EB 100D L2O, CITE 5D L1O, and CITE 50D L1O we ran the method ourselves, while the EB 100D L0O and L1O values are from~\citet{persiianov2026learning}.

\paragraph{iJKOnet}
iJKOnet~\citep{persiianov2026learning} proposes an alternative, adversarial scheme for the JKO optimization problem above.
We use the provided code\footnote{\url{https://github.com/MuXauJl11110/iJKOnet}} with the authors' EB 5D and CITE 50D configurations.
For EB 5D L1O, EB 100D L2O, CITE 5D L1O, and CITE 50D L1O we re-ran the method, while the EB 100D L0O and L1O values are from~\citet{persiianov2026learning}.
EB 100D and CITE 5D timings use configurations adapted from EB 5D and CITE 50D, respectively.

\paragraph{3MSBM}
3MSBM~\citep{theodoropoulos2025momentum} extends multi-marginal Schr\"odinger bridges to phase space via a score-matching-style objective to avoid simulation-based training,
with the score targets calculated by solving a dynamic programming problem during training.
We completed one training run of 3MSBM on the EB 100D L2O setting following the code provided\footnote{\url{https://github.com/panostheo98/3MSBM}} and the authors' EB-specific configuration but it did not converge.
We therefore only report results directly from~\citet{theodoropoulos2025momentum} (EB 100D L2O) and include trajectories that are provided by the authors in Fig.~\ref{fig:singlecell_full}.

\paragraph{I-CFM, OT-CFM, and {[SF]}\textsuperscript{2}M}
I-CFM and OT-CFM~\citep{tong2024improving} train flow-matching velocity models on independently sampled or OT-aligned pairs of consecutive marginals, respectively, while
{[SF]}\textsuperscript{2}M~\citep{tong2023simulation} uses score-matching to learn a Schr\"odinger bridge between marginals.
In all cases, trajectories are stitched together piecewise between consecutive training marginals, as described in App.~\ref{app:piecewise_hard}.
We use the provided \texttt{torchcfm} code,\footnote{\url{https://github.com/atong01/conditional-flow-matching}}
with hyperparameters adapted from the single-cell tutorials.
For OT-CFM we precompute a single full-dataset EMD coupling per consecutive time pair rather than re-solving per minibatch, for a fair comparison with OTP-FM.
The EB 5D L1O value is taken from~\citet{tong2024improving, tong2023simulation}, the CITE 5D and 50D L1O values from~\citet{kapusniak2024metric},
and the EB 100D experiments we ran ourselves.

\paragraph{OT-MFM}
OT-MFM~\citep{kapusniak2024metric} encourages CFM trajectories to follow the data manifold by replacing straight interpolants with approximate geodesics of a data-dependent Riemannian metric.
We use the provided code\footnote{\url{https://github.com/kksniak/metric-flow-matching}} with the authors' EB 5D, CITE 5D, and CITE 50D LAND-metric configurations;
the EB 5D L1O and CITE 5D and 50D L1O values are taken from~\citet{kapusniak2024metric}, and we ran the EB 100D experiments ourselves using a configuration adapted from the EB 5D config.

\paragraph{VGFM}
VGFM~\citep{wang2026joint} jointly learns velocity and \emph{growth} fields using a loss derived from semi-relaxed OT on top of the CFM loss.
We use the provided code\footnote{\url{https://github.com/DongyiWang-66/VGFM}} with the authors' EB 5D, CITE 5D, and CITE 50D notebook configurations (both the ``warm-up'' and training phases);
the EB 5D L1O and CITE 5D and 50D L1O values are taken from~\citet{wang2026joint}, and we ran the EB 100D experiments ourselves with a configuration adapted from the EB 5D notebook.

\paragraph{MMFM}
MMFM~\citep{rohbeck2025mmfm} trains CFM between multiple marginals using a cubic-spline interpolation as the conditional velocity regression target.
We train MMFM following the code provided,\footnote{\url{https://github.com/Genentech/MMFM}} using an identical model architecture to OTP-FM's for each respective configuration, 
over 5 training seeds on all five evaluation settings.


\section{Experimental details on the Gulf of Mexico dataset}
\label{app:exp_gom}

\subsection{Dataset}
This dataset comprises high-resolution bathymetric measurements of the Gulf of Mexico (GoM) in the region between 98$^\circ$E and 77$^\circ$E in longitude and from
18$^\circ$N to 32$^\circ$N in latitude.
The original dataset was released to the public by the United States Department of Defense and contains hourly observations between January 1st, 2001 and August 31st, 2024~\cite{hycom_gom_reanalysis}.
~\citet{shen2025multimarginal} then extracted the velocity field around a vortex from the June 1st, 2024, 5pm timepoint, 
which they then used to simulate 1,000 particle trajectories.
Finally, these trajectories were each randomly sampled at one of nine time points $t_0$ --- $t_8$, 
such that each particle is only observed at a single time point, yielding 111 samples per marginal.
These are the ground truth marginals shown in Fig.~\ref{fig:gom}.

\subsection{Architecture and training}
We use a ten-layer MLP with residual connections every two layers and 128 nodes per hidden layer for the velocity model.
We use the Adam optimizer with a learning rate of $10^{-3}$, a batch size of 64 samples per marginal, and the same sigmoid curriculum for \alphai as in the synthetic data experiments.
In this case, we find the simple MSE loss to be most performant.
We train for a total of 2000 epochs for each distance metric,
corresponding to a total training time of 4 minutes for the \wass and \wassinf distances and 12 minutes for the MMD and KLD distances on a single NVIDIA L40S GPU.
We use MMD with the RBF kernel and KLD with KDEs for the score estimates, both with a bandwidth of 3 and a maximum of 20 Picard iterations per training iteration.
The final model checkpoints are chosen based on the average \wass distance over the held-out times.

\subsection{Additional results}
Table~\ref{tab:gom_full} shows the per-timepoint results on the GoM dataset, including OTP-FM with the KL and MMD$^2$ potentials,
and Fig.~\ref{fig:gom} shows the ground truth and simulated trajectories for the GoM dataset, comparing MMFM and OTP-FM with different potentials.

\begin{table*}[tb]
    \centering
    \caption{Per-timepoint results on the GoM dataset, showing the mean and standard deviation across five seeds of the \wasstwo distance to the four held-out marginals 
    and the average distance to the rest (lower is better).
    3MSBM results are reported from~\citet{theodoropoulos2025momentum}, while MMFM results are from our training runs.
    The best performing method per timepoint is in \textbf{bold} and the second-best in \textit{italics}.
    }
    \label{tab:gom_full}
    \resizebox{0.7\textwidth}{!}{
        \begin{tabular}{lccccc}
\toprule
Method & $t_1$ \wasstwo & $t_3$ \wasstwo & $t_5$ \wasstwo & $t_7$ \wasstwo & Rest \wasstwo \\
\midrule
3MSBM & $0.20\pm0.03$ & $0.18\pm0.03$ & $\mathbf{0.07\pm0.02}$ & \textit{0.09$\pm$0.00} & $\mathbf{0.05\pm0.03}$ \\
MMFM & $0.20\pm0.00$ & $0.20\pm0.01$ & \textit{0.08$\pm$0.01} & $0.12\pm0.02$ & $0.08\pm0.01$ \\\midrule
OTP-FM (\wass) & $\mathbf{0.07\pm0.00}$ & $\mathbf{0.15\pm0.01}$ & $0.09\pm0.01$ & $\mathbf{0.08\pm0.00}$ & \textit{0.06$\pm$0.01} \\
OTP-FM (\wassinf) & $\mathbf{0.07\pm0.00}$ & $\mathbf{0.15\pm0.00}$ & $0.11\pm0.00$ & $0.11\pm0.01$ & $0.07\pm0.00$ \\
OTP-FM (MMD$^2$) & $0.15\pm0.02$ & $0.33\pm0.12$ & $0.25\pm0.16$ & $0.15\pm0.05$ & $0.22\pm0.10$ \\
OTP-FM (KL) & $0.13\pm0.01$ & $0.26\pm0.09$ & $0.15\pm0.06$ & $0.10\pm0.02$ & $0.16\pm0.06$ \\
\bottomrule
\end{tabular}
    }
\end{table*}

\begin{figure*}[tb]
    \centering
    \includegraphics[width=\textwidth]{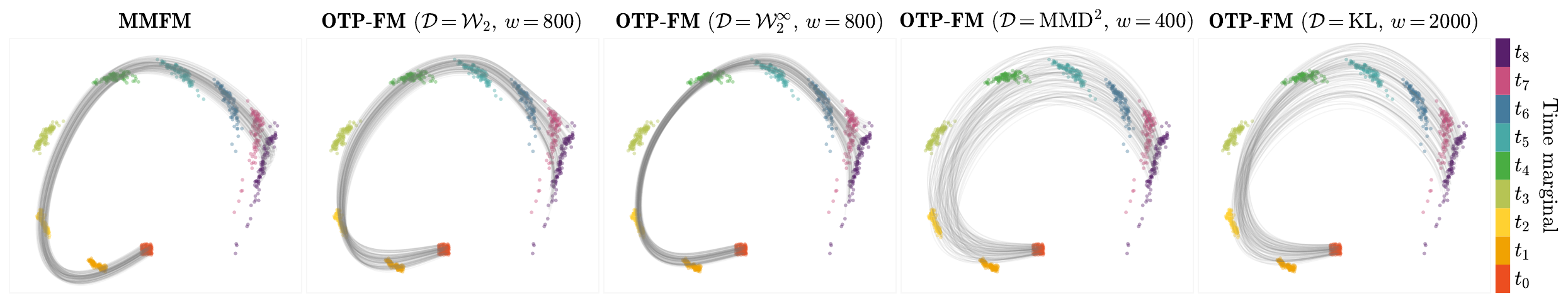}
    \caption{Ground truth and simulated trajectories for the GoM dataset, comparing MMFM and OTP-FM with different potentials.}
    \label{fig:gom}
\end{figure*}

\section{Experimental details on the Beijing air quality dataset}
\label{app:exp_beijing}

\subsection{Dataset}
This final dataset includes hourly measurements of the concentrations of six main air pollutants and six meteorological variables 
from 12 monitoring sites in Beijing, between 2013 and 2017~\cite{chen2017beijing}.
We follow the experimental setup of~\citet{theodoropoulos2025momentum}, whereby we focus on one station --- Dingling's --- 
measurements of one important pollutant: particulate matter smaller than 2.5 $\mu$m, or PM2.5.
We aggregate the hourly measurements by month, and sample every other month in the period of March 2013 --- March 2015;
i.e., 13 total marginals: $t_0$ --- $t_{12}$.
We hold out $t_2$, $t_5$, $t_8$, and $t_{11}$ from the training to test the interpolation ability of the model,
leaving nine marginals (seven intermediate potentials) for training.
Each marginal contains around 740 samples.

\subsection{Architecture and training}
We use a four-layer MLP with residual connections every two layers and 128 nodes per hidden layer for the velocity model.
We use the Adam optimizer with a learning rate of $3\cdot10^{-3}$, a batch size of 128 samples per marginal, and the same sigmoid curriculum for \alphai as in the synthetic data experiments.
We find simple MSE to again be most performant.
We train for a total of 600 epochs for the \wass and \wassinf distances and 100 epochs for the MMD and KLD distances,
corresponding to a total training time of 5 minutes for the former and 2 minutes for the latter distances on a single NVIDIA L40S GPU.
We use MMD with the RBF kernel of bandwidth 3 and KLD with the Gaussian score estimates, both with a maximum of 5 Picard iterations per training iteration.
The final model checkpoints are chosen based on the average \wass distance over the held-out times.

\subsection{Additional results}
Table~\ref{tab:beijing_full} shows the per-timepoint results on the Beijing air quality dataset,
and Fig.~\ref{fig:beijing} shows the ground truth and simulated trajectories for the Beijing air quality dataset, comparing MMFM and OTP-FM.
This dataset interestingly highlights a key failure mode of prescribed approaches such as MMFM: as evident from Fig.~\ref{fig:beijing},
the cubic spline interpolation assumed in MMFM manifestly does not capture the data.
In contrast, the flexibility of the OTP-FM design space allows the data to dictate the optimal interpolation, yielding a better fit to the held-out marginals.

\begin{table*}[tb]
    \centering
    \caption{Per-timepoint results on the Beijing air quality dataset, showing the mean and standard deviation across five seeds of the \wasstwo distance to the four held-out marginals 
    and the average distance to the rest (lower is better).
    3MSBM results are reported from~\citet{theodoropoulos2025momentum}, who do not report the training scores, while MMFM results are from our training runs.
    The best performing method per timepoint is in \textbf{bold} and the second-best in \textit{italics}.}
    \label{tab:beijing_full}
    \resizebox{0.7\textwidth}{!}{
        \begin{tabular}{lccccc}
\toprule
Method & $t_2$ \wasstwo & $t_5$ \wasstwo & $t_8$ \wasstwo & $t_{11}$ \wasstwo & Rest \wasstwo \\
\midrule
3MSBM & $\mathit{12.87\pm1.87}$ & $79.36\pm12.65$ & $27.89\pm5.88$ & $32.26\pm4.15$ & \cellcolor{gray!8}{} \\
MMFM & $23.80\pm3.58$ & $17.51\pm3.61$ & $44.34\pm7.66$ & $69.91\pm12.68$ & $\mathbf{11.88\pm2.81}$ \\\midrule
OTP-FM (\wass) & $\mathbf{12.02\pm1.35}$ & $12.09\pm1.07$ & $22.55\pm1.45$ & $\mathbf{21.10\pm1.46}$ & $\mathit{18.20\pm0.54}$ \\
OTP-FM (\wassinf) & $23.70\pm11.97$ & $14.45\pm3.61$ & $\mathit{21.35\pm11.70}$ & $26.59\pm2.07$ & $24.50\pm4.61$ \\
OTP-FM (MMD$^2$) & $45.08\pm0.46$ & $\mathit{11.76\pm0.43}$ & $22.28\pm1.19$ & $\mathit{24.21\pm0.66}$ & $27.62\pm0.22$ \\
OTP-FM (KL) & $42.31\pm0.58$ & $\mathbf{5.21\pm0.26}$ & $\mathbf{18.63\pm1.03}$ & $26.31\pm1.57$ & $26.15\pm0.41$ \\
\bottomrule
\end{tabular}
    }
\end{table*}

\begin{figure*}[tb]
    \centering
    \includegraphics[width=\textwidth]{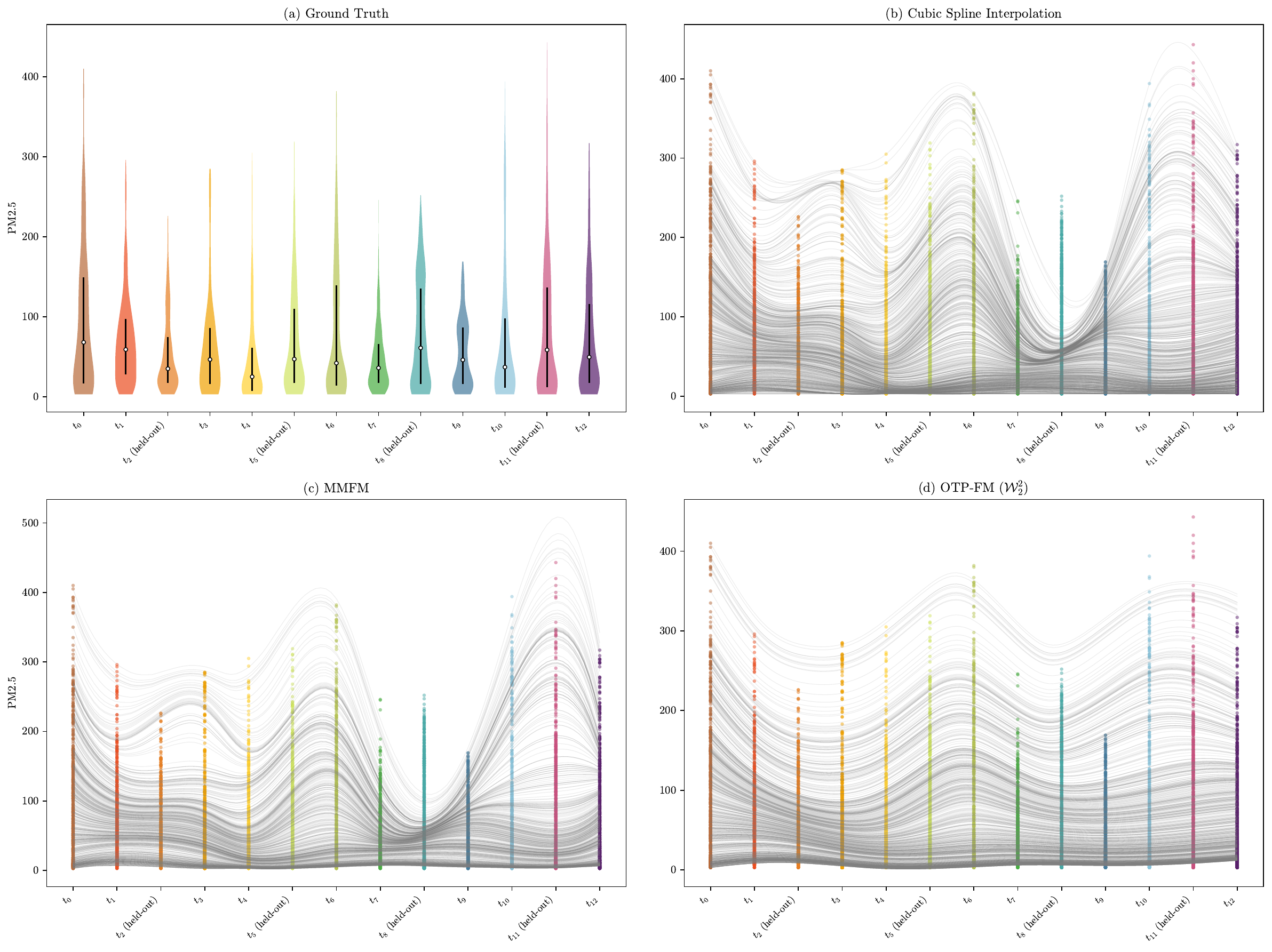}
    \caption{Visualizing the Beijing PM2.5 air quality dataset.
    (a) Ground truth marginal distributions as violin plots showing the density at each timepoint, the median, and the interquartile range.
    The marginals are heavily concentrated at low PM2.5 but with long tails.
    (b) Cubic spline interpolation through OT-coupled ground-truth samples, excluding held-out marginals; this represents MMFM's training targets.
    (c) Trajectories inferred by MMFM.
    (d) Trajectories inferred by OTP-FM ($\mathcal{W}_2^2$ potential).}
    \label{fig:beijing}
\end{figure*}

\end{document}